\theoremstyle{plain}
\newtheorem{theorem}{Theorem}[section]
\newtheorem{lemma}[theorem]{Lemma}
\newtheorem{proposition}[theorem]{Proposition}
\theoremstyle{remark}
\newcommand{\prob}{\mathbb{P}}
\newcommand{\mean}{\mathbb{E}}
\newcommand{\med}{\text{med}}
\newcommand{\Var}{\mathbb{V}}
\newcommand{\ud}{\mathop{}\!\mathrm{d}}
\newcommand{\cond}{\,|\,}
\newcommand{\e}{e} 
\DeclareMathOperator{\Normal}{\mathcal{N}}
\DeclareMathOperator{\Poi}{{Poi}}
\DeclareMathOperator{\GP}{\mathcal{GP}}
\DeclareMathOperator{\Unif}{\mathcal{U}}
\newcommand{\Bb}{\mathbf{b}}
\newcommand{\Bh}{\mathbf{h}}
\newcommand{\Bk}{\mathbf{k}}
\newcommand{\Bx}{\mathbf{x}}
\newcommand{\By}{\mathbf{y}}
\newcommand{\BA}{\mathbf{A}}
\newcommand{\BB}{\mathbf{B}}
\newcommand{\BC}{\mathbf{C}}
\newcommand{\BD}{\mathbf{D}}
\newcommand{\BH}{\mathbf{H}}
\newcommand{\BK}{\mathbf{K}}
\newcommand{\BP}{\mathbf{P}}
\newcommand{\BR}{\mathbf{R}}
\newcommand{\BS}{\mathbf{S}}
\newcommand{\BX}{\mathbf{X}}
\newcommand{\BY}{\mathbf{Y}}
\newcommand{\BW}{\mathbf{W}}
\newcommand{\BSigma}{\boldsymbol{\Sigma}}
\newcommand{\Bmu}{\boldsymbol{\mu}}
\newcommand{\Btheta}{\boldsymbol{\theta}}
\newcommand{\Bvartheta}{\boldsymbol{\vartheta}}
\newcommand{\BLambda}{\boldsymbol{\Lambda}}
\newcommand{\Bphi}{\boldsymbol{\phi}}
\renewcommand{\epsilon}{\varepsilon}
\newcommand{\Bgamma}{\boldsymbol{\gamma}}
\newcommand{\reals}{\mathbb{R}}
\newcommand*{\half}[1][2]{\frac{1}{#1}}
\newcommand{\Bzeros}{\mathbf{0}}
\newcommand{\myquad}{\quad\quad}
\newcommand{\Deltav}{\tau^2}
\newcommand{\lik}{f}
\newcommand{\eqdef}{\triangleq}
\newcommand{\T}{^{\top}}
\newcommand{\diag}{\operatorname{diag}}
\newcommand{\slmcmc}{SL-MCMC}
\newcommand{\iqr}{\textnormal{IQR}}
\newcommand{\maxv}{\textnormal{MAXV}}
\newcommand{\maxiqr}{\textnormal{MAXIQR}}
\newcommand{\eiv}{\textnormal{EIV}}
\newcommand{\miiqr}{\textnormal{IMIQR}}
\newcommand{\eiiqr}{\textnormal{EIIQR}}
\newcommand{\indic}{\mathds{1}}
\newcommand{\simiid}{\overset{\text{i.i.d.}}{\sim}}
\newcommand{\cm}{\tilde{m}}
\newcommand{\cn}{\tilde{c}}
\newcommand{\cs}{\tilde{s}}
\newcommand{\bu}{\bullet}
\newcommand{\probmeas}{\Pi}
\newcommand{\piabc}{\pi_{\text{ABC}}}
\newcommand{\tildepiapprox}{\tilde{\pi}}
\newcommand{\tilded}{\tilde{d}}
\newcommand{\piapprox}{\pi}
\newcommand{\xt}{t} 
\newcommand{\xtb}{t+b} 
\newcommand{\iter}{i} 
\def\app#1#2{%
  \mathrel{%
    \setbox0=\hbox{$#1\sim$}%
    \setbox2=\hbox{%
      \rlap{\hbox{$#1\propto$}}%
      \lower1.1\ht0\box0%
    }%
    \raise0.25\ht2\box2%
  }%
}
\newcommand{\appe}{\textnormal{Appendix}}
\DeclareRobustCommand{\bigtimes}{%
  \mathop{\vphantom{\sum}\mathpalette\@bigtimes\relax}\slimits@
}
\newcommand{\@bigtimes}[2]{\vcenter{\hbox{\make@bigtimes{#1}}}}
\newcommand{\make@bigtimes}[1]{%
  \sbox\z@{$\m@th#1\sum$}%
  \setlength{\unitlength}{\wd\z@}%
  \begin{picture}(1,1)
  \linethickness{.17ex}
  \Line(.1,.1)(.9,.9)
  \Line(.1,.9)(.9,.1)
  \end{picture}%
}
\title{Parallel Gaussian process surrogate Bayesian inference with noisy likelihood evaluations}
\author[1]{Marko J\"{a}rvenp\"{a}\"{a}}
\author[2]{Michael U. Gutmann}
\author[1]{Aki Vehtari}
\author[1]{Pekka Marttinen}
\affil[1]{Helsinki Institute for Information Technology HIIT, Department of Computer Science, Aalto University}
\affil[2]{School of Informatics, University of Edinburgh}
\begin{document}

\maketitle

\begin{abstract}
We consider Bayesian inference when only a limited number of noisy log-likelihood evaluations can be obtained. This occurs for example when complex simulator-based statistical models are fitted to data, and synthetic likelihood (SL) method is used to form the noisy log-likelihood estimates using computationally costly forward simulations. We frame the inference task as a sequential Bayesian experimental design problem, where the log-likelihood function is modelled with a hierarchical Gaussian process (GP) surrogate model, which is used to efficiently select additional log-likelihood evaluation locations. Motivated by recent progress in the related problem of batch Bayesian optimisation, we develop various batch-sequential design strategies which allow to run some of the potentially costly simulations in parallel. We analyse the properties of the resulting method theoretically and empirically. Experiments with several toy problems and simulation models suggest that our method is robust, highly parallelisable, and sample-efficient. 

\end{abstract}

\section{Introduction} \label{sec:intro}

When the analytic form of the likelihood function of a statistical model is available, standard sampling techniques such as Markov Chain Monte Carlo (MCMC, see e.g.~\citet{Robert2004}) can often be used for Bayesian inference. However, many models of interest in several areas of science, for example in computational biology and ecology, have an expensive-to-evaluate or intractable likelihood function which severely complicates inference. When the likelihood is intractable but forward simulation of the model is feasible, simulation-based inference methods (also called likelihood-free inference) such as approximate Bayesian computation (ABC) can be used. Unfortunately, such algorithms typically require a huge number of simulations making inference computationally costly. Examples of models with intractable likelihoods can be found in e.g.~\citet{Beaumont2002,Marin2012,Lintusaari2016,Marttinen2015,Jarvenpaa2018_aoas} and Section \ref{sec:real_example} of this article.

Surrogate models, also called meta-models or emulators, such as Gaussian processes \citep{Rasmussen2006} have been used extensively to calibrate deterministic computer codes, see e.g.~\cite{Kennedy2001}. 
GP surrogates have recently also been used to accelerate Bayesian inference by modelling some part of the inferential process, such as the log-likelihood function. The model allows extracting information from the simulations efficiently, and can be used e.g.~to determine where additional simulations are needed. For example, \citet{Rasmussen2003,Kandasamy2015,Sinsbeck2017,Drovandi2018,Wang2018,Acerbi2018} have developed GP-based techniques to accelerate Bayesian inference when the exact likelihood or the corresponding deterministic model is tractable but expensive. 
%
Various GP surrogate techniques have been proposed also for ABC, where one can only draw samples i.e.~pseudo-data from a statistical model but not evaluate the likelihood. These include \citet{Meeds2014,Jabot2014,Wilkinson2014,Gutmann2016,Jarvenpaa2018_acq}.

In this paper we focus on GP surrogate modelling of noisy log-likelihood evaluations. 
Earlier works on emulating the log-likelihood function have mostly assumed exact, i.e., noiseless evaluations or the noise has not been explicitly modelled. We show that noisy evaluations cause extra challenges. Also, although not the focus of this work, we remark that one often has some control over the noise level. While our approach is applicable whenever noisy, expensive log-likelihood evaluations of a statistical model of interest are available, we mainly focus on likelihood-free inference using the synthetic likelihood method \citep{Wood2010,Price2018}, where the intractable log-likelihood is approximated using repeated forward simulations at each evaluation location. 

Recently, \citet{Jarvenpaa2018_acq} developed a Bayesian decision theoretic framework for ABC inference and considered sequential strategies (also called active learning) to select the next evaluation location for an expensive simulation model. Here we extend this framework in two ways: 1) we modify it to address the problem of Bayesian inference using noisy log-likelihood evaluations, which is different from ABC, and 2) we develop batch-sequential design strategies to efficiently parallelise the estimation of the surrogate likelihood. 
In earlier related works the simulation locations have been selected either sequentially \citep{Kandasamy2015,Sinsbeck2017,Wang2018,Acerbi2018,Jarvenpaa2018_acq} or using simple heuristics \citep{Wilkinson2014,Gutmann2016}. Batch strategies are useful when a computing cluster is available and, as we show, can substantially reduce the computation time compared to the corresponding sequential strategies. We also analyse some properties of the proposed methods theoretically, and conduct an extensive empirical comparison. 

Our approach is closely related to Bayesian quadrature (BQ), see e.g.~\citet{OHagan1991,Hennig2015,Karvonen2018}. In particular, BQ methods have been used by \citet{Osborne2012,Gunter2014,Chai2018} to compute the marginal likelihood and to quantify the numerical error of this integral probabilistically. In this article we are not interested in this particular integral but in obtaining an accurate estimate of the posterior. Also, we allow noisy log-likelihood evaluations. 
%
Another related problem is Bayesian optimisation (BO), see e.g.~\citet{Brochu2010,Shahriari2015}. 
Our objective to parallelise simulations is motivated by 
recent research on batch Bayesian optimisation \citep{Ginsbourger2010,Azimi2010,Snoek2012,Contal2013,
Desautels2014,Shah2015,Wu2016,Gonzalez2016, Wilson2018}. 
However, while BO methods can be used to accelerate likelihood-free Bayesian inference \citep{Gutmann2016}, they are not specifically designed for estimating the posterior (see discussion in e.g.~\citet{Kandasamy2015,Jarvenpaa2018_acq}).
Similarly to \citet{Jarvenpaa2018_acq}, we explicitly design our algorithms from the first principles of Bayesian decision theory to acknowledge the goal of the analysis, i.e. estimation of the posterior density. 
%
Finally, we note that GPs in conjuction with Bayesian experimental designs have also been successful in estimating level and excursion sets of expensive-to-evaluate functions, see e.g.~\citet{Bect2012,Chevalier2014,Lyu2018}. 

This paper is organised as follows. Section \ref{sec:abc_and_sl} briefly reviews ABC and the SL. Sections \ref{sec:gp_model} and \ref{sec:estimators} contain the details of our GP surrogate model and posterior estimation. Batch-sequential design strategies for sample-efficient estimation of the (approximate) posterior distribution are developed in Section \ref{sec:acq} while Section \ref{sec:experiments} contains experiments. Finally, Section \ref{sec:discussion} contains discussion and concluding remarks. Proofs, implementation details and additional experiments can be found in the \appe{}.

\section{ABC and the synthetic likelihood methods}\label{sec:abc_and_sl}

Our goal is to estimate parameters $\Btheta\in\Theta$ of a simulation model given observed data $\Bx\in\mathcal{X}$. We assume $\Theta$ is a compact subset of $\reals^d$ and that the prior information about feasible values of $\Btheta$ is coded into a (continuous) prior pdf $\pi(\Btheta)$. For simplicity we consider only continuous parameters but discrete parameters can be handled similarly. If evaluating the likelihood function $\pi(\Bx\cond\Btheta)$ is feasible, the posterior distribution can be computed using Bayes' theorem $\pi(\Btheta \cond \Bx) \propto \pi(\Btheta)\pi(\Bx\cond\Btheta)$ up to a normalisation constant and hence be used as a target distribution in MCMC. 
However, when the likelihood is too costly to evaluate or unavailable, standard MCMC algorithms become infeasible. 


Even when the likelihood is intractable, simulating ``pseudo-data'' from the model, i.e., drawing samples $\Bx_{\Btheta} \sim \pi(\cdot\cond\Btheta)$, is often feasible. In this case, ABC can be used for inference, see e.g.~\citet{Marin2012,Turner2012,Lintusaari2016}. Standard ABC techniques approximate the posterior as
\begin{align}
\piabc(\Btheta \cond \Bx) \propto \pi(\Btheta)\int_{\mathcal{X}} \indic_{\Delta(\Bx,\Bx_s) \leq \epsilon} \pi(\Bx_s\cond\Btheta) \ud \Bx_s, \label{eq:abc_post}
\end{align}
where $\indic$ denotes the indicator function, $\epsilon$ is a tolerance parameter and $\Delta: \mathcal{X}^2 \rightarrow \reals_{+}$ is the discrepancy function  used to compute the similarity between the simulated data $\Bx_s$ and the observed data $\Bx$. The discrepancy is typically constructed from  low-dimensional summary statistics $S:\mathcal{X} \rightarrow \reals^p$, so that~$\Delta(\Bx,\Bx_s) = \Delta'(S(\Bx),S(\Bx_s))$, where $\Delta':\reals^p\times\reals^p \rightarrow \reals_{+}$ is, for example, the weighted Euclidean distance. 
For each proposed parameter $\Btheta$, an unbiased ABC posterior estimate can be obtained by replacing the integral in \eqref{eq:abc_post} with a Monte Carlo sum using some $N$ simulated pseudo-data sets $\Bx^{(i)}_{\Btheta} \sim \pi(\cdot\cond\Btheta)$ for $i=1,\ldots,N$.

An alternative to ABC is the synthetic likelihood method \citep{Wood2010,Price2018}. In SL the summary statistics $S(\Bx_{\Btheta})$ are assumed to have a Gaussian distribution for each parameter $\Btheta$, that is 
%
\begin{equation}
\pi(\Bx\cond\Btheta) 
\approx \pi(S(\Bx) \cond \Btheta) 
\approx \pi_{\text{SL}}(S(\Bx) \cond \Btheta) 
\eqdef \Normal(S(\Bx)\cond\Bmu_{\Btheta},\BSigma_{\Btheta}). 
%
\label{eq:SL}
\end{equation}
The first approximation 
results from replacing the full data $\Bx$ with a potentially nonsufficient summary statistics $S(\Bx)$. The second approximation is due to the possible violations of the Gaussianity of $S(\Bx)$.
The expectation $\Bmu_{\Btheta}$ and covariance matrix $\BSigma_{\Btheta}$ in \eqref{eq:SL} are unknown and are estimated for each proposed parameter $\Btheta$ using maximum likelihood (ML):
%
\begin{equation}
\hat{\Bmu}_{\Btheta} = \frac{1}{N}\sum_{i=1}^{N}S(\Bx^{(i)}_{\Btheta}), 
%
\quad \hat{\BSigma}_{\Btheta} = \frac{1}{N-1}\sum_{i=1}^{N} (S(\Bx^{(i)}_{\Btheta}) - \hat{\Bmu}_{\Btheta}) (S(\Bx^{(i)}_{\Btheta}) - \hat{\Bmu}_{\Btheta})\T, \label{eq:SL_ML}
\end{equation}
where $\Bx^{(i)}_{\Btheta} \sim \pi(\cdot\cond\Btheta)$ for $i=1,\ldots,N$. 
As investigated by \citet{Price2018}, the standard Metropolis algorithm can be combined with SL. The likelihood is then computed using \eqref{eq:SL} and the ML estimates in \eqref{eq:SL_ML} or, alternatively, using an unbiased estimate of $\Normal(S(\Bx)\cond\Bmu_{\Btheta},\BSigma_{\Btheta})$ shown in Section 2.1 of \citet{Price2018} which produces an exact pseudo-marginal MCMC if the Gaussianity assumption holds. 
See \appe{} \ref{app:add_details} for discussion on the use of different SL estimators.  

The advantage of SL over ABC is that specifying suitable ABC tuning parameters such as the tolerance and the discrepancy is avoided. While the Gaussianity of the summary statistics may not hold in practice, \citet{Price2018} have found that SL is often robust to deviations from normality. 
%
%
SL and its extensions \citep{Thomas2018,An2018,An2019,Nott2019} produce pointwise noisy log-likelihood evaluations because in practice the number of repeated simulations $N$ at each point is finite. Using (pseudo-marginal) MCMC or other sampling-based techniques for inference with these noisy targets thus requires a large number of simulations. Assuming noisy log-likelihood evaluations are available, e.g.~obtained by using SL, the goal of the following sections is to develop an inference algorithm that can minimise the number of evaluations needed.



\section{Gaussian process surrogate for the noisy log-likelihood}
\label{sec:gp_model}

We denote the log-likelihood or its approximation, such as the log-SL obtained as the logarithm of \eqref{eq:SL}, as $\lik(\Btheta) \eqdef \log \pi(\Bx\cond\Btheta)$. We assume that we have access to noisy log-likelihood evaluations at $\Btheta_i$ denoted by $y_i\in\reals$ for building the surrogate model and that the ``noise'' i.e.~the numerical or sampling error in evaluating the log-likelihood is independently Gaussian distributed. Treating the noisy log-likelihood evaluations $y_i$ as ``observations'', our measurement model is
\begin{equation}
    y_i = \lik(\Btheta_i) + \sigma_n(\Btheta_i)\epsilon_i, \quad \epsilon_i \simiid \Normal(0,1),
    \label{eq:noisemodel}
\end{equation}
where $\sigma_n: \Theta \rightarrow (0,\infty)$ is a (continuous) function of $\Btheta$ that determines the standard deviation of the observation noise and is assumed known. 
To justify our model in \eqref{eq:noisemodel}, we show empirically that log-SL is well approximated by a Gaussian distribution using six benchmark simulation models in the \appe{} \ref{subsec:logSL_normality}.

We place the following hierarchical GP prior for the log-likelihood function $\lik$:
\begin{align}\begin{split}
    \lik \cond \Bgamma \sim \GP(m_0(\Btheta),k(\Btheta,\Btheta')), \quad 
    m_0(\Btheta) = \sum_{i=1}^q \gamma_i h_i(\Btheta), \quad \Bgamma \sim \Normal(\Bb,\BB), \label{eq:gp_prior}
\end{split}\end{align}
where $k:\Theta^2\rightarrow\reals$ is a covariance function and $h_i:\Theta\rightarrow\reals$ are fixed basis functions (both assumed continuous). 
%
The nuisance parameters $\Bgamma$ in \eqref{eq:gp_prior} are marginalised, see e.g.~\citet{OHagan1978,Rasmussen2006}, 
to obtain the following equivalent GP prior
\begin{equation}
    \lik \sim \GP(\Bh(\Btheta)\T\Bb,k(\Btheta,\Btheta') + \Bh(\Btheta)\T\BB \Bh(\Btheta')),
\end{equation}
where $\Bh(\Btheta)\in\reals^q$ is a column vector consisting of the basis functions $h_i$ evaluated at $\Btheta$. We use basis functions of the form $1, \theta_i, \theta_i^2$. 
A similar GP prior has been considered in \citet{Wilkinson2014,Gutmann2016,Drovandi2018}, however, different from those articles, we take a fully Bayesian approach and marginalise $\Bgamma$ as in \citet{Riihimaki2014}. 
%
Since little initial information is typically available on the magnitude and shape of the log-likelihood, we use relatively uninformative hyperpriors so that $\Bb=\Bzeros$ and $B_{ij}=30^2\indic_{i=j}$. We assume that the log-likelihood function is smooth, and adopt the squared exponential covariance function $k(\Btheta,\Btheta') = \sigma_{\lik}^2 \exp(-\sum_{i=1}^d(\theta_i-\theta_i')^2/(2l_i^2))$ although other choices, such as the Mat\'{e}rn covariance function, are also possible. We denote the $d+1$ covariance function hyperparameters as $\Bphi = (\sigma_{\lik}^2,l_1,\ldots,l_d)$. For now, we assume $\Bphi$ is known and omit it from our notation for simplicity. 

Given observations $D_{\xt}=\{(y_i,\Btheta_i)\}_{i=1}^t$, which we call training data, our knowledge of the log-likelihood function is $\lik \cond D_{\xt} \sim \GP(m_{\xt}(\Btheta),c_{\xt}(\Btheta, \Btheta'))$, where 
\begin{align}
    m_{\xt}(\Btheta) &\eqdef k_{\xt}(\Btheta) \BK^{-1}_{\xt} \By_{\xt} 
    + \BR_{\xt}\T(\Btheta) \bar{\Bgamma}_{\xt}, \label{eq:gp_mean} \\ 
    %
    \begin{split}
    c_{\xt}(\Btheta,\Btheta') 
    &\eqdef k(\Btheta,\Btheta') 
    \!-\! k_{\xt}(\Btheta) \BK^{-1}_{\xt} k\T_{\xt}(\Btheta') 
    \!+\!\BR_{\xt}\T(\Btheta)[\BB^{-1}\!+\!\BH_{\xt}\BK^{-1}_{\xt}\BH_{\xt}\T]^{-1} \BR_{\xt}(\Btheta'), 
    \end{split} \label{eq:gp_cov}
\end{align}
with $[\BK_{\xt}]_{ij} \eqdef k(\Btheta_{i},\Btheta_{j}) + \indic_{i=j}\sigma_n^2(\Btheta_i)$ for $i,j\in\{1,\ldots,t\}$, $k_{\xt}(\Btheta) \eqdef (k(\Btheta,\Btheta_1),\ldots,k(\Btheta,\Btheta_t))\T$, 
\begin{align}
    \bar{\Bgamma}_{\xt} \eqdef [\BB^{-1} + \BH_{\xt}\BK^{-1}_{\xt}\BH_{\xt}\T]^{-1}(\BH_{\xt}\BK^{-1}_{\xt}\By_{\xt} + \BB^{-1}\Bb), 
    %
\end{align}
and $\BR_{\xt}(\Btheta) \eqdef \BH(\Btheta) - \BH_{\xt}\BK^{-1}_{\xt}k_{\xt}\T(\Btheta)$. 
Above $\BH_{\xt}$ is the $q\times t$ matrix whose columns consist of basis function values evaluated at training points $\Btheta_{1:t}=[\Btheta_1,\ldots,\Btheta_t]$ which is itself a $d\times t$ matrix, and $\BH(\Btheta)$ is the corresponding $q\times 1$ vector at test point $\Btheta$. 
From now on, we denote the GP variance function as $s_{\xt}^2(\Btheta) \eqdef c_{\xt}(\Btheta,\Btheta)$ and the probability law of $\lik$ given $D_{\xt}$ as $\probmeas_{D_{\xt}}^{\lik}$, that is, $\probmeas_{D_{\xt}}^{\lik} \eqdef \GP(m_{\xt}(\Btheta),c_{\xt}(\Btheta, \Btheta'))$.

\section{Estimators of the posterior from the GP surrogate} \label{sec:estimators}

Using the GP surrogate model for the noisy log-likelihood, we here derive estimators for the posterior which can be e.g.~plugged-in to a MCMC algorithm. Resulting sampling algorithms do not require further simulator runs (unlike e.g.~\slmcmc{}) producing potentially huge computational savings. Figure~\ref{fig:loglik_gp} demonstrates our approach. 
We want to use our knowledge of the log-likelihood function represented by $\probmeas_{D_{\xt}}^{\lik}$ to determine the optimal point estimate of the probability density function (pdf) of the posterior\footnote{While in this article we are mainly concerned with point estimators of the posterior pdf, we can also quantify its (epistemic) uncertainty similarly to \emph{probabilistic numerics} literature (see e.g.~\citet{Hennig2015,Cockayne2017,Briol2019}) as illustrated in Figure \ref{fig:loglik_gp}b. Such uncertainty estimates are also used to intelligently select the next simulation locations in Section \ref{sec:acq}.}. 
The uncertainty of the log-likelihood $\lik$ can be propagated to the posterior distribution of the simulation model which consequently becomes a random quantity denoted as $\piapprox_{\lik}$:
\begin{equation}
\piapprox_{\lik}(\Btheta) 
\eqdef \frac{\pi(\Btheta)\exp(\lik(\Btheta))}{\int_{\Theta}\pi(\Btheta')\exp(\lik(\Btheta'))\ud\Btheta'}. 
\label{eq:quantity_of_interest}
\end{equation}
The expectation of the posterior pdf $\piapprox_{\lik}$ at each parameter $\Btheta$ can be formally written as
\begin{align}
\mean_{\lik\cond D_{\xt}} (\piapprox_{\lik}(\Btheta))
=
%
\int\!\frac{\pi(\Btheta)\exp(\lik(\Btheta))}{\int_{\Theta}\pi(\Btheta')\exp(\lik(\Btheta'))\ud\Btheta'} \probmeas_{D_{\xt}}^{\lik}(\ud\lik)
\label{eq:e_norm_post}
\end{align}
and the variance can be obtained similarly (assuming these quantities exist). In principle, one could sample posterior pdfs by first drawing $\lik^{(i)}\sim\probmeas_{D_{\xt}}^{\lik}$ (a continuous function $\Theta\rightarrow\reals$), then computing $\pi(\Btheta)\exp(\lik^{(i)}(\Btheta))$, and finally normalising.
However, in practice this would require discretisation of the $\Theta$-space and involves computational challenges. For this reason and similarly to \citet{Sinsbeck2017,Jarvenpaa2018_acq}, we instead take our quantity of interest to be the unnormalised posterior 
\begin{equation}
\tildepiapprox_{\lik}(\Btheta) \eqdef \pi(\Btheta)\exp(\lik(\Btheta)), \label{eq:unn_post}
\end{equation}
which follows log-Gaussian process that allows analytical computations.

\begin{figure*}[ht]
\centering
\includegraphics[width=0.8\textwidth]{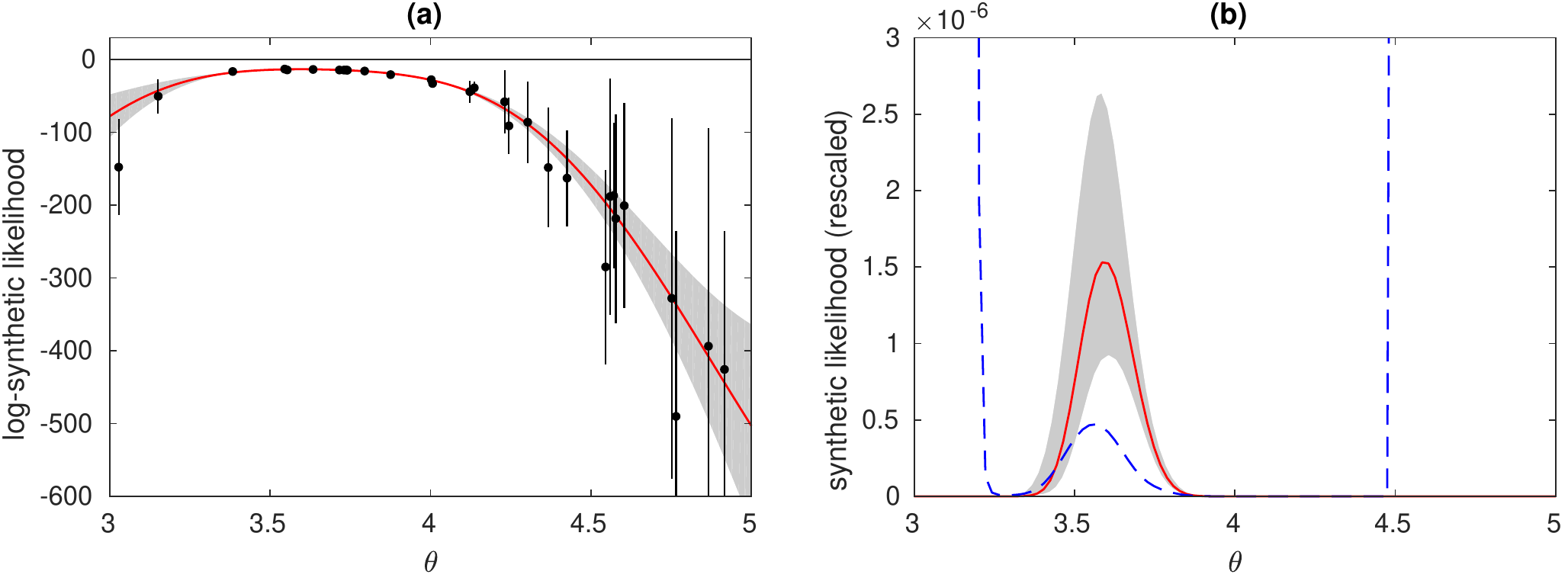}
\caption{(a) GP surrogate model for the log-SL of the Ricker model of Section \ref{sec:ricker} when only the first parameter, $\theta=\log(r)$, is varied. The black dots show the noisy log-SL evaluations and the black lines their approximate 95\% confidence intervals, the grey area the 95\% credible interval, and the red line the GP mean function. (b) Uncertainty of the SL. The grey area shows the 95\% credible interval of the SL and the red line is the median estimate, obtained from \eqref{eq:log_med_q}. The dashed blue line shows the standard deviation of SL computed as the square root of \eqref{eq:log_mean_var}.}
\label{fig:loglik_gp}
\end{figure*}

Next we derive an optimal estimator for the unnormalised posterior $\tildepiapprox$ in \eqref{eq:unn_post} using Bayesian decision-theory. We proceed here similarly to \citet{Sinsbeck2017} and consider the integrated quadratic loss function $l_2(\tildepiapprox_1,\tildepiapprox_2) \eqdef \int_{\Theta} (\tildepiapprox_1(\Btheta) - \tildepiapprox_2(\Btheta))^2 \ud\Btheta$ between two (unnormalised) posterior densities $\tildepiapprox_1$ and $\tildepiapprox_2$. We assume $\tildepiapprox_1$ and $\tildepiapprox_2$ are square-integrable functions in $\Theta$, i.e.~$\tildepiapprox_1,\tildepiapprox_2\in L^2(\Theta)$. The optimal Bayes estimator, denoted by $\hat{\tildepiapprox}\in\mathbb{D}$, is the minimiser of the expected loss, where $\mathbb{D}=L^2(\Theta)$ denotes the set of candidate estimators. In detail,
\begin{align}\begin{split}
\hat{\tildepiapprox} &= \arg\min_{\tilded\in\mathbb{D}} \mean_{\lik\cond D_{\xt}} l_2(\tildepiapprox_{\lik},\tilded)
= \arg\min_{\tilded\in\mathbb{D}} \mean_{\lik\cond D_{\xt}} \int_{\Theta} (\tildepiapprox_{\lik}(\Btheta) - \tilded(\Btheta))^2 \ud\Btheta \\
&= \arg\min_{\tilded\in\mathbb{D}} \int_{\Theta} \mean_{\lik\cond D_{\xt}}(\tildepiapprox_{\lik}(\Btheta) - \tilded(\Btheta))^2 \ud\Btheta,
\label{eq:bayes_risk_ltwo}
\end{split}\end{align}
where Tonelli theorem is used to change the order of expectation and integration and where $\tilded\in\mathbb{D}=L^2(\Theta)$ is a candidate estimator of $\tildepiapprox$.
Equation~\ref{eq:bayes_risk_ltwo} shows that the expected loss is minimised when the integrand on the second row is minimised independently for (almost) each $\Btheta\in\Theta$. 
It follows from the basic results of Bayesian decision theory (see e.g.~\citet{Robert2007}) that the minimum is obtained when $\tilded(\Btheta)=\mean_{\lik\cond D_{\xt}}(\tildepiapprox_{\lik}(\Btheta))$, i.e., the optimal estimator is the posterior expectation. The minimum value of \eqref{eq:bayes_risk_ltwo}, called Bayes risk, is the integrated variance $\int_{\Theta} \Var_{\lik\cond D_{\xt}}(\tildepiapprox_{\lik}(\Btheta)) \ud\Btheta$. The posterior expectation and variance can be computed from the log-Normal distribution as 
%
%
\begin{align}
	\mean_{\lik\!\cond\!D_{\xt}}(\tildepiapprox_{\lik}(\Btheta))\!=\!\pi(\Btheta)\e^{m_{\xt}(\Btheta)\!+\!\half s_{\xt}^2(\Btheta)}, 
%
\quad\! \Var_{\lik\!\cond\!D_{\xt}}(\tildepiapprox_{\lik}(\Btheta))\!=\!\pi^2(\Btheta)\e^{2m_{\xt}(\Btheta)\!+\! s_{\xt}^2(\Btheta)}\!\left(\!\e^{s_{\xt}^2(\Btheta)}\!-\!1\!\right)\!. \label{eq:log_mean_var}
\end{align}
%

If we instead use $L^1$ loss $l_1(\tildepiapprox_1,\tildepiapprox_2) \eqdef \int_{\Theta} |\tildepiapprox_1(\Btheta) - \tildepiapprox_2(\Btheta)| \ud\Btheta$ where $\tildepiapprox_1,\tildepiapprox_1\in L^1(\Theta)$, we can similarly show that the optimal point estimator is the marginal median. The median and the $\alpha-$quantile $q^{\alpha}$ with $\alpha\in(0,1)$ can be computed as
%
%
\begin{align}
\med_{\lik\cond D_{\xt}}(\tildepiapprox_{\lik}(\Btheta)) = \pi(\Btheta)\e^{m_{\xt}(\Btheta)}, \quad 
%
	q_{\lik\cond D_{\xt}}^{\alpha}(\tildepiapprox_{\lik}(\Btheta)) = \pi(\Btheta)\e^{m_{\xt}(\Btheta) + \Phi^{-1}(\alpha)s_{\xt}(\Btheta)}, \label{eq:log_med_q}
\end{align}
where $\Phi^{-1}$ is the quantile function of the standard normal distribution. 
As we show in the \appe{} \ref{app:proofs}, the Bayes risk corresponding to the $L^1$ loss is 
\begin{align}
\min_{\tilded\in\mathbb{D}} \mean_{\lik\cond D_{\xt}} l_1(\tildepiapprox_{\lik},\tilded) = 
    \int_{\Theta} \pi(\Btheta) \exp(m_{\xt}(\Btheta)+s^2_{\xt}(\Btheta)/2)(2\Phi(s_{\xt}(\Btheta))-1) \ud\Btheta.
    \label{eq:loneloss}
\end{align}

When MCMC is used with the point estimator of the unnormalised posterior in either \eqref{eq:log_mean_var} or \eqref{eq:log_med_q}, we are in fact targeting the following mean and median based estimators of the (normalised) posterior 
\begin{align}
\pi^{\text{mean}}_{\xt}(\Btheta) 
\eqdef \frac{\pi(\Btheta)\e^{m_{\xt}(\Btheta) + \half s_{\xt}^2(\Btheta)}}{\int_{\Theta} \pi(\Btheta')\e^{m_{\xt}(\Btheta') + \half s_{\xt}^2(\Btheta')} \ud\Btheta'}, 
%
\quad \pi^{\text{med}}_{\xt}(\Btheta) \eqdef \frac{\pi(\Btheta)\e^{m_{\xt}(\Btheta)}}{\int_{\Theta} \pi(\Btheta')\e^{m_{\xt}(\Btheta')} \ud\Btheta'}. \label{eq:point_estim_norm}
\end{align}
These are obtained by simply normalising the Bayes optimal estimators of the unnormalised posterior (and, as a consequence, a guarantee of optimality for normalised posterior does not follow). 
Similar estimators were also considered by \citet{Stuart2018}. Both are clearly valid density functions, and tractable, unlike \eqref{eq:e_norm_post}. The latter, i.e., the marginal median based estimate, is equal to \eqref{eq:quantity_of_interest} if we replace the unknown log-likelihood function $\lik(\Btheta)$ with a GP mean function $m_{\xt}(\Btheta)$ and neglect GP uncertainty. On the other hand, the former, i.e., the marginal mean estimate, takes into account the GP uncertainty through the variance function $s_{\xt}^2(\Btheta)$. These two point estimates become the same if the GP variance is negligible. 
%

\section{Parallel designs of simulations} \label{sec:acq}

In the previous section we showed how to quantify the uncertainty of the GP surrogate-based unnormalised posterior density and we derived computable and (in a certain sense) optimal point estimates of it. Next we develop Bayesian experimental design strategies to select further locations to evaluate the log-likelihood, so that the uncertainty in the unnormalised posterior decreases as fast as possible. 
%
We focus on batch strategies and denote the batch size as $b \in \{1,2,\ldots\}$.
%
Before moving on, we introduce some terminology. The next batch of $b$ evaluation locations is obtained as the solution to an optimisation problem. We call the objective function of this optimisation problem a \textit{design criterion} and the resulting batch of evaluation locations as \textit{design points} or just \textit{design}. The complete procedure of selecting the design points is called a \textit{batch-sequential} (or, when $b=1$, just \textit{sequential}) \textit{strategy}\footnote{The design criterion is often called an acquisition function and the resulting strategy sometimes an acquisition rule in the BO literature.}. 

In this paper we focus on syncronous parallelisation where a batch of $b$ design points is constructed at each iteration and the corresponding $b$ simulations are simultaneously submitted to the workers. However, the ``greedy'' design strategies developed in Sections \ref{sec:greedy_acq} and \ref{sec:other_acq} can also be used for asyncronous parallelisation, where a new location is immediately chosen and submitted for processing, whenever any of the running simulations completes, instead of waiting all the other $b-1$ simulations to finish. 

\subsection{Analytical expressions for the design criteria} \label{sec:dec}

We first derive some general results needed for efficient evaluation of the design criteria. These can be useful also for developing batch designs for other related GP-based problems such as BQ and BO. 
%
Given $D_{\xt} = \{(y_i,\Btheta_i)\}_{i=1}^t$ it is useful to know how additional $b$ candidate evaluations at points $\Btheta^* = [\Btheta^*_1,\ldots,\Btheta^*_b]$ would affect our knowledge about the log-likelihood $\lik$ and the unnormalised posterior $\tildepiapprox$. 
%
The following Lemma is central to our analysis. It shows how the GP mean and variance functions are affected by supplementing the training data $D_{\xt}$ with a new batch of evaluations $D^*=\{(y^*_i,\Btheta^*_i)\}_{i=1}^b$ when the unknown $\By^*$ is assumed to be distributed according to the posterior predictive distribution of the GP given $D_{\xt}$. The Lemma is a generalisation of a similar result by \citet{Jarvenpaa2018_acq,Lyu2018}. 
%
\begin{lemma} \label{lemma:1}
Consider the mean and variance functions of the GP model in Section \ref{sec:gp_model} for a fixed $\Btheta$, given the training data $D_{\xt} \cup D^*$ and when treated as functions of $\By^*$. Assume $\By^*$ follows the posterior predictive distribution, that is $\By^*\cond\Btheta^*,D_{\xt} \sim \Normal(m_{\xt}(\Btheta^*),c_{\xt}(\Btheta^*,\Btheta^*) + \diag(\sigma_n^2(\Btheta_1^*),\ldots,\sigma_n^2(\Btheta_b^*)))$. Then, 
\begin{align}
m_{\xtb}(\Btheta;\Btheta^*) \cond \Btheta^*,D_{\xt} &\sim \Normal(m_{\xt}(\Btheta), \Deltav_{\xt}(\Btheta;\Btheta^*)), \label{eq:gp_mean_lookahead} \\
%
s^2_{\xtb}(\Btheta;\Btheta^*) \cond \Btheta^*,D_{\xt} &\sim \delta(s^2_{\xt}(\Btheta) - \Deltav_{\xt}(\Btheta;\Btheta^*) - s^2_{\xtb}(\Btheta;\Btheta^*)), \label{eq:gp_var_lookahead} 
\end{align}
where $\delta(\cdot)$ is the Dirac measure and
\begin{equation}
\Deltav_{\xt}(\Btheta;\Btheta^*) 
= c_{\xt}(\Btheta,\Btheta^*)[c_{\xt}(\Btheta^*,\Btheta^*) + \diag(\sigma_n^2(\Btheta_1^*),\ldots,\sigma_n^2(\Btheta_b^*))]^{-1}c_{\xt}(\Btheta^*,\Btheta). \label{eq:gp_dvar}
\end{equation}
\end{lemma}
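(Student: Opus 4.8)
The plan is to reduce everything to one application of the standard GP regression update together with the consistency of Gaussian conditioning. After marginalising $\Bgamma$ (the equivalent GP prior displayed in the text following \eqref{eq:gp_prior}) the vector $(\lik(\Btheta),\By_{\xt},\By^*)$ is jointly Gaussian, so I would carry out the conditioning on $D_{\xt}\cup D^*$ in two stages: first condition on $D_{\xt}$, which turns the prior into the GP with mean $m_{\xt}$ and covariance $c_{\xt}$ of \eqref{eq:gp_mean}--\eqref{eq:gp_cov} and simultaneously gives $\By^*\cond\Btheta^*,D_{\xt}\sim\Normal(m_{\xt}(\Btheta^*),\,c_{\xt}(\Btheta^*,\Btheta^*)+\BLambda^*)$, with $\BLambda^*\eqdef\diag(\sigma_n^2(\Btheta_1^*),\ldots,\sigma_n^2(\Btheta_b^*))$ --- which is exactly the posterior predictive distribution assumed in the statement; then condition this updated GP on the batch $D^*$, using the kernel $c_{\xt}$ and observation-noise covariance $\BLambda^*$. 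The usual noisy-observation formulas give, for fixed $\Btheta$,
\begin{align}
m_{\xtb}(\Btheta;\Btheta^*) &= m_{\xt}(\Btheta) + c_{\xt}(\Btheta,\Btheta^*)\bigl[c_{\xt}(\Btheta^*,\Btheta^*)+\BLambda^*\bigr]^{-1}\bigl(\By^*-m_{\xt}(\Btheta^*)\bigr), \label{pf:mupd}\\
s^2_{\xtb}(\Btheta;\Btheta^*) &= s^2_{\xt}(\Btheta) - c_{\xt}(\Btheta,\Btheta^*)\bigl[c_{\xt}(\Btheta^*,\Btheta^*)+\BLambda^*\bigr]^{-1}c_{\xt}(\Btheta^*,\Btheta). \label{pf:supd}
\end{align}
(Alternatively one could verify these directly from \eqref{eq:gp_mean}--\eqref{eq:gp_cov} written for $t+b$ points by block-inverting $\BK_{\xtb}$, but the two-stage route avoids that algebra.)

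From here I would read off both conclusions. The right-hand side of \eqref{pf:supd} equals $s^2_{\xt}(\Btheta)-\Deltav_{\xt}(\Btheta;\Btheta^*)$ by the definition \eqref{eq:gp_dvar} and contains no $\By^*$; hence, as a function of $\By^*$, $s^2_{\xtb}(\Btheta;\Btheta^*)$ is the constant $s^2_{\xt}(\Btheta)-\Deltav_{\xt}(\Btheta;\Btheta^*)$, its conditional law is the point mass stated in \eqref{eq:gp_var_lookahead}, and rearranging gives the decomposition $s^2_{\xt}(\Btheta)=\Deltav_{\xt}(\Btheta;\Btheta^*)+s^2_{\xtb}(\Btheta;\Btheta^*)$ --- i.e.\ the law-of-total-variance split of $\Var_{\lik\cond D_{\xt}}(\lik(\Btheta))$ into the part explained by observing $\By^*$ and the residual. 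For \eqref{eq:gp_mean_lookahead}, \eqref{pf:mupd} exhibits $m_{\xtb}(\Btheta;\Btheta^*)$ as an affine function of the Gaussian vector $\By^*$, hence Gaussian, and I would compute its two moments: its mean is $m_{\xt}(\Btheta)+c_{\xt}(\Btheta,\Btheta^*)[c_{\xt}(\Btheta^*,\Btheta^*)+\BLambda^*]^{-1}(\mean(\By^*)-m_{\xt}(\Btheta^*))=m_{\xt}(\Btheta)$ since $\mean(\By^*)=m_{\xt}(\Btheta^*)$, and its variance is
\begin{equation}
c_{\xt}(\Btheta,\Btheta^*)\bigl[c_{\xt}(\Btheta^*,\Btheta^*)+\BLambda^*\bigr]^{-1}\cov(\By^*)\bigl[c_{\xt}(\Btheta^*,\Btheta^*)+\BLambda^*\bigr]^{-1}c_{\xt}(\Btheta^*,\Btheta),
\end{equation}
which, because $\cov(\By^*)=c_{\xt}(\Btheta^*,\Btheta^*)+\BLambda^*$, collapses to $c_{\xt}(\Btheta,\Btheta^*)[c_{\xt}(\Btheta^*,\Btheta^*)+\BLambda^*]^{-1}c_{\xt}(\Btheta^*,\Btheta)=\Deltav_{\xt}(\Btheta;\Btheta^*)$ by \eqref{eq:gp_dvar}, giving \eqref{eq:gp_mean_lookahead}.

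The main obstacle I expect is purely bookkeeping: justifying the two-stage conditioning cleanly (it is the standard fact that jointly Gaussian quantities may be conditioned on $D_{\xt}$ and then on $D^*$, but one should check that the hierarchical prior on $\Bgamma$ has genuinely been absorbed into an effective GP via \eqref{eq:gp_prior}), and keeping the matrix algebra in the posterior-predictive substitution straight. No analytic difficulty is anticipated beyond that.
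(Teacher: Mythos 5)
Your proposal is correct, and it reaches the same two update formulas that the paper's proof establishes (compare its equation \eqref{eq:m01eq} and the variance identity that follows it), after which both arguments read off \eqref{eq:gp_mean_lookahead} as the law of an affine function of the Gaussian $\By^*$ and \eqref{eq:gp_var_lookahead} from the fact that the posterior variance does not depend on $\By^*$. The difference lies in how those update formulas are obtained. The paper verifies them directly: it writes the posterior for the augmented data $D_{\xt}\cup D^*$ using \eqref{eq:gp_mean}--\eqref{eq:gp_cov}, block-inverts the enlarged covariance matrix via its Lemma \ref{lemma:0}, and explicitly tracks the extra terms $\BR\T\bar{\Bgamma}$ and $\BR\T\BW^{-1}\BR$ arising from the marginalised hierarchical mean, showing they recombine into the clean sequential-update form. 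You instead invoke the consistency of Gaussian conditioning (condition on $D_{\xt}$ first, then on $D^*$ using the posterior kernel $c_{\xt}$ and noise $\BLambda^*$), which is legitimate here precisely because the marginalised prior is itself an exact GP and \eqref{eq:gp_mean}--\eqref{eq:gp_cov} are its exact posterior moments, so the two-stage and one-stage conditionals must coincide; you correctly flag this as the point needing justification. Your route buys a proof of a few lines in place of roughly a page of matrix identities; the paper's route buys a self-contained verification that the non-zero-mean correction terms really do compose under sequential conditioning, without appealing to the abstract tower property. Both are sound, and your observation that the variance identity is the law-of-total-variance decomposition of $\Var_{\lik\cond D_{\xt}}(\lik(\Btheta))$ is a nice piece of intuition the paper does not state.
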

%

In the Lemma, $m_{\xtb}(\Btheta;\Btheta^*)$ is the GP mean function at iteration $t+b$ whose dependence on $\Btheta^*$ is shown explicitly. 
Importantly, the above Lemma shows how the GP variance decreases from $s^2_{\xt}(\Btheta)$ to $s^2_{\xtb}(\Btheta;\Btheta^*)$ when the extra $b$ evaluations at $\Btheta^*$ are included, and the reduction $\Deltav_{\xt}(\Btheta;\Btheta^*)$ is deterministic. 
We see, for example, that 
if $c_{\xt}(\Btheta_i^*,\Btheta_j^*)=0$ for all $i,j=1,\ldots,b, i\neq j$ which might hold approximately, e.g., if the evaluation points $\Btheta^*_i$ are located far from each other, then 
\begin{equation}
\Deltav_{\xt}(\Btheta;\Btheta^*) 
= \sum_{i=1}^b \Deltav_{\xt}(\Btheta;\Btheta^*_i)
= \sum_{i=1}^b \frac{c_{\xt}^2(\Btheta,\Btheta^*_i)}{s^2_{\xt}(\Btheta^*_i) + \sigma_n^2(\Btheta^*_i)}. \label{eq:gp_factorization}
\end{equation}
This shows that the reduction of GP variance at $\Btheta$, $\Deltav_{\xt}(\Btheta;\Btheta^*)$, factorises over the new evaluation points $\Btheta^*_i$ in $\Btheta^*$. Intuitively, if the test point $\Btheta$ is strongly correlated with some evaluation point $\Btheta^*_i$, including the evaluation at $\Btheta^*_i$ will result in a large reduction of variance at the test point. Furthermore, the larger the noise variance $\sigma_n^2(\Btheta^*_i)$ at the evaluation point $\Btheta^*_i$ is, the less the GP variance will decrease.

It clearly holds that $0 \leq \Deltav_{\xt}(\Btheta;\Btheta^*) \leq s^2_{\xt}(\Btheta)$. Items (i-ii) of the following Lemma summarise some additional properties of the variance reduction function in \eqref{eq:gp_dvar} and (iii-iv) show two further useful identities needed later. Item (i) shows the (rather obvious) result that the order of evaluation points in $\Btheta^*$ does not change $\Deltav_{\xt}(\Btheta;\Btheta^*)$ and in the following we often identify the $d\times b$ matrix $\Btheta^*$ with a multiset whose elements are the columns of $\Btheta^*$ although this leads to some abuse of notation. 
%
\begin{lemma} \label{lemma:2}
Let $\Btheta^*\in\reals^{d\times b}$ and let $\Btheta\in\reals^d$ be any test point. The function $\Btheta^*\mapsto\Deltav_{\xt}(\Btheta;\Btheta^*)$ in \eqref{eq:gp_dvar} for any fixed $\Btheta$ has the following properties. 
\vspace{-1.2em}
\begin{enumerate}[label=(\roman*)]
\itemsep-0.25em 
\item The function value is invariant to the permutation of the evaluation locations i.e.~the columns of $\Btheta^*$.
%
%
\item Let $\Btheta_A^* \subseteq \Btheta_B^*$. Then $\Deltav_{\xt}(\Btheta;\Btheta_A^*) \leq \Deltav_{\xt}(\Btheta;\Btheta_B^*)$, i.e., including new evaluations never increases variance.
\item Let $\Btheta^* = [\Btheta_1^*,\Btheta_2^*]$ so that $b=2$ and denote the predictive variance at $\Btheta_j^*$ by
$\bar{s}^2_{\xt}(\Btheta_j^*) \eqdef s^2_{\xt}(\Btheta_j^*)+\sigma^2_n(\Btheta_j^*)$ for $j\in\{1,2\}$. Then 
\begin{align}
\Deltav_{\xt}(\Btheta;\Btheta^*) 
&= \Deltav_{\xt}(\Btheta;\Btheta_1^*) + \Deltav_{\xt}(\Btheta;\Btheta_2^*) + r_{\xt}(\Btheta;\Btheta^*_1,\Btheta^*_2), \\
%
\begin{split}
r_{\xt}(\Btheta;\Btheta^*_1,\Btheta^*_2) &\eqdef \frac{c^2_{\xt}(\Btheta_1^*,\Btheta_2^*)c^2_{\xt}(\Btheta,\Btheta_1^*)\bar{s}^2_{\xt}(\Btheta_2^*) + c^2_{\xt}(\Btheta_1^*,\Btheta_2^*)c^2_{\xt}(\Btheta,\Btheta_2^*)\bar{s}^2_{\xt}(\Btheta_1^*)} 
%
{\bar{s}^4_{\xt}(\Btheta_1^*)\bar{s}^4_{\xt}(\Btheta_2^*) - \bar{s}^2_{\xt}(\Btheta_1^*)\bar{s}^2_{\xt}(\Btheta_2^*)c^2_{\xt}(\Btheta_1^*,\Btheta_2^*)}   \\
%
&\myquad- 2\frac{c_{\xt}(\Btheta,\Btheta_1^*)c_{\xt}(\Btheta,\Btheta_2^*)c_{\xt}(\Btheta_1^*,\Btheta_2^*)}
{\bar{s}^2_{\xt}(\Btheta_1^*)\bar{s}^2_{\xt}(\Btheta_2^*) - c^2_{\xt}(\Btheta_1^*,\Btheta_2^*)}. 
\end{split} 
\label{eq:for_submodularity}
\end{align}
\item Let $\Btheta^* = [\Btheta_A^*,\Btheta_b^*]$ where $\Btheta_A^*\in\reals^{d\times(b-1)}$ and $\Btheta_b^*\in\reals^{d}$, and denote $\bar{\BS}_A = c_{\xt}(\Btheta_A^*) + \diag(\sigma_n^2(\Btheta_{1}^*),\ldots,\sigma_n^2(\Btheta_{b-1}^*))$. Then
\begin{equation}\begin{aligned}
\Deltav_{\xt}(\Btheta;\Btheta^*) 
&= \Deltav_{\xt}(\Btheta;\Btheta_A^*) 
%
+\frac{(c_{\xt}(\Btheta,\Btheta_b^*) - c_{\xt}(\Btheta,\Btheta_A^*)\bar{\BS}_A^{-1}c_{\xt}(\Btheta_A^*,\Btheta_b^*))^2}
%
{s^2_{\xt}(\Btheta_b^*) + \sigma_n^2(\Btheta_b^*) - c_{\xt}(\Btheta_b^*,\Btheta_A^*)\bar{\BS}_A^{-1}c_{\xt}(\Btheta_A^*,\Btheta_b^*)}. \label{eq:batch_shortcut}
\end{aligned}\end{equation}
\end{enumerate}
\end{lemma}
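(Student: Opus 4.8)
The whole lemma is really a statement about the quadratic form
$\Deltav_{\xt}(\Btheta;\Btheta^*) = c_{\xt}(\Btheta,\Btheta^*)\bar{\BS}^{-1}c_{\xt}(\Btheta^*,\Btheta)$, where $\bar{\BS} \eqdef c_{\xt}(\Btheta^*,\Btheta^*)+\diag(\sigma_n^2(\Btheta_1^*),\ldots,\sigma_n^2(\Btheta_b^*))$ is the predictive covariance of $\By^*$ from Lemma~\ref{lemma:1}. I would first record that $\bar{\BS}$ is symmetric positive definite, being the sum of the posterior Gram matrix $c_{\xt}(\Btheta^*,\Btheta^*)\succeq 0$ and a positive diagonal matrix, and that $c_{\xt}$ is symmetric ($c_{\xt}(\Btheta,\Btheta')=c_{\xt}(\Btheta',\Btheta)$). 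Part (i) is then immediate: permuting the columns of $\Btheta^*$ by a permutation matrix $\BP$ sends $\bar{\BS}\mapsto\BP\bar{\BS}\BP\T$ and $c_{\xt}(\Btheta^*,\Btheta)\mapsto\BP c_{\xt}(\Btheta^*,\Btheta)$, and since $\BP\T\BP=\Id$ the quadratic form is unchanged; this justifies identifying $\Btheta^*$ with the multiset of its columns.

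I would prove (iv) next, since (ii) and (iii) both follow from it. Partition $\bar{\BS}$ for $\Btheta^*=[\Btheta_A^*,\Btheta_b^*]$ into the $(b-1)\times(b-1)$ block $\bar{\BS}_A$, the off-diagonal vector $c_{\xt}(\Btheta_A^*,\Btheta_b^*)$, and the scalar $c_{\xt}(\Btheta_b^*,\Btheta_b^*)+\sigma_n^2(\Btheta_b^*)=s_{\xt}^2(\Btheta_b^*)+\sigma_n^2(\Btheta_b^*)$, and partition $c_{\xt}(\Btheta^*,\Btheta)$ accordingly into $c_{\xt}(\Btheta_A^*,\Btheta)$ and $c_{\xt}(\Btheta_b^*,\Btheta)$. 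Substituting the standard formula for the inverse of a symmetric block-partitioned matrix into $c_{\xt}(\Btheta,\Btheta^*)\bar{\BS}^{-1}c_{\xt}(\Btheta^*,\Btheta)$ and collecting terms, the off-diagonal and bottom-right contributions combine into a single square, giving $\Deltav_{\xt}(\Btheta;\Btheta_A^*)=c_{\xt}(\Btheta,\Btheta_A^*)\bar{\BS}_A^{-1}c_{\xt}(\Btheta_A^*,\Btheta)$ plus $(c_{\xt}(\Btheta,\Btheta_b^*)-c_{\xt}(\Btheta,\Btheta_A^*)\bar{\BS}_A^{-1}c_{\xt}(\Btheta_A^*,\Btheta_b^*))^2/\rho$, where $\rho$ is the Schur complement $s_{\xt}^2(\Btheta_b^*)+\sigma_n^2(\Btheta_b^*)-c_{\xt}(\Btheta_b^*,\Btheta_A^*)\bar{\BS}_A^{-1}c_{\xt}(\Btheta_A^*,\Btheta_b^*)$; this is exactly \eqref{eq:batch_shortcut}. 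The only non-mechanical point is using symmetry of $c_{\xt}$ and of $\bar{\BS}_A^{-1}$ to fold the cross term into the displayed form.

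Given (iv), part (ii) follows quickly: by (i) we may assume $\Btheta_B^*=[\Btheta_A^*,\Btheta_C^*]$ with the extra columns appended last, and adding them one at a time and applying (iv) at each step changes $\Deltav_{\xt}(\Btheta;\cdot)$ by a term $(\cdot)^2/\rho$ with $\rho>0$ (the Schur complement of a positive definite matrix is positive), so the sequence of values is nondecreasing. (Alternatively one can read (ii) straight off \eqref{eq:gp_var_lookahead}, since $\Deltav_{\xt}(\Btheta;\Btheta^*)=s_{\xt}^2(\Btheta)-s_{\xtb}^2(\Btheta;\Btheta^*)$ and the Gaussian posterior variance $s_{\xtb}^2$, which is deterministic in $\By^*$, cannot increase under additional conditioning.) Part (iii) is the specialisation of (iv) to $b=2$ and $\Btheta_A^*=\{\Btheta_1^*\}$, so that $\bar{\BS}_A=\bar{s}_{\xt}^2(\Btheta_1^*)$ is a scalar: the correction term is then $(c_{\xt}(\Btheta,\Btheta_2^*)-c_{\xt}(\Btheta,\Btheta_1^*)c_{\xt}(\Btheta_1^*,\Btheta_2^*)/\bar{s}_{\xt}^2(\Btheta_1^*))^2$ divided by $\bar{s}_{\xt}^2(\Btheta_2^*)-c_{\xt}^2(\Btheta_1^*,\Btheta_2^*)/\bar{s}_{\xt}^2(\Btheta_1^*)$; expanding the square and using the partial-fraction identity $\bar{s}_{\xt}^2(\Btheta_1^*)/(\bar{s}_{\xt}^2(\Btheta_1^*)\bar{s}_{\xt}^2(\Btheta_2^*)-c_{\xt}^2(\Btheta_1^*,\Btheta_2^*)) = 1/\bar{s}_{\xt}^2(\Btheta_2^*) + c_{\xt}^2(\Btheta_1^*,\Btheta_2^*)/(\bar{s}_{\xt}^2(\Btheta_2^*)(\bar{s}_{\xt}^2(\Btheta_1^*)\bar{s}_{\xt}^2(\Btheta_2^*)-c_{\xt}^2(\Btheta_1^*,\Btheta_2^*)))$ splits off the term $c_{\xt}^2(\Btheta,\Btheta_2^*)/\bar{s}_{\xt}^2(\Btheta_2^*)=\Deltav_{\xt}(\Btheta;\Btheta_2^*)$ and leaves precisely the residual $r_{\xt}$ in \eqref{eq:for_submodularity}.

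I do not expect a genuine obstacle here: once the quadratic-form / Schur-complement viewpoint is in place, (i) and (ii) are essentially free and (iv) is one application of block matrix inversion. The only place requiring care is the bookkeeping in (iii) — expanding the square and performing the partial-fraction rearrangement so that the two fractions of $r_{\xt}$ emerge with exactly the stated denominators $\bar{s}_{\xt}^4(\Btheta_1^*)\bar{s}_{\xt}^4(\Btheta_2^*)-\bar{s}_{\xt}^2(\Btheta_1^*)\bar{s}_{\xt}^2(\Btheta_2^*)c_{\xt}^2(\Btheta_1^*,\Btheta_2^*)$ and $\bar{s}_{\xt}^2(\Btheta_1^*)\bar{s}_{\xt}^2(\Btheta_2^*)-c_{\xt}^2(\Btheta_1^*,\Btheta_2^*)$ rather than some equivalent but unsimplified form.
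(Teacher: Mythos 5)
Your proposal is correct and follows essentially the same route as the paper: part (i) via a permutation matrix, part (iv) via block matrix inversion (the paper's Lemma \ref{lemma:0}), and part (ii) as an immediate consequence of the non-negative square over a positive Schur complement in \eqref{eq:batch_shortcut}. The only cosmetic difference is that the paper proves (iii) by directly inverting the $2\times 2$ matrix and then obtains (iv) as its generalisation, whereas you prove (iv) first and recover (iii) by specialising and a partial-fraction rearrangement; the algebra is equivalent and your verification of the stated denominators of $r_{\xt}$ checks out.
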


Figure~\ref{fig:s2_gp} demonstrates how two new evaluations at $\theta_1^*$ and $\theta_2^*$ reduce the GP variance in two different one-dimensional examples. Specifically, Figure~\ref{fig:s2_gp}a illustrates the fact of Lemma \ref{lemma:2} (iii) that the interaction between the evaluation points, represented by the term $r_{\xt}(\theta;\theta_1^*,\theta_2^*)$, affects the reduction of the GP variance and its effect can be either positive or negative (the variance after two new evaluations, blue line, is either below or above the yellow line, which represents the reduced variance if the interaction is neglected). In Figure~\ref{fig:s2_gp}b the new evaluation locations are far apart and the factorisation of the variance reduction in \eqref{eq:gp_factorization} holds approximately.

\begin{figure*}[ht]
\centering
\includegraphics[width=0.85\textwidth]{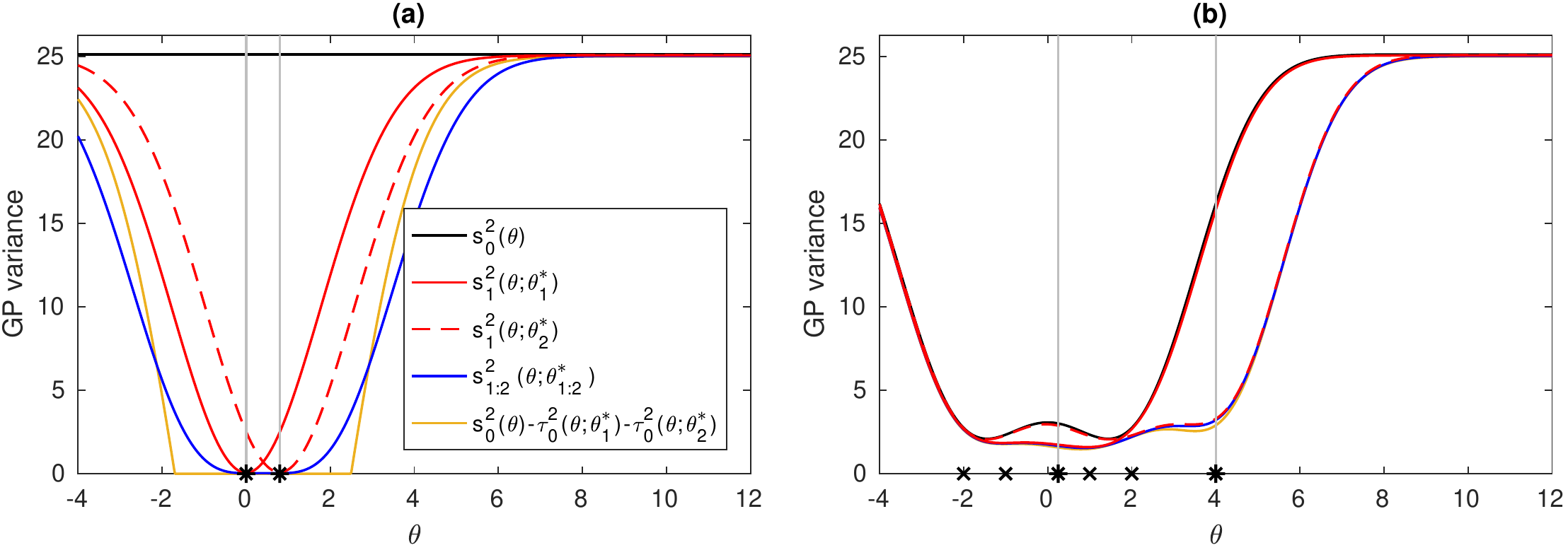}
\caption{The effect of two new evaluations (black stars) on the GP variance. Black line is the original variance, red lines (solid and dashed) show variance if only one of the evaluations is included. Blue line shows the variance after both evaluations, and yellow the variance if the interaction between the locations is neglected. (a) Noiseless observations at evaluation locations close to each other are obtained. (b) similar to (a), but showing four earlier evaluations (at black crosses) from which noisy observations were available, such that the GP variance is not exactly zero at these locations. }
\label{fig:s2_gp}
\end{figure*}

\subsection{Batch-sequential designs} \label{sec:seq_acq}

Given $D_{\xt} = \{(y_i,\Btheta_i)\}_{i=1}^t$, our goal is to select the next batch of $b$ evaluations $\Btheta^*$ in an optimal fashion. 
We take a Bayesian decision theoretic approach, where $\Btheta^*$ is selected to minimise the expected (or median) loss, where the loss measures uncertainty remaining in the unnormalised posterior $\tildepiapprox^f$ when the hypothetical observations $\By^*$ at locations $\Btheta^*$ are taken into account. In the following we develop two such techniques based on two different measures of uncertainty: variance and interquartile range (IQR). 
%
Design strategies which acknowledge the impact of the next batch, but neglect the whole remaining computational budget, are often called ``myopic''. It is possible to formulate a non-myopic design as a dynamic programming problem, but this is computationally demanding, see e.g.~\citet{Bect2012,gonzalez2015glasses}. Consequently, we focus on myopic designs which already produce highly sample-efficient and practical algorithms.

\subsubsection{Expected integrated variance (EIV)}\label{sec:eiv_derivation}

As our first measure of the uncertainty of the unnormalised posterior $\tildepiapprox^f$ for the selection of the next batch design $\Btheta^*$, 
we select the Bayes risk under the $L^2$ loss.
In this case, the Bayes risk is the integrated variance function 
\begin{equation}
\mathcal{L}^{\text{v}}(\probmeas_{D_{\xt}}^{\lik}) 
\eqdef \int_{\Theta} \Var_{\lik \cond D_{\xt}}(\tildepiapprox_{\lik}(\Btheta)) \ud\Btheta
= \int_{\Theta} \pi^2(\Btheta)\e^{2m_{\xt}(\Btheta) + s_{\xt}^2(\Btheta)}\left(\e^{s_{\xt}^2(\Btheta)} - 1\right) \ud\Btheta,
%
\end{equation}
whose integrand was obtained from \eqref{eq:log_mean_var}. 
This is similar to \citet{Sinsbeck2017,Jarvenpaa2018_acq} who, however, considered other GP surrogate models and only sequential designs.
%
We compute the expectation over the hypothetical noisy log-likelihoods $\By^*$ for any candidate design $\Btheta^*$, leading to the expected integrated variance design criterion (\eiv{}). The resulting optimal strategy is a special case of stepwise uncertainty reduction technique, see e.g.~\citet{Bect2012}. This criterion is evaluated efficiently without numerical simulations from the GP model, using the following result.
%
\begin{proposition} \label{prop:var}
With the assumptions of Lemma \ref{lemma:1}, the expected integrated variance design criterion $L_{\xt}^{\textnormal{v}}$ at any candidate design $\Btheta^*\in\Theta^b$ is
\begin{align}
L_{\xt}^{\textnormal{v}}(\Btheta^*)
\eqdef\mean_{\By^*\!\cond\!\Btheta^*, D_{\xt}}\mathcal{L}^{\textnormal{v}}(\probmeas_{D_{\xt}\cup D^*}^{\lik}) 
=\int_{\Theta}\!\pi^2(\Btheta) \e^{2m_{\xt}(\Btheta)\!+\! s^2_{\xt}(\Btheta)}\!\left(\!\e^{s^2_{\xt}(\Btheta)}\!-\! \e^{\Deltav_{\xt}(\Btheta;\Btheta^*)}\!\right)\!\ud\Btheta. \label{eq:eiv_prop}
\end{align}
\end{proposition}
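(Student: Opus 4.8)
The plan is to reduce everything to the one‑point identities of Lemma~\ref{lemma:1} and the moment generating function of a Gaussian. First I would unfold the design criterion: by definition $L_{\xt}^{\textnormal{v}}(\Btheta^*) = \mean_{\By^*\cond\Btheta^*,D_{\xt}}\mathcal{L}^{\textnormal{v}}(\probmeas_{D_{\xt}\cup D^*}^{\lik})$ with $\mathcal{L}^{\textnormal{v}}(\probmeas_{D_{\xt}\cup D^*}^{\lik}) = \int_{\Theta}\Var_{\lik\cond D_{\xt}\cup D^*}(\tildepiapprox_{\lik}(\Btheta))\ud\Btheta$. Applying \eqref{eq:log_mean_var} with $D_{\xt}$ replaced by the augmented data $D_{\xt}\cup D^*$, the integrand becomes $\pi^2(\Btheta)\e^{2m_{\xtb}(\Btheta;\Btheta^*)+s^2_{\xtb}(\Btheta;\Btheta^*)}\bigl(\e^{s^2_{\xtb}(\Btheta;\Btheta^*)}-1\bigr)$, which depends on the not‑yet‑observed $\By^*$ only through the look‑ahead GP mean $m_{\xtb}(\Btheta;\Btheta^*)$ and variance $s^2_{\xtb}(\Btheta;\Btheta^*)$. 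Since this integrand is nonnegative and jointly measurable in $(\Btheta,\By^*)$, Tonelli's theorem (as used earlier in the paper) lets me swap $\mean_{\By^*\cond\Btheta^*,D_{\xt}}$ with $\int_{\Theta}\cdot\,\ud\Btheta$, so it suffices to evaluate, for each fixed $\Btheta$, the inner expectation $\mean_{\By^*\cond\Btheta^*,D_{\xt}}\bigl[\e^{2m_{\xtb}(\Btheta;\Btheta^*)+s^2_{\xtb}(\Btheta;\Btheta^*)}(\e^{s^2_{\xtb}(\Btheta;\Btheta^*)}-1)\bigr]$.

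Next I would invoke Lemma~\ref{lemma:1}. By \eqref{eq:gp_var_lookahead} the look‑ahead variance $s^2_{\xtb}(\Btheta;\Btheta^*)$ is, conditionally on $\Btheta^*,D_{\xt}$, the deterministic quantity $s^2_{\xt}(\Btheta)-\Deltav_{\xt}(\Btheta;\Btheta^*)$; hence every factor of the form $\e^{s^2_{\xtb}}$ or $(\e^{s^2_{\xtb}}-1)$ leaves the expectation. What remains is $\mean_{\By^*\cond\Btheta^*,D_{\xt}}[\e^{2m_{\xtb}(\Btheta;\Btheta^*)}]$, and by \eqref{eq:gp_mean_lookahead} $m_{\xtb}(\Btheta;\Btheta^*)\cond\Btheta^*,D_{\xt}\sim\Normal(m_{\xt}(\Btheta),\Deltav_{\xt}(\Btheta;\Btheta^*))$, so the Gaussian moment generating function (coefficient $2$) gives $\mean[\e^{2m_{\xtb}(\Btheta;\Btheta^*)}]=\e^{2m_{\xt}(\Btheta)+2\Deltav_{\xt}(\Btheta;\Btheta^*)}$.

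Finally I would collect terms. Abbreviating $m=m_{\xt}(\Btheta)$, $s^2=s^2_{\xt}(\Btheta)$, $\tau^2=\Deltav_{\xt}(\Btheta;\Btheta^*)$, the inner expectation equals
\[
\e^{s^2-\tau^2}\bigl(\e^{s^2-\tau^2}-1\bigr)\,\e^{2m+2\tau^2}
=\e^{2m}\,\e^{s^2+\tau^2}\bigl(\e^{s^2-\tau^2}-1\bigr)
=\e^{2m+s^2}\bigl(\e^{s^2}-\e^{\tau^2}\bigr),
\]
and multiplying by $\pi^2(\Btheta)$ and integrating over $\Theta$ gives exactly \eqref{eq:eiv_prop}.

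The only step that is not purely mechanical is the interchange of expectation and integration; this is handled cleanly by Tonelli since the integrand is nonnegative, and the integrals are finite because $\Theta$ is compact and $m_{\xt}$, $s^2_{\xt}$, $\Deltav_{\xt}$ and $\pi$ are continuous, hence bounded, on $\Theta$. Everything else is the Dirac‑measure identity \eqref{eq:gp_var_lookahead} of Lemma~\ref{lemma:1} together with the Gaussian moment generating function, so no genuine difficulty remains.
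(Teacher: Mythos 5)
Your proposal is correct and follows essentially the same route as the paper's own proof: Tonelli to swap the expectation and the integral, Lemma \ref{lemma:1} to make $s^2_{\xtb}$ deterministic and to get the Gaussian law of $m_{\xtb}$, the Gaussian moment-generating function for $\mean[\e^{2m_{\xtb}}]$, and the same final algebra. The added remark on why the interchange is justified (nonnegativity plus boundedness on the compact $\Theta$) is a harmless elaboration of what the paper leaves implicit.
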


\subsubsection{Integrated median interquartile range (IMIQR)}

A sequential design strategy based on \eiv{} worked well in the ABC scenario of \citet{Jarvenpaa2018_acq}, who modelled the discrepancy in \eqref{eq:abc_post} with a GP. 
In this article we instead model the log-likelihood with a GP as illustrated in Figure~\ref{fig:loglik_gp} and the goal is to minimise the uncertainty of the unnormalised posterior $\tildepiapprox^f$, which has a log-Normal distribution for a fixed $\Btheta$. 
However, the expectation and variance can be suboptimal estimates of the central tendency and uncertainty of a heavy-tailed distribution such as log-Normal. For example, Figure~\ref{fig:loglik_gp}b shows that the standard deviation (dashed blue line) grows very rapidly at the boundaries although at the same time the credible interval clearly indicates that the probability of the log-likelihood, and consequently the likelihood, of having a non-negligible value there is vanishingly small. The mean is similarly affected in a non-intuitive way by the heavy tails: 
It is fairly easy to see that 
\begin{equation}
%
\prob[\exp(\lik(\Btheta)) \geq \mean(\exp(\lik(\Btheta)))] = \prob(\lik(\Btheta) \geq m_t(\Btheta)+s^2_t(\Btheta)/2) = \Phi(-s_t(\Btheta)/2).
\end{equation}
This means that with a sufficiently large variance of the log-likelihood $s^2_t(\Btheta)$, the probability that the likelihood $\exp(\lik(\Btheta))$ is greater than its own mean becomes negligible.

The above analysis suggests (and empirical results in Section \ref{sec:experiments} further confirm) that mean-based point estimates and variance-based design strategies, such as the \eiv{} and those proposed by \citet{Gunter2014,Kandasamy2015,Sinsbeck2017,Jarvenpaa2018_acq,Acerbi2018}, are unsuitable when log-likelihood evaluations are noisy. 
%
A reasonable alternative for the $L^2$-loss used to derive the \eiv{} is to measure the uncertainty in the posterior using the $L^1$-loss, which is less affected by extreme values. As shown in Section \ref{sec:gp_model}, the $L^1$-loss leads to the marginal median estimate for the posterior, $\pi_{\xt}^{\text{med}}$. 
%
While the $L^1$-loss produces a robust median estimator that we adopt, \eqref{eq:loneloss} shows that the Bayes risk with $L^1$ loss scales as $\exp(s^2_{\xt}(\Btheta)/2)$ since $\Phi(s_{\xt}(\Btheta))\approx 1$ for large $s_{\xt}(\Btheta)$, such that also this measure for overall uncertainty of $\tildepiapprox^f$ is affected by the heavy tails of the log-Normal distribution. 

We propose a new, robust design criterion for selecting the next design.
In place of the variance in \eiv{}, we use a robust measure of uncertainty, the interquartile range $\text{\iqr{}}(\Btheta)=q^{3/4}(\Btheta)-q^{1/4}(\Btheta)$.
%
The integrated \iqr{} loss measuring the uncertainty of the posterior pdf is defined as
%
%
\begin{equation}
\mathcal{L}^{\text{\iqr}}(\probmeas_{D_{\xt}}^{\lik}) 
\eqdef \int_{\Theta} \text{\iqr}_{\lik \cond D_{\xt}}(\tildepiapprox_{\lik}(\Btheta)) \ud\Btheta
%
%
= 2\int_{\Theta} \pi(\Btheta) \e^{m_{\xt}(\Btheta)} \sinh(us_{\xt}(\Btheta)) \ud\Btheta, \label{eq:iiqrloss}
\end{equation}
where $u\eqdef\Phi^{-1}(p_u)$ and $\sinh(z) = (\exp(z)-\exp(-z))/2$ for $z\in\reals$ is the hyperbolic sine, which emerges after using \eqref{eq:log_med_q}. While we use $p_u=0.75$, other quantiles $p_u\in(0.5,1)$ are also possible. 
A theoretical downside of the \iqr{} loss is that it does not formally coincide with the Bayes risk for the $L^1$ or $L^2$ loss, which correspond to the optimal point estimators of the unnormalised posterior (see Section \ref{sec:estimators}).

We also use the median in place of the mean to measure the effect of the next design $\Btheta^*$ to the loss function. That is, we use median loss decision theory (see \citet{Yu2011}), 
and define the median integrated \iqr{} loss function as 
\begin{equation}\begin{aligned}
L^{\text{\iqr}}_{\xt}(\Btheta^*)
& \eqdef \med_{\By^* \cond  \Btheta^*, D_{\xt}}\mathcal{L}^{\text{\iqr}}(\probmeas_{D_{\xt}\cup D^*}^{\lik}). 
%
\label{eq:true_med_loss}
\end{aligned}\end{equation}
The median integrated \iqr{} loss in \eqref{eq:true_med_loss} is intractable but it can be approximated by the integrated median \iqr{} loss (\miiqr{}). This approximation\footnote{Instead of approximation, which may be inaccurate when the GP variance function is large, the integrated median criterion in \eqref{eq:miiqr_prop} can be seen as an alternative decision-theoretic formulation with infinitely many dependent variables of interest (one for each $\Btheta\in\Theta$) and where the median outcomes after considering the effect of the design $\Btheta^*$ are all computed separately for each $\Btheta\in\Theta$, and the corresponding losses are combined through averaging. The median integrated loss in \eqref{eq:true_med_loss} instead has a single combined loss function. } follows by replacing the predictive distribution of $\By^*$ with a point mass, i.e.,  $\pi(\By^*\cond \Btheta^*,D_{\xt}) \approx \delta(m_{\xt}(\Btheta^*)-\By^*)$. This approximation resembles the so-called kriging believer heuristic in \citet{Ginsbourger2010}. The next result gives a useful formula to calculate \miiqr{}. 
%
\begin{proposition} \label{prop:iqr}
With the assumptions of Lemma \ref{lemma:1}, 
the integrated median \iqr{} loss, denoted as $\tilde{L}^{\text{\iqr}}_{\xt}$, at any candidate design $\Btheta^*\in\Theta^b$ is
\begin{equation}\begin{aligned}
\tilde{L}^{\text{\iqr}}_{\xt}(\Btheta^*)
&\!\eqdef\!\int_{\Theta}\!\textnormal{\med}_{\By^*\!\cond\!\Btheta^*,D_{\xt}}\text{\iqr}_{\lik \!\cond\! D_{\xt}\cup D^*}(\tildepiapprox_{\lik}(\Btheta))\!\ud\Btheta 
%
\!=\!2\!\int_{\Theta}\!\pi(\Btheta) \e^{m_{\xt}(\Btheta)} \sinh(us_{\xtb}(\Btheta;\Btheta^*))\!\ud\Btheta.  \label{eq:miiqr_prop}
\end{aligned}\end{equation}
\end{proposition}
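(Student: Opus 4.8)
The plan is to reduce the claim to a pointwise-in-$\Btheta$ computation: first evaluate the inner interquartile range $\text{\iqr}_{\lik\cond D_{\xt}\cup D^*}(\tildepiapprox_{\lik}(\Btheta))$ in closed form from the log-Gaussian quantile formula \eqref{eq:log_med_q}, and then take the outer median over $\By^*$ using Lemma \ref{lemma:1}. Note that, in contrast to the expectation-based criterion of Proposition \ref{prop:var}, here the median already sits inside the $\Btheta$-integral by the very definition of $\tilde{L}^{\text{\iqr}}_{\xt}$, so no Tonelli/Fubini-type interchange of operations is needed and the whole argument is local in $\Btheta$.

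First I would fix the candidate design $\Btheta^*$ and a hypothetical realisation of $\By^*$; since the noise variances $\sigma_n^2$ are positive, every Gram matrix remains invertible, so $\lik\cond D_{\xt}\cup D^*$ is again a GP with mean $m_{\xtb}(\cdot;\Btheta^*)$ and variance $s^2_{\xtb}(\cdot;\Btheta^*)$. Applying \eqref{eq:log_med_q} with $\alpha=3/4$ and $\alpha=1/4$, and using the symmetry $\Phi^{-1}(3/4)=-\Phi^{-1}(1/4)=u$, yields $\text{\iqr}_{\lik\cond D_{\xt}\cup D^*}(\tildepiapprox_{\lik}(\Btheta)) = 2\pi(\Btheta)\e^{m_{\xtb}(\Btheta;\Btheta^*)}\sinh(us_{\xtb}(\Btheta;\Btheta^*))$, exactly as in the derivation of \eqref{eq:iiqrloss}. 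Second I would take the median of this quantity over $\By^*\cond\Btheta^*,D_{\xt}$. By Lemma \ref{lemma:1} the look-ahead variance $s^2_{\xtb}(\Btheta;\Btheta^*)$ is deterministic given $\Btheta^*,D_{\xt}$, so the only randomness remaining is in $m_{\xtb}(\Btheta;\Btheta^*)\sim\Normal(m_{\xt}(\Btheta),\Deltav_{\xt}(\Btheta;\Btheta^*))$; since $\pi(\Btheta)\geq 0$ and $\sinh(us_{\xtb}(\Btheta;\Btheta^*))\geq 0$ (because $u=\Phi^{-1}(p_u)>0$ for $p_u\in(1/2,1)$), the map $x\mapsto 2\pi(\Btheta)\e^{x}\sinh(us_{\xtb}(\Btheta;\Btheta^*))$ is nondecreasing, so the median passes through it, and since the median of a Gaussian equals its mean, $\med_{\By^*\cond\Btheta^*,D_{\xt}}m_{\xtb}(\Btheta;\Btheta^*)=m_{\xt}(\Btheta)$. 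This gives the pointwise identity $\med_{\By^*\cond\Btheta^*,D_{\xt}}\text{\iqr}_{\lik\cond D_{\xt}\cup D^*}(\tildepiapprox_{\lik}(\Btheta))=2\pi(\Btheta)\e^{m_{\xt}(\Btheta)}\sinh(us_{\xtb}(\Btheta;\Btheta^*))$, which I would then integrate over $\Theta$, assuming as elsewhere that the integral exists, to obtain \eqref{eq:miiqr_prop}.

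The one step that needs genuine care is the second one: one must check that the quantity whose outer median is taken depends on the random data $\By^*$ only through the single scalar $m_{\xtb}(\Btheta;\Btheta^*)$, and does so monotonically. This is precisely what Lemma \ref{lemma:1} buys us, by showing that the look-ahead variance $s^2_{\xtb}(\Btheta;\Btheta^*)$ does not fluctuate with $\By^*$; once that is in hand, pushing the median through a monotone transformation of a Gaussian is routine, and I do not anticipate any further obstacle.
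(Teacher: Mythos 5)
Your proposal is correct and follows essentially the same route as the paper: Lemma \ref{lemma:1} makes the look-ahead variance $s^2_{\xtb}(\Btheta;\Btheta^*)$ deterministic, so the only randomness in the pointwise \iqr{} is through $m_{\xtb}(\Btheta;\Btheta^*)$, and the median of the monotone transform $\e^{m_{\xtb}(\Btheta;\Btheta^*)}$ of a Gaussian equals $\e^{m_{\xt}(\Btheta)}$. The paper states this in one line; your version merely spells out the intermediate steps (the quantile formula for the log-Normal and the equivariance of the median under nondecreasing maps) explicitly.
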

%
The integrand of \eqref{eq:miiqr_prop} is recognised as a product of the marginal median estimate of the posterior in \eqref{eq:log_med_q} and the function $\sinh(us_{\xtb}(\Btheta;\Btheta^*))$.
Hence, to minimise \miiqr{}, the simulation locations $\Btheta^*$ need to be chosen as a compromise between regions where the current posterior estimate is non-negligible and where the GP variance $s^2_{\xtb}(\Btheta;\Btheta^*)$ decreases efficiently when the simulations are run at $\Btheta^*$.
Similar interpretation holds also for the \eiv{} function in \eqref{eq:eiv_prop}. However, \eiv{} assigns significantly more weight to areas with high GP variance than \miiqr{}. 

\subsection{Joint and greedy optimisation for batch-sequential designs} \label{sec:greedy_acq}

We can now evaluate \eiv{} and \miiqr{} design criteria for any candidate design $\Btheta^*$ and choose $\Btheta^*$ as the minimiser, i.e.,
\begin{equation}
\Btheta^* = \arg\min_{\Btheta\in\Theta^b} L_{\xt}(\Btheta), \label{eq:joint_batch}
\end{equation}
where $L_{\xt}$ is either the \eiv{} in \eqref{eq:eiv_prop} or \miiqr{} in \eqref{eq:miiqr_prop}. 
The objective function is typically smooth but multimodal so global optimisation is needed. 
%
We call \eqref{eq:joint_batch} as ``joint'' optimisation which does not scale to high dimensional parameter spaces or to large batch sizes. Even if computing the design criterion is cheap as compared to the run times of typical simulation models, solving the $db$-dimensional global optimisation problem is often impractical as discussed in \citet{Wilson2018}. 
Hence, we consider greedy optimisation as also used in batch BO \citep{Ginsbourger2010,Snoek2012,Wilson2018}. The greedy optimisation procedure for both \eiv{} and \miiqr{} works as follows: the first point $\Btheta_1^*$ is chosen as in the sequential case i.e.~by solving \eqref{eq:joint_batch} with $b=1$. The rest of the points $\Btheta^*_{2:b}$ are obtained by iteratively solving
\begin{equation}
\Btheta_r^* = \arg\min_{\Btheta\in\Theta}L_{\xt}([\Btheta_{1:r-1}^*,\Btheta]), \quad r = 2,3,\ldots,b. \label{eq:greedy_optim}
\end{equation} 
This greedy approach simplifies the difficult $db$-dimensional optimisation into $b$ separate $d$-dimensional problems, and makes it scalable as a function of $b$.

In general, the design found by the greedy optimisation does not equal the minimiser of the joint criterion. It follows from Lemma \ref{lemma:2} (i) that both \eiv{} and \miiqr{} are invariant to the order of evaluation locations in $\Btheta^*$ but this does not hold for the greedy procedure. Bounds for the performance of greedy maximisation of a set function have been studied in literature, see e.g.~\citet{Nemhauser1978,Krause2008,Bach2013}.  
For example, if the design criterion (when defined equivalently using a utility so that \eqref{eq:greedy_optim} becomes a maximisation problem) is submodular and non-decreasing in batch size $b$, then the worst-case outcome of greedy optimisation is at least $1-1/\e\approx 0.63$ of the corresponding optimal joint value. 
A utility function corresponding to \miiqr{} defined below is not submodular but an approximation of it is weakly submodular (see e.g.~\citet{Krause2008,Krause2010}). We use this fact to derive a weaker but still useful bound. 

We here consider an approximation $\tilde{L}^{\text{\iqr},a}_{\xt}(\Btheta^*)$ of $\tilde{L}^{\text{\iqr}}_{\xt}(\Btheta^*)$ so that 
\begin{align}
    \tilde{L}^{\text{\iqr}}_{\xt}(\Btheta^*) 
    %
    %
    \approx \tilde{L}^{\text{\iqr},a}_{\xt}(\Btheta^*) 
    \eqdef 2u^2\int_{\Theta} \pi(\Btheta) \e^{m_{\xt}(\Btheta)} s^2_{\xtb}(\Btheta;\Btheta^*) \ud\Btheta, \label{eq:U_miiqr_a}
\end{align}
which follows from the observation that $\sinh(us_{\xtb}(\Btheta;\Btheta^*)) \approx u^2 s^2_{\xtb}(\Btheta;\Btheta^*)$ and where we had $u=\Phi^{-1}(p_u)$. 
The approximation in \eqref{eq:U_miiqr_a} is reasonable when $s_{\xtb}(\Btheta;\Btheta^*)\in[0,3/u]$ in the region where $\pi(\Btheta)\e^{m_{\xt}(\Btheta)}$ is non-negligible. For simplicity, we consider a discretised setting where the optimisation is done over a finite set $\tilde{\Theta}\subset\Theta$ and define (approximate) \miiqr{} utility function as 
\begin{align}
    \tilde{U}^{\text{\iqr},a}_{\xt}(\Btheta^*) \eqdef \tilde{L}^{\text{\iqr},a}_{\xt}(\emptyset) - \tilde{L}^{\text{\iqr},a}_{\xt}(\Btheta^*) \label{eq:miiqr_utility}
\end{align}
for $\Btheta^* \in 2^{\tilde{\Theta}}$. Clearly, maximising $\tilde{U}^{\text{\iqr},a}_{\xt}(\Btheta^*)$ is equivalent to minimising $\tilde{L}^{\text{\iqr},a}_{\xt}(\Btheta^*)$. 
The following theorem gives a bound for the greedy optimisation of the (approximate) \miiqr{} utility function. The proof can be found in \appe{} \ref{subsec_greedyboundproof}.
%
\begin{theorem} \label{thm:miiqr_submod}
Consider the set function $\tilde{U}^{\text{\iqr},a}_{\xt}: 2^{\tilde{\Theta}} \rightarrow \reals_{+}$ in \eqref{eq:miiqr_utility}. Let $\Btheta_O$ be a (joint) optimal solution for maximising $\tilde{U}^{\text{\iqr,a}}_{\xt}(\Btheta)$ over $\Btheta\subset\tilde{\Theta}, |\Btheta| \leq b$. The greedy algorithm for this maximisation problem outputs a set $\Btheta_{G}\subset\tilde{\Theta}$ satisfying
\begin{align}
    &\tilde{U}^{\text{\iqr},a}_{\xt}(\Btheta_{G}) 
    %
    %
    \geq (1-1/e) \tilde{U}^{\text{\iqr},a}_{\xt}(\Btheta_O) - b^2\epsilon_{\xt}, \quad \text{where}\\
    &\epsilon_{\xt} \eqdef \max\{0,2u^2\epsilon'_{\xt}\}, 
    \quad \epsilon'_{\xt} \eqdef \max_{\substack{\Btheta_A\subset\tilde{\Theta}, |\Btheta_A|=i \leq 2b \\ \Btheta_j,\Btheta_k\in\tilde{\Theta}}} \int_{\Theta} \pi(\Btheta)\e^{m_{\xt}(\Btheta)}r_{1:t+i}(\Btheta;\Btheta_j,\Btheta_k)\ud\Btheta.
\end{align}
\end{theorem}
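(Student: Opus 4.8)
The plan is to reduce this to a known result on greedy maximisation of set functions that are \emph{approximately} submodular, in the spirit of \citet{Krause2008,Krause2010}. The key object is the utility $\tilde{U}^{\text{\iqr},a}_{\xt}(\Btheta^*) = \tilde{L}^{\text{\iqr},a}_{\xt}(\emptyset) - \tilde{L}^{\text{\iqr},a}_{\xt}(\Btheta^*) = 2u^2\int_{\Theta}\pi(\Btheta)\e^{m_{\xt}(\Btheta)}(s^2_{\xt}(\Btheta) - s^2_{\xtb}(\Btheta;\Btheta^*))\ud\Btheta = 2u^2\int_{\Theta}\pi(\Btheta)\e^{m_{\xt}(\Btheta)}\Deltav_{\xt}(\Btheta;\Btheta^*)\ud\Btheta$, using Lemma \ref{lemma:1}. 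So $\tilde{U}^{\text{\iqr},a}_{\xt}$ is (up to the positive weight $w(\Btheta)\eqdef 2u^2\pi(\Btheta)\e^{m_{\xt}(\Btheta)}$ and integration) just the variance-reduction set function $\Btheta^*\mapsto\Deltav_{\xt}(\Btheta;\Btheta^*)$. First I would establish that this set function is monotone non-decreasing — this is exactly Lemma \ref{lemma:2}(ii) — and then control its failure to be submodular.

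The heart of the argument is a marginal-gain computation. For a set $\Btheta_A^*$ and a single new point $\Btheta_b^*$, Lemma \ref{lemma:2}(iv) gives an explicit formula for the marginal gain $\Deltav_{\xt}(\Btheta;[\Btheta_A^*,\Btheta_b^*]) - \Deltav_{\xt}(\Btheta;\Btheta_A^*)$. Submodularity would require this marginal gain to be non-increasing as $\Btheta_A^*$ grows; it is not, but the $b=2$ identity in Lemma \ref{lemma:2}(iii) quantifies the discrepancy precisely via the interaction term $r_{\xt}(\Btheta;\Btheta_1^*,\Btheta_2^*)$. The plan is to iterate this: for any $\Btheta_A^*\subseteq\Btheta_B^*$ and any new point $\Btheta_j$, the difference between the marginal gain at $\Btheta_A^*$ and the marginal gain at $\Btheta_B^*$ can be bounded in absolute value by a sum of $|r_{1:t+i}(\Btheta;\Btheta_j,\Btheta_k)|$ terms (applying Lemma \ref{lemma:2}(iii) after conditioning on the points of $\Btheta_A^*$, i.e.\ replacing $c_{\xt}$ by $c_{\xt+i}$), over $\Btheta_k$ ranging through the larger set, with at most $\leq 2b$ conditioning points in play. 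Integrating against $w(\Btheta)$ and taking the worst case over all such configurations yields the quantity $\epsilon'_{\xt}$ in the statement, and hence a pointwise bound $|{\text{marginal gain at }}\Btheta_A^* - {\text{marginal gain at }}\Btheta_B^*| \le 2u^2 b\,\epsilon'_{\xt}$ or similar — so $\tilde{U}^{\text{\iqr},a}_{\xt}$ is ``$\epsilon_{\xt}$-approximately submodular'' in the additive sense. Since the quantity could in principle be negative in sign bookkeeping, the $\max\{0,\cdot\}$ truncation in the definition of $\epsilon_{\xt}$ is there to keep it a valid nonnegative slack.

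With monotonicity and additive $\epsilon_{\xt}$-approximate submodularity in hand, I would invoke (or reprove in a couple of lines, following Nemhauser's telescoping argument) the standard greedy guarantee for approximately submodular monotone functions: if at each of the $b$ greedy steps the chosen element's true marginal gain is within an additive $b\epsilon_{\xt}$ (coming from the $\leq b$ accumulated interaction corrections relative to the optimum's elements) of the best marginal gain with respect to $\Btheta_O$, then the telescoping/averaging step in the classical $(1-1/e)$ proof picks up a total additive error of order $b\cdot b\epsilon_{\xt} = b^2\epsilon_{\xt}$, giving $\tilde{U}^{\text{\iqr},a}_{\xt}(\Btheta_G) \geq (1-1/e)\tilde{U}^{\text{\iqr},a}_{\xt}(\Btheta_O) - b^2\epsilon_{\xt}$.

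The main obstacle I anticipate is the careful bookkeeping in the iterated application of Lemma \ref{lemma:2}(iii): one must track how the covariance $c_{\xt}$ updates to $c_{\xt+i}$ as successive greedy/optimal points are conditioned on, confirm that at most $2b$ such conditioning points ever appear (the $\le b$ greedy points already placed plus the $\le b$ points of $\Btheta_O$), and verify that the accumulated correction terms are each of the form $\int_{\Theta}\pi(\Btheta)\e^{m_{\xt}(\Btheta)}r_{1:t+i}(\Btheta;\Btheta_j,\Btheta_k)\ud\Btheta$ so that they are uniformly bounded by $\epsilon'_{\xt}$. The rest — monotonicity, the weight being positive, and the telescoping — is routine given the lemmas already proved.
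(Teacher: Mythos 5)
Your proposal follows essentially the same route as the paper's proof: express the utility as a weighted integral of the variance reduction $\Deltav_{\xt}$, use Lemma \ref{lemma:2}(ii) for monotonicity and Lemma \ref{lemma:2}(iii) (with the covariance conditioned on the $\leq 2b$ points already placed, i.e.\ $c_{t+i}$) to quantify the submodularity deficit by the interaction term $r_{t+i}$, telescope that single-element bound up to sets of size $b$ to accumulate a $b\epsilon_{\xt}$ slack per step, and then run the Nemhauser-style weighted sum to get the $(1-1/e)$ guarantee with total error $b^2\epsilon_{\xt}$. The only cosmetic difference is that you phrase the deficit bound via absolute values of $r$, whereas the paper needs (and $\epsilon'_{\xt}$ is defined to supply) only the one-sided bound, with the $\max\{0,\cdot\}$ truncation handling the sign exactly as you anticipate.
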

%
%
Computing $\epsilon_{\xt}$ explicitly is difficult but we expect that often $\epsilon_{\xt} \ll \tilde{U}^{\text{\iqr},a}_{\xt}(\Btheta_O)$ since $r_{1:t+i}(\Btheta;\Btheta_j,\Btheta_k)$ given by \eqref{eq:for_submodularity} tends to be small. 
However, $\epsilon_{\xt}$ may not always be small, the term $b^2\epsilon_{\xt}$ scales quadratically for batch size $b$, and the bound holds only approximately for \miiqr{}. This bound still suggests that, at least in some iterations of the algorithm, greedy \miiqr{} produces near-optimal batch locations. On the other hand, an approximation similar to \eqref{eq:U_miiqr_a} for \eiv{} would be reasonable in a very limited number of situations, and experiments in \appe{} \ref{subsec:expextratwod} suggest that greedy \eiv{} scales worse as a function of $b$ compared to the corresponding greedy \miiqr{} strategy. 
%
Finally, we note that even when the bound is weak, new design points cannot increase the value of \eiv{} or \miiqr{} loss function as shown in \appe{} \ref{sec:monotonicity}.
Hence, the batch strategies cannot be worse than the corresponding sequential designs and, in practice, they are highly useful as is seen empirically in Section \ref{sec:experiments}.

\subsection{Implementation details} \label{sec:num_details}

Using the GP surrogate model and the analysis from the previous sections, we show the resulting inference method as Algorithm \ref{alg:gp_sl_alg}. Some key implementation details are discussed below and further details (e.g.~on handling GP hyperparameters, MCMC methods used, and optimisation of the design criteria) are given in \appe{} \ref{app:add_details}. 
The algorithm is shown for the SL case using the \miiqr{} strategy, but it works similarly for \eiv{}, heuristic designs developed in the next section and other log-likelihood estimators besides SL. The potentially expensive simulations on the lines 2-5 and 18-21 can be done in parallel. In the SL case, the simulations can be parallelised in terms of both the number of repeated simulations $N$ and batch size $b$. 
\begin{algorithm}[htb]
\caption{GP-based SL inference using \miiqr{} with synchronous batch design \label{alg:gp_sl_alg}} 
 \begin{algorithmic}[1]
 \Require Prior density $\pi(\Btheta)$, simulation model $\pi(\cdot \cond \Btheta)$, GP prior $\probmeas^{\lik}$, number of repeated samples $N$, summary function $S$, batch size $b$, initial batch size $b_0$, max.~iterations $\iter_{\text{max}}$, number of IS samples $s_{\text{IS}}$, number of MCMC samples $s_{\text{MC}}$ 
 \State Sample $\Btheta_{1:b_0} \simiid \pi(\cdot)$ (or use some other space-filling initial design)
 %
 \For{$r=1:b_0$}
 \State Simulate $\Bx_{r}^{(1:N)} \simiid \pi(\cdot \cond \Btheta_{r})$ and compute $S_r^{(1:N)} = S(\Bx_{r}^{(1:N)})$ 
 \hspace{0em}\raisebox{-.5\baselineskip}[0pt][0pt]{$\left.\rule{0pt}{2.3\baselineskip}\right\}\ \mbox{in parallel}$}
 \State Compute $y_r$ from $\{S_r^{(j)}\}_{j=1}^N$
 \EndFor
 \State Set initial training data $D_{b_0} \leftarrow \{(y_{r}, \Btheta_{r})\}_{r=1}^{b_0}$
 %
 \For{$\iter=1:\iter_{\text{max}}$} 
  %
  \State Use MAP estimation to obtain GP hyperparameters $\Bphi$ using $D_{b_0+(\iter-1)b}$ 
  %
  \State Sample $\Btheta^{(j)}\sim\pi_q$ using MCMC and compute $\omega^{(j)}$ in Eq.~\ref{eq:is_approx} for $j = 1,\ldots,s_{\text{IS}}$ 
  %
  \If{\texttt{joint\_optim}}
  \State Obtain $\Btheta_{1:b}^{(\iter)*}$ by solving Eq.~\ref{eq:joint_batch} using Eq.~\ref{eq:miiqr_prop} and \ref{eq:is_approx}
  \ElsIf{\texttt{greedy\_optim}}
  	\State Obtain $\Btheta_{1}^{(\iter)*}$ by minimising Eq.~\ref{eq:joint_batch} using Eq.~\ref{eq:miiqr_prop} and \ref{eq:is_approx}
  \For{$r=2:b$}
  	\State Obtain $\Btheta_{r}^{(\iter)*}$ by solving Eq.~\ref{eq:greedy_optim} using Eq.~\ref{eq:miiqr_prop} and \ref{eq:is_approx}
  \EndFor
  \EndIf
  \For{$r=1:b$}
 	\State Simulate~$\Bx_{r}^{(\iter,1:N)}\!\simiid\!\pi(\cdot \cond \Btheta_{r}^{(\iter)*})$,~compute~$S_r^{(\iter,1:N)}\!= \!S(\Bx_{r}^{(\iter,1:N)})$ 
 	\hspace{0em}\raisebox{-.5\baselineskip}[0pt][0pt]{$\left.\rule{0pt}{2.3\baselineskip}\right\}\ \mbox{in parallel}$}
 \State Compute $y_r^{(\iter)*}$ using $\{S_r^{(\iter,j)}\}_{j=1}^N$
 \EndFor
  \State Update training data $D_{b_0+ib} \leftarrow D_{b_0+(i-1)b} \cup \{(y_{r}^{(\iter)*}, \Btheta_{r}^{(\iter)*})\}_{r=1}^b$
 \EndFor
 \State Use MAP estimation to obtain GP hyperparameters $\Bphi$ using $D_{b_0 + \iter_{\text{max}}b}$
 \State Sample $\Bvartheta^{(1:s_{\text{MC}})}$ from the marginal median estimate in Eq.~\ref{eq:point_estim_norm} using MCMC
 \State \Return Samples $\Bvartheta^{(1:s_{\text{MC}})}$ from the approximate SL posterior
 \end{algorithmic}
\end{algorithm}

Evaluation of \eiv{} and \miiqr{} requires numerical integration over $\Theta\subset\reals^d$. Similar computational challenges emerge also in the state-of-the-art BO methods such as \citet{Hennig2012,HernandezLobato2014,Wu2016} and in \citet{Chevalier2014}. If $d\leq2$ we discretise the parameter space $\Theta$ and approximate the integral in the resulting grid. In higher dimensions, we use self-normalised importance sampling (IS) as in \citet{Chevalier2014,Jarvenpaa2018_acq}. Specifically, we draw samples from the importance distribution $\Btheta^{(j)}\sim\pi_q(\Btheta)$ and use these as integration points to approximate 
\begin{align}
\int_{\Theta} I_{\xt}(\Btheta;\Btheta^*) \ud\Btheta 
%
%
\approx \sum_{j=1}^{s_{\text{IS}}} \omega^{(j)} I_{\xt}(\Btheta^{(j)};\Btheta^*), \quad \omega^{(j)} = \frac{1/\pi_q(\Btheta^{(j)})}{\sum_{k=1}^s 1/\pi_q(\Btheta^{(k)})}, 
\label{eq:is_approx}
\end{align}
where the integrand of either \eqref{eq:eiv_prop} or \ref{eq:miiqr_prop} is denoted by $I_{\xt}(\Btheta;\Btheta^*)$. As the proposal $\pi_q$ we use the current loss (see \eqref{eq:log_mean_var} and the integrand of \eqref{eq:loneloss}), which is a function $\Theta\rightarrow\reals_{+}$, and can be interpreted as an unnormalised pdf. This is a natural choice because the current loss has a similar shape as the expected/median loss as a function of $\Btheta$. 
We use the same proposal in the greedy optimisation in \eqref{eq:greedy_optim} although it would be also possible to adapt the proposal $\pi_q$ according to the pending points $\Btheta_{1:r-1}^*$ when optimising with respect to the $r$th point $\Btheta^*_r$. 

We have assumed that the noise function $\sigma_n^2$ in \eqref{eq:noisemodel} is known. In practice, this is a valid assumption only in the noiseless case where $\sigma_n^2(\Btheta) = 0$. 
As our focus is on the noisy setting, we need to estimate $\sigma_n^2$. Sometimes $\sigma_n^2$ can be assumed to be an unknown constant to be determined together with the GP hyperparameters $\Bphi$ using MAP estimation. 
However, we observed that $\sigma_n^2$ often depends on the magnitude of the log-likelihood (see Figure~\ref{fig:loglik_gp}), making the assumption of homoscedastic noise questionable. 
Similarly to \citet{Wilkinson2014}, we estimate $\sigma_n^2$ using the bootstrap. Specifically, with each new training data point $\Btheta_i$, we resample with replacement $N$ summary vectors from the original population $\{S_i^{(j)}\}_{j=1}^N$ for $2000$ times. We then compute the empirical variance of the resulting log-SL values and use it as a plug-in estimator for $\sigma_n^2(\Btheta_i)$. 

For \eiv{}, \miiqr{} and greedy batch versions of \maxv{} and \maxiqr{} (see Section \ref{sec:other_acq}), $\sigma_n^2$ needs to be also known at candidate design points $\Btheta^*$. Bootstrap cannot be used because the simulated summaries are only available for training data. We take a pragmatic approach and set $\sigma_n=10^{-2}$ at the candidate design points as if the future evaluations were almost exact. 
This simplification effectively reduces the occurrence of (potentially redundant) simulations at nearby points to encourage exploration.
Alternatively, one could use another GP to model the bootstrapped variances or their logarithms and use the GP mean function as a point estimate for the function $\sigma_n^2$ 
as in \citet{Ankenman2010}. 

\subsection{Alternative heuristic designs strategies} \label{sec:other_acq}

Here we present some heuristic alternative design strategies. These are empirically compared to the more principled \eiv{} and \miiqr{} strategies in Section \ref{sec:experiments}. We first focus on sequential designs where $b=1$. 

\textbf{MAXIQR}: A natural and simple approach is to evaluate where the current variance, IQR or some other suitable (local) measure of uncertainty is maximised. Such strategies are in some contexts called ``uncertainty sampling''. The advantage over \eiv{} and \miiqr{} is cheaper computation because the effect of the candidate design point to the whole posterior is not acknowledged. 
%
Using IQR produces the design strategy
\begin{equation}
\Btheta^* = \arg\max_{\Btheta\in\Theta} \pi(\Btheta)\e^{m_{\xt}(\Btheta)} \sinh(us_{\xt}(\Btheta)), \label{eq:miqr}
\end{equation}
which we abbreviate as \maxiqr{} because it evaluates at the maximiser of \iqr{}. 
%
Taking the logarithm of \eqref{eq:miqr}, the \maxiqr{} strategy can be equivalently written as 
\begin{equation}
\Btheta^* = \arg\max_{\Btheta\in\Theta} \left(\log\pi(\Btheta) + {m_{\xt}(\Btheta)} + us_{\xt}(\Btheta) + \log(1-\e^{-2us_{\xt}(\Btheta)})\right), \label{eq:log_miqr}
\end{equation}
which shows a tradeoff between evaluating where the log-posterior is presumed to be large (the first two terms in \eqref{eq:log_miqr}) and unexplored regions where the GP variance is large (the last two terms). 
This formula also shows an interesting connection to the upper confidence bound (UCB) criterion commonly used in BO, see e.g.~\citet{Srinivas2010,Shahriari2015}. The UCB acquisition function is $\text{UCB}(\Btheta) = m_{\xt}(\Btheta) + \beta_t s_{\xt}(\Btheta)$, where $\beta_t$ is a tradeoff parameter, here automatically chosen to be $\beta_t = \Phi^{-1}(p_u)$. Compared to the standard UCB, there is, however, an extra term in \eqref{eq:log_miqr} which further penalises regions having small variance $s_{\xt}^2$. If the variance $s_{\xt}^2$ is large everywhere and/or if $p_u$ is an extreme quantile, then the last term in \eqref{eq:log_miqr} is approximately zero and the \maxiqr{} design criterion approximately equals UCB.

\textbf{MAXV}: When the variance is used instead of IQR, we obtain a strategy
\begin{equation}
\Btheta^* = \arg\max_{\Btheta\in\Theta} \pi^2(\Btheta)\e^{2m_{\xt}(\Btheta) + s_{\xt}^2(\Btheta)}\left(\e^{s_{\xt}^2(\Btheta)} - 1\right). \label{eq:maxv}
\end{equation}
This strategy is abbreviated as \maxv{} which,
in fact, is used by \citet{Gunter2014,Kandasamy2015} in the noiseless case, and it is called ``exponentiated variance'' by \citet{Kandasamy2015}. 
Taking logarithm of \eqref{eq:maxv} shows that this design also features a tradeoff between large posterior and large variance, similarly to \maxiqr{}.

Since these two strategies are not derived from Bayesian decision theory, it is not immediately clear how one should parallelise these inherently sequential strategies. However, it seems reasonable to use the fact the $s^2_{\xt}(\Btheta)$ is always reduced near the pending evaluation locations. Motivated by this and related BO techniques in \citet{Ginsbourger2010,Snoek2012,Desautels2014}, we compute the median value of the design criterion with respect to the posterior predictive distribution of the pending simulations. The next locations are chosen iteratively such that, for \maxiqr{}, the first point $\Btheta_1^*$ in the batch is chosen using \eqref{eq:miqr} and the rest $\Btheta_{2:b}$ by iteratively solving 
\begin{equation}
\Btheta_r^* = \arg\max_{\Btheta\in\Theta} \pi(\Btheta)\e^{m_{\xt}(\Btheta)} \sinh(us_{t+r-1}(\Btheta;\Btheta_{1:r-1}^*)), \quad r=2,3,\ldots,b. \label{eq:miqr_batch}
\end{equation}
%
\maxv{} is parallelised similarly but using the expected value instead of the median.

Finally, we provide some intuition to \eqref{eq:miqr_batch} and show a connection to the local penalisation method used to parallelise sequential BO designs by \citet{Gonzalez2016}. Suppose we are selecting the $r$th point of a batch where $2\leq r \leq b$. 
Comparison of \eqref{eq:miqr} and \eqref{eq:miqr_batch} shows that \eqref{eq:miqr_batch} equals the original design criterion in \eqref{eq:miqr} multiplied by a weight function $\omega(\Btheta;\Btheta^*_{1:r-1}) \eqdef \sinh(us_{t+r-1}(\Btheta;\Btheta^*_{1:r-1})) / \sinh(us_{\xt}(\Btheta))$. It is easy to see that $\omega(\Btheta;\Btheta^*_{1:r-1}) \in [0,1]$. This shows that when we take the median over the log-likelihood evaluation at the pending points $\Btheta^*_{1:r-1}$, we are implicitly making the original acquisition function smaller around the pending points and, consequently, penalising additional evaluations there. This resembles the heuristic method by \citet{Gonzalez2016}, who proposed to multiply the non-negative acquisition function, such as the objective of \eqref{eq:miqr}, with $\prod_{1\leq j<r} \varphi(\Btheta;\Btheta_j^*)$, where $\varphi(\Btheta;\Btheta_j^*)$ are local penalising functions around the pending evaluation locations $\Btheta_j^*$, when selecting the $r$th point $\Btheta_r^*$ of the current batch. 
%
%
However, one difference between these approaches is that our weight function $\omega$ takes the interactions between the pending points into account and it cannot be factorised as $\omega(\Btheta,\Btheta_{1:r-1}^*) = \prod_{1\leq j<r}\varphi(\Btheta;\Btheta_j^*)$. Also, our weight function is not a tuning parameter but follows automatically from our analysis. 
 

\section{Experiments} \label{sec:experiments}

We empirically investigate the performance of the proposed algorithm with different design strategies developed in Section \ref{sec:acq}. 
We compare the sequential, batch, and greedy batch strategies based on \eiv{} and \miiqr{} to sequential and greedy versions of \maxv{} (which is essentially the same as the BAPE method by \citet{Kandasamy2015}) and \maxiqr{}. As a simple baseline we also sample design points from the prior (always uniform) and this method is abbreviated as RAND. 

We report the results as figures whose y-axis shows
the accuracy between the estimated and the ground truth posterior using total variation distance (TV). TV between pdfs $\pi_1$ and $\pi_2$ is defined as
$\text{TV}(\pi_1,\pi_2) = 1/2\int_{\Theta}|\pi_1(\Btheta) - \pi_2(\Btheta)|\ud\Btheta$ and is computed using numerical integration in 2D. In higher dimensional cases we compute the average TV between the marginal posterior densities using MCMC samples. The marginal median estimator in \eqref{eq:point_estim_norm} is used to obtain the point estimate for the posterior pdf. The x-axis represents the iteration $\iter$ of the Algorithm \ref{alg:gp_sl_alg} (unless explicitly stated otherwise) which serves as a proxy to the total wall-time when the noisy likelihood evaluations are assumed to dominate the total computational cost. 
We use a fixed simulation budget so that the batch-sequential methods terminate earlier than the sequential ones because they spend the evaluation budget $b$ times faster due to the parallel computation. 

We consider two sets of experiments: toy models where noisy log-likelihood evaluations are directly evaluated (Section \ref{sec:synth_experiments}) and real-world simulator-based statistical models where SL is used to obtain noisy log-likelihood evaluations using $N$ repeated simulations at each proposed parameter (Section \ref{sec:real_example}). Although in the SL case it is often possible to adjust $N$ adaptively, we use $N=100$ (unless explicitly stated otherwise) for simplicity. 
In the \appe{} \ref{subsec:N_parallellisation} we show that our batch methods are beneficial for SL even though in principle it would be possible to directly parallelize the $N$ simulations themselves. 
For example, when $1,000$ computer cores are available for the simulations (e.g.~in a high performance computing cluster), 
it is beneficial to use the batch strategies (e.g.~$N=100$ and $b=10$) instead parallelizing the evaluations at a single location ($N=1000$ and $b=1$). 
More elaborate analysis of resource allocation is left for future work. 
%
A MATLAB implementation of our algorithms is available at \url{https://github.com/mjarvenpaa/parallel-GP-SL}.

\subsection{Noisy toy model likelihoods} \label{sec:synth_experiments}

We first define three 2D densities with different characteristics: a simple Gaussian density called 'Simple', a banana-shaped density 'Banana' and a bimodal density 'Bimodal'. We then construct three 6D densities so that their 2D blocks are independent and have the corresponding 2D densities as their 2D marginals. Detailed specification and illustrations can be found in \appe{} \ref{subsec:app_2d}. We use the same names for the 6D densities as for the corresponding 2D ones (except that 'Bimodal' is called 'Multimodal' in 6D because it has $2^3=8$ modes). The independence assumption is not taken into account in the GP model to make the inference problem more challenging. 
For simplicity, $\sigma_n(\Btheta)$ is assumed constant i.e.~it does not depend on the magnitude of the log-likelihood and its value is obtained using MAP estimation together with other GP hyperparameters $\Bphi$ at each iteration $\iter$ in Algorithm \ref{alg:gp_sl_alg}.
%
As an initial design in 6D we generate $b_0=20$ parameters ('Simple') or $b_0=50$ ('Banana' and 'Multimodal') from uniform priors. In the 2D case we always use $b_0=10$. We use a fixed total budget of $t=620$ noisy log-likelihood evaluations ('Simple') or $t=650$ ('Banana' and 'Multimodal') for both sequential and batch methods in 6D.

The results with different sequential and greedy batch-sequential strategies in 6D case with batch size $b=5$ are shown in Figure~\ref{fig:synth_res6d}. 
Good posterior approximations for the Simple example are obtained earlier than for the two other models. This is a consequence of the quadratic terms in the GP prior mean function and the exact Gaussian shape of the posterior. However, more complicated posteriors are also estimated accurately although more iterations are needed to obtain reasonable approximations. The \miiqr{} method works clearly the best outperforming \eiv{} and the heuristic \maxv{} and \maxiqr{} methods which either need more iterations to obtain good approximation or fail completely to reach good results. The uniform design RAND works adequately in the Simple and Banana models but often produces poor estimates for the Multimodal case. Unsurprisingly, its performance is also poor in the real-world scenarios in Section \ref{sec:real_example}.

\begin{figure*}[!htb]
\centering
\includegraphics[width=0.99\textwidth]{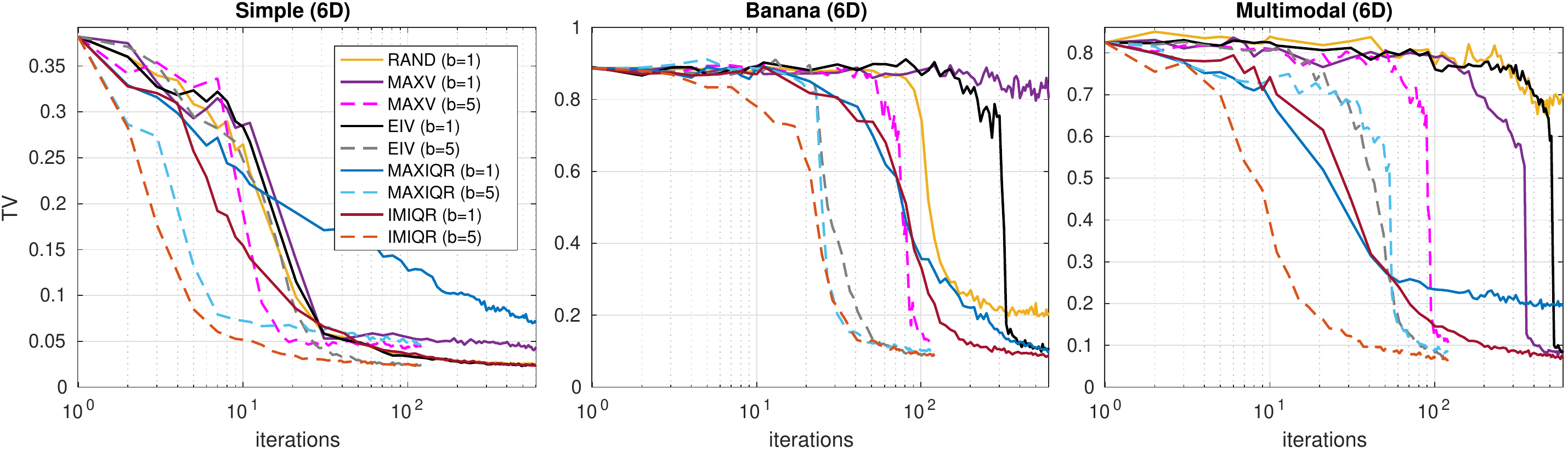}
\caption{Results for the 6D toy densities. The lines show the median TV over $50$ repeated simulations. Note that x-axis is on log-scale and the maximum number of iterations for the sequential methods is $i=600$ and for batch methods $i=120$. 
}
\label{fig:synth_res6d}
\end{figure*}

The batch-sequential strategies improve the convergence speed as compared to the corresponding sequential strategies in all cases of Figure~\ref{fig:synth_res6d}. In particular, the greedy batch versions of \maxv{} and \maxiqr{} even outperform the corresponding sequential methods. The greedy batch strategy in these cases encourages exploration as compared to the corresponding sequential strategy and this effect counterbalances the exploitative nature of \maxv{} and \maxiqr{}. 
In \appe{} \ref{subsec:expextratwod} we compare the joint and greedy batch strategies in 2D case where joint maximisation is still feasible. Their difference is found small for \miiqr{} and small or moderate for \eiv{} suggesting that the greedy strategies are in practice nearly optimal. 

Figure~\ref{fig:acq_points_demo_part2} and further examples in the \appe{} \ref{subsec:expextratwod} illustrate the design points and estimated posteriors for various design strategies in 2D case. 
An important observation is that \maxv{} and \miiqr{} are exploitative i.e.~they produce points near the mode of the posterior where the local measure of uncertainty they use tend to be highest. Also, in general, the sequential and batch methods produce similar designs. However, greedy \maxiqr{} generates more points on the boundary than the corresponding sequential strategy and the joint \miiqr{} produces slightly more diverse design points as the sequential and greedy batch \miiqr{}. In all cases, \miiqr{} avoids redundant evaluations on the boundaries.

\begin{figure*}[ht!]
\centering
\includegraphics[width=0.90\textwidth]{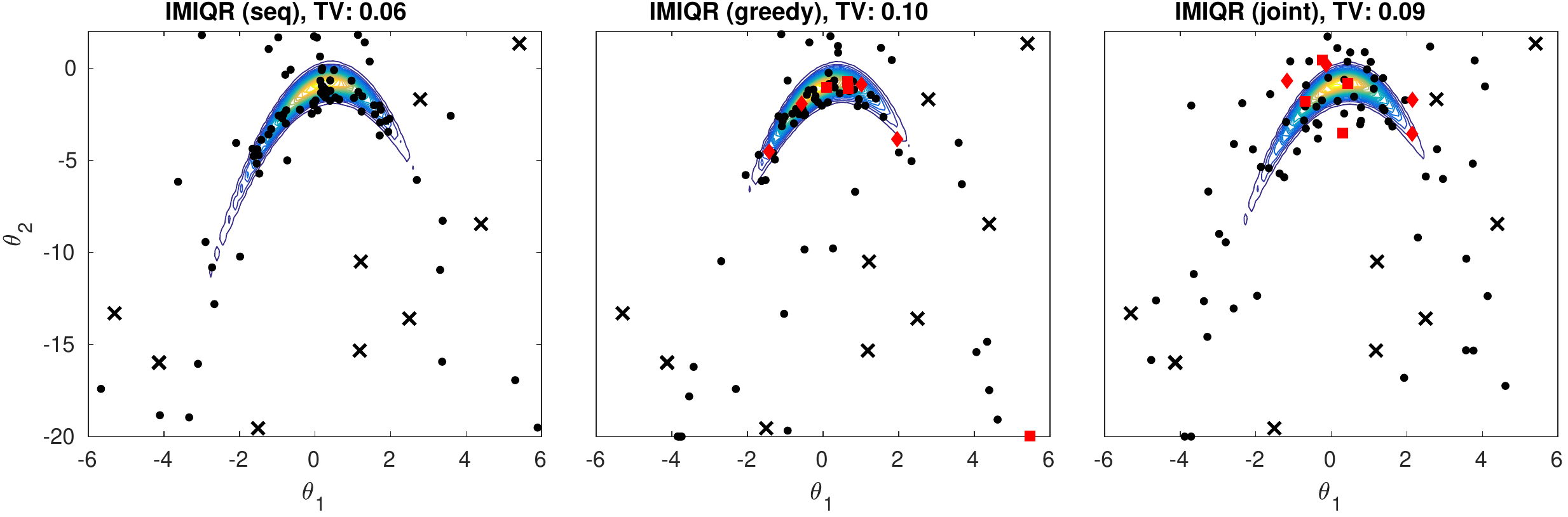}
\caption{The design locations for \miiqr{} are shown after $90$ noisy log-likelihood evaluations of the 2D Banana example with noise level $\sigma_n = 1$. The black crosses show the $b_0=10$ initial evaluations, black dots show obtained design points except for the last two batches, and the red squares and diamonds show the last two batches.}
\label{fig:acq_points_demo_part2}
\end{figure*}

We investigate the effect of batch size $b$ in the greedy batch \maxiqr{} and \maxiqr{} algorithms in Figure~\ref{fig:synth_res_batch}.
In general, the convergence speed of both methods scales well as a function of $b$ and $b=10$ already yields useful improvements. However, increasing $b$ over $40$ would improve the results only slightly. Greedy batch \miiqr{} works overall better than batch \maxiqr{}. The variability in the posterior approximations produced by \miiqr{} is small in all cases unlike for \maxiqr{} which occasionally produced poor approximations (not shown for clarity). 
%
In the \appe{} \ref{subsec:expextratwod} we compare \eiv{} and \miiqr{} in 2D. 
These results show that the greedy \miiqr{} batch-sequential strategy outperforms the corresponding \eiv{} strategy although their difference is small in the corresponding sequential cases. This suggests that, even when $\sigma_n$ is small so that the variance in \eiv{} serves as a reasonable measure of uncertainty and the sequential \eiv{} works similarly to \miiqr{}, the greedy batch median-based \miiqr{} design strategy better mimics the sequential decisions than \eiv{}.

\begin{figure*}[!htbp]
\centering
\includegraphics[width=0.95\textwidth]{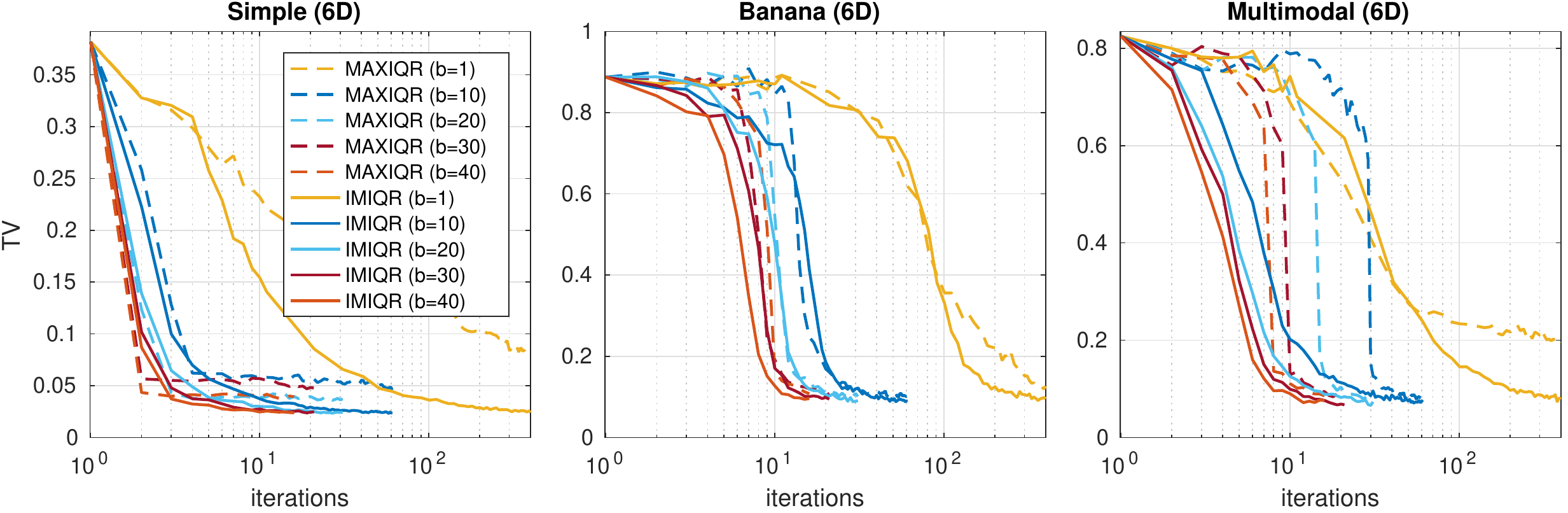}
\caption{Results with greedy batch strategies and varying batch size $b$ for the 6D toy models. For each method, the median TV computed over $50$ repeated simulations is shown. 
The x-axis is truncated after $\iter=400$ iterations to ease visualisation.}
\label{fig:synth_res_batch}
\end{figure*}

\subsection{Simulation models} \label{sec:real_example}

We perform experiments with three benchmark problems used previously in the ABC literature. Two of these are shown here and the third one in the \appe{} \ref{subsec:lorenz}. 
While the proposed methodology is particularly useful for expensive simulation models, we however consider only relatively cheap models as this allows to repeat the computations many times with different realisations of randomness to assess the variability and robustness, and to conduct accurate comparisons to reasonable ground truth posteriors. Nevertheless, these experiments serve as examples of challenging real-world inference scenarios where the GP and SL modelling assumptions do not hold exactly. 
In each problem, we set the unknown parameter of the simulation model to a value used previously in the literature and generated one data set from the simulation model using this ``true'' parameter. The posterior used as the ground truth was computed using \slmcmc{}. Multiple chains each with length $10^6$ were used to ensure that the variability due to Monte Carlo error was small.

\subsubsection{Ricker model} \label{sec:ricker}

We first consider the Ricker model presented in \citet{Wood2010}. In this model $N_t$ denotes the number of individuals in a  population at time $t$ which evolves according to the discrete time stochastic process
$
N_{t+1} = r N_{t} \exp(-N_{t} + \epsilon_t), 
$
for $t=1,\ldots,T$, where $\epsilon_t \simiid \Normal(0,\sigma^2_{\epsilon})$. The initial population size is $N_{0}=1$. 
It is assumed that only a noisy measurement $x_t$ of the population size $N_t$ at each time point is available with the Poisson observation model
$
x_t \cond N_{t}, \phi \sim \Poi(\phi N_{t}).
$
Given data $\Bx = (x_t)_{t=1}^T$, the goal is to infer the three parameters $\Btheta=(\log(r),\phi,\sigma_{\epsilon})$.
%
We use the uniform prior $(\log(r),\phi,\sigma_{\epsilon}) \sim \Unif([3,5]\times[4,20]\times[0,0.8])$. The same $13$ summary statistics as in \citet{Wood2010,Gutmann2016,Price2018} are used to compute log-SL evaluations. The number of repeated simulations is fixed to $N=100$. The ``true'' parameter to be estimated is $\Btheta_{\text{true}}=(3.8,10,0.3)$ and it is used to generate the observed data with length $T=50$. The initial training data size is $b_0=30$ and the additional budget of simulations is $600$ so that the total budget is $630$ SL evaluations corresponding $63000$ simulations. The integrals of \eiv{} and \miiqr{} are approximated using IS and $\sigma_n^2$ is estimated using bootstrap as described in Section \ref{sec:num_details}. 

Figure~\ref{fig:ricker} shows the results. We see that the \eiv{} and \maxv{} strategies perform poorly. These strategies tend to evaluate where the variance of the posterior is high, although as discussed in Section \ref{sec:acq}, these do not necessarily correspond to the regions with non-negligible likelihood. In fact, the magnitude of the log-likelihood and its noise variance $\sigma_n^2$ grow fast near the boundaries of the parameter space where the chaotic nature of the model also makes the log-likelihood surface irregular which further causes difficulties with GP modelling. 
The \miiqr{} method again produces the best posterior approximations which are comparable to the true SL posterior. Some examples are shown in Figure~\ref{fig:ricker}b-d. Also, \maxiqr{} method works well on average but it produces less coherent results than \miiqr{} which is likely the result of its exploitative nature. In addition, unexpectedly, the greedy \maxiqr{} method performs poorly. The probable reason is that the batch evaluations become too diverse having many evaluations in the boundary which leads to poor GP fitting and subsequently poor future designs. 
However, the robust batch-sequential \miiqr{} method with $b=5$ works as expected producing useful improvement to the convergence speed as compared to sequential \miiqr{}. 

\begin{figure*}[htbp]
\centering
\includegraphics[width=1\textwidth]{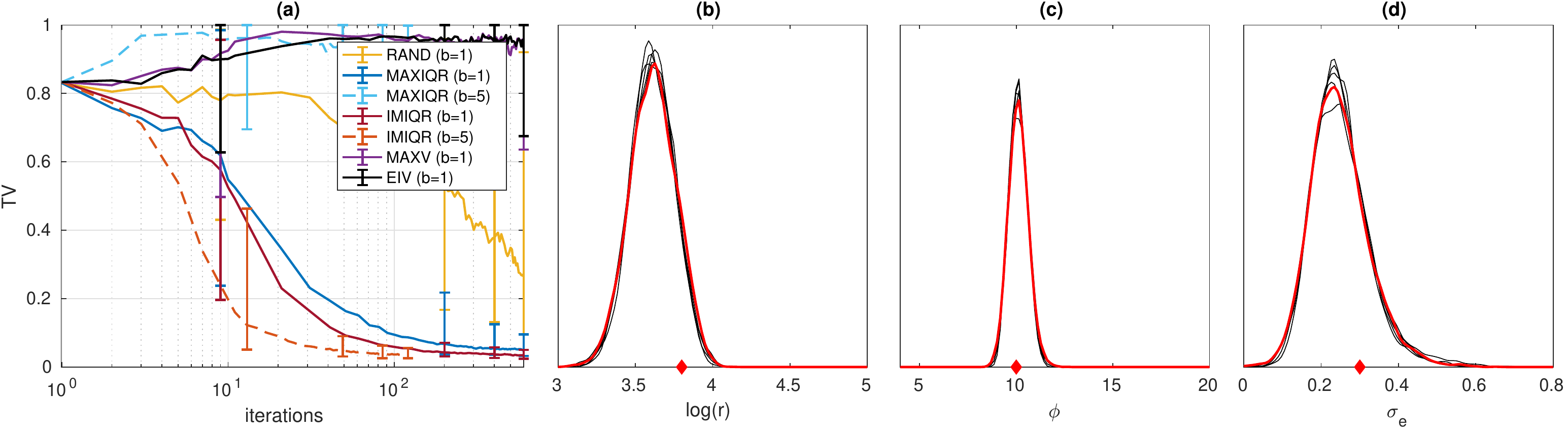}
\caption{Results for the Ricker model. (a) The median TV and $90$\% variability interval over $100$ repeated runs of the algorithms. (b-d) Estimated posterior marginal densities (black) shown for five typical runs of the algorithm with the greedy batch \miiqr{} strategy. The ground truth computed with SL-MCMC (red) and the true parameter value (red diamond) are also shown for comparison.}
\label{fig:ricker}
\end{figure*}

\subsubsection{g-and-k model} \label{sec:gk}

We consider the g-and-k distribution as in \citet{Price2018}. The g-and-k model is a flexible probability distribution defined via its quantile function
\begin{equation}
Q(\Phi^{-1}(p);\Btheta) = a + b\left( 1 + c\frac{1-\exp(-g\Phi^{-1}(p))}{1+\exp(-g\Phi^{-1}(p))} \right)(1+(\Phi^{-1}(p))^2)^k \Phi^{-1}(p), \label{eq:gk_eq}
\end{equation}
where $a,b,c,g$ and $k$ are parameters and $p\in[0,1]$ is a quantile. 
%
We fix $c=0.8$ and estimate the parameters $\Btheta = (a,b,g,k)$ using a uniform prior $\pi(\Btheta) = \Unif([2.5, 3.5]\times[0.5, 1.5]\times[1.5, 2.5]\times[0.3, 0.7])$. We use the same four summary statistics as \citet{Price2018} who fitted an auxiliary model, skew \emph{t}-distribution, to the set of samples generated from \eqref{eq:gk_eq} using maximum likelihood, and took the resulting skew \emph{t} score vector at the ML estimate as the summary statistic. Although there are only $4$ summary statistics, we again use $N=100$. We use the same settings as for the Ricker model except that the initial design is increased to $b_0=40$ so that the total budget is $640$ SL evaluations. The true value of the parameter is chosen to be $\Btheta_{\text{true}} = (3,1,2,0.5)$. 

Overall, the results in Figure~\ref{fig:gk} are similar to those of the Ricker model. However, the larger parameter space slows down the convergence speeds initially as expected, as compared to the Ricker model. Low dimension of the summary statistic and the moderately large value $N=100$ cause the log-likelihood evaluations to be quite accurate near the modal area of the likelihood ($\sigma_n(\Btheta_{\text{true}}) \approx 0.15$) and we expect that smaller $N$ might be already enough. However, using $N=100$ ensures accurate variance estimates using the bootstrap. 
%
%
While \maxiqr{} strategy works almost as well as \miiqr{} on average, it completely fails in some individual repeated experiments producing long variability intervals in Figure~\ref{fig:gk} leaving \miiqr{} as the only successful method. 

\begin{figure*}[!htb]
\centering
\includegraphics[width=1\textwidth]{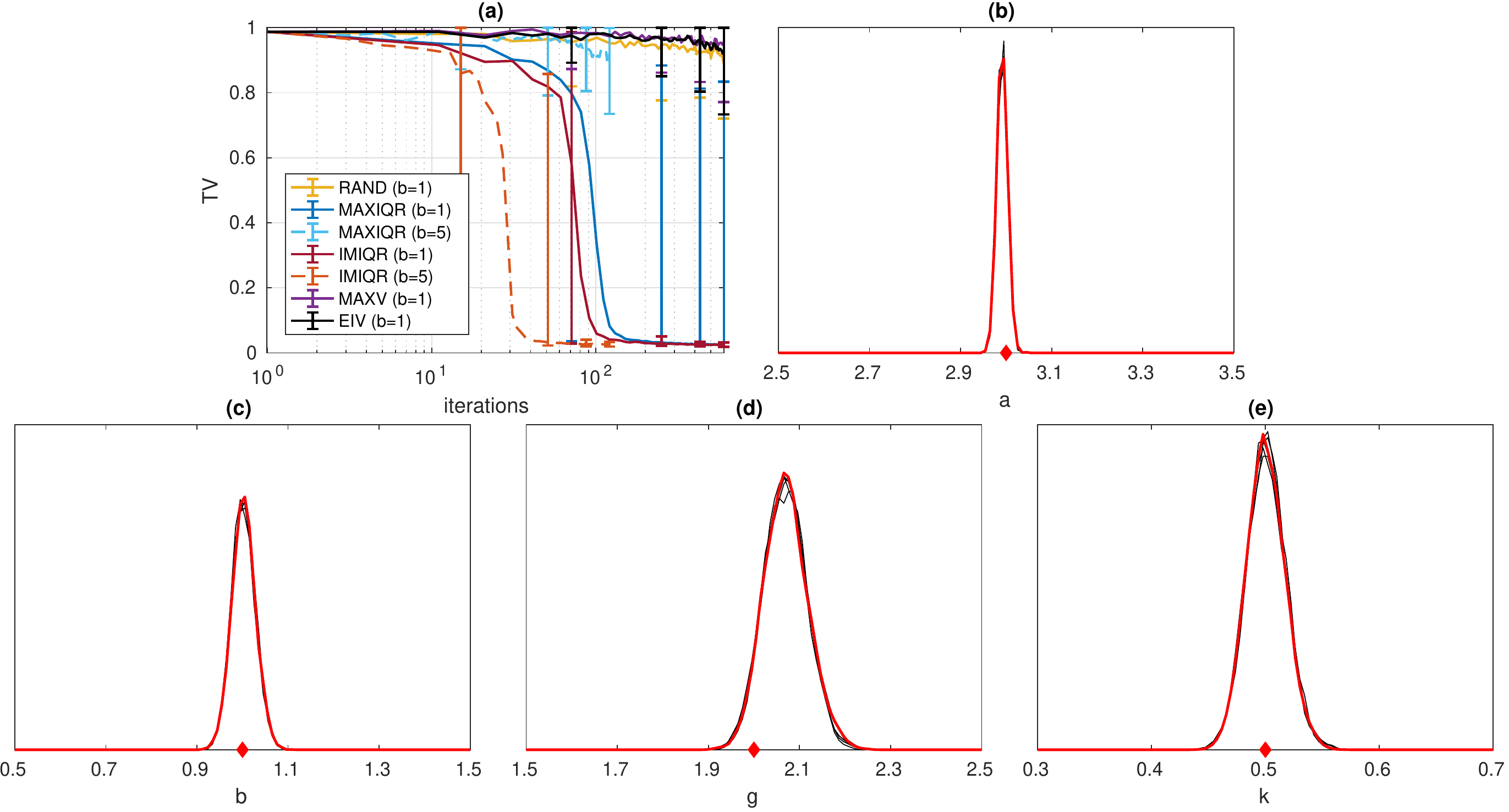}
\caption{(a) Results for the g-and-k model. (a) Median TV and $90$\% variability interval over $100$ repeated runs. (b-e) Some estimated posterior marginal densities illustrated as in Figure~\ref{fig:ricker}.}
\label{fig:gk}
\end{figure*}

\subsubsection{Effect of batch size} \label{sec:res_real_batch}

As the last experiment, we investigate the improvements brought by the batch-sequential \miiqr{} strategy in the case of real-world simulation models. We use the Ricker and g-and-k models from the previous subsections. The experiment details are the same except that we consider only \miiqr{} strategy with several batch sizes $b\in\{2,5,10,20,30\}$. 
%
The results in Figure~\ref{fig:extra_batch} show that, on average, the greedy batch-sequential \miiqr{} with batch sizes up to $30$ produces as good approximations as the corresponding sequential strategy. The convergence speed is also improved almost linearly. 

However, the variability in the quality of the estimated posteriors increases with larger batch sizes when the total budget of simulations is kept fixed. While most of the repeated runs of the algorithm have converged to excellent approximations in all cases as seen in Figure~\ref{fig:extra_batch}, there were some individual runs where the algorithm did not yet converge when the budget was used. The posterior estimate at the final iteration is often quite poor in these cases. Most of these happen with Ricker model when $b\geq 20$ and with g-and-k model when $b = 30$. 
However, this behaviour is not surprising: When $b$ is large, the complete batch is constructed using the same limited information which necessarily produces occasional poor batches providing little information. Furthermore, the importance density in \eqref{eq:is_approx} is likely to get worse when the batch size is increased and cause the last points in the batch to be less useful. It is thus inevitable that the batch size should not be chosen too large. Nevertheless, it is seen that batch size $b=10$ already produces substantial gains (especially considering that further parallelisation is often possible with respect to $N$) and produces consistently accurate posterior approximations.

\begin{figure*}[!htbp]
\centering
\includegraphics[width=0.9\textwidth]{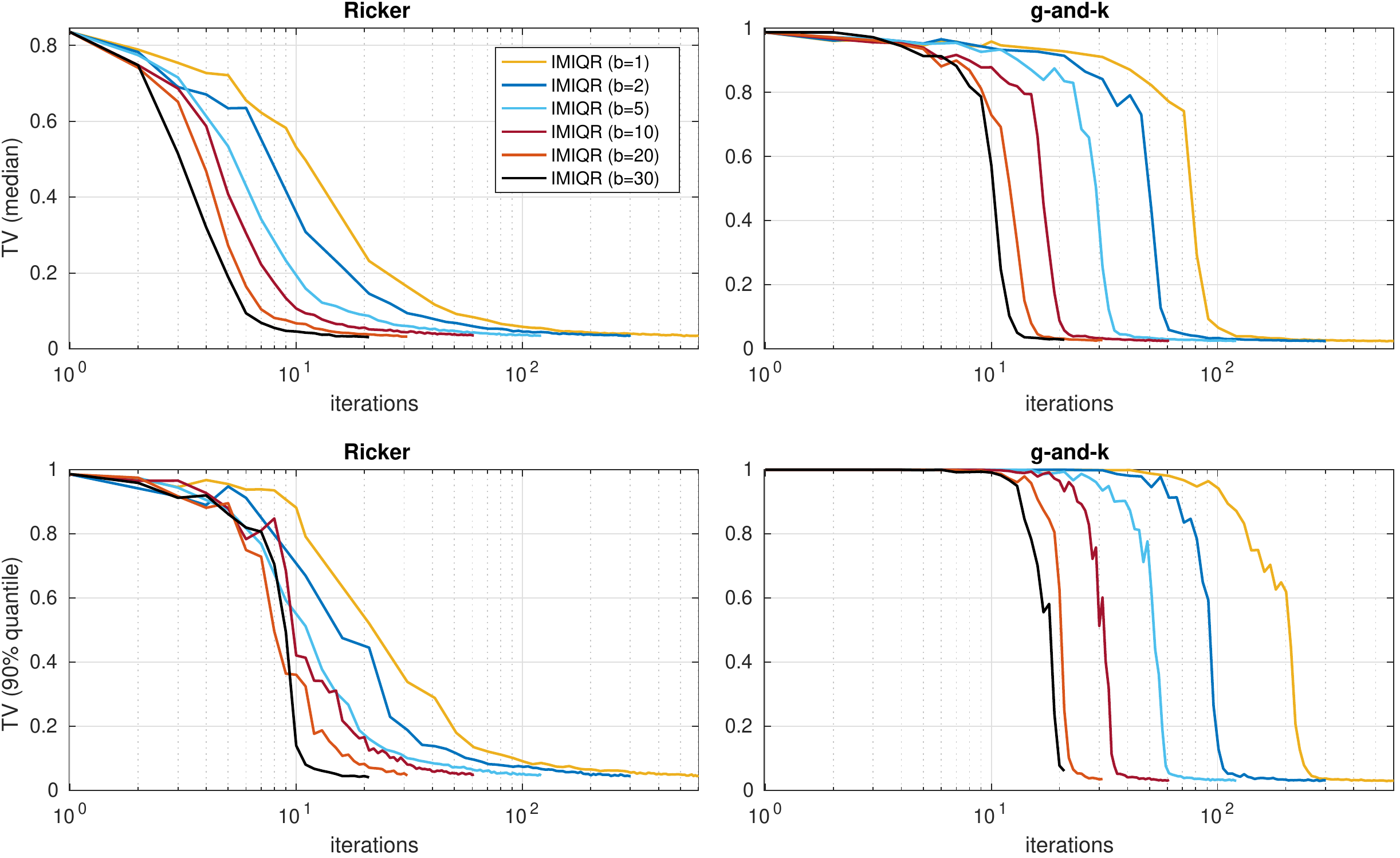}
\caption{Results of the greedy batch strategy \miiqr{} with various batch sizes $b$ for Ricker and g-and-k models. Top row shows the median TV 
over $100$ repeated experiments and the bottom row shows the corresponding 90\% quantile. 
}
\label{fig:extra_batch}
\end{figure*}

\section{Discussion and conclusions} \label{sec:discussion}

If only a limited number of noisy log-likelihood evaluations can be computed, standard techniques such as MCMC become difficult to use for Bayesian inference. To tackle the problem, we constructed a hierarchical GP surrogate model for the noisy log-likelihood and discussed properties of the resulting estimators of the (unnormalised) posterior. 
We developed two batch-sequential strategies (\eiv{} and \miiqr{}) based on Bayesian decision theory, to (semi-)optimally select the next evaluation locations and to parallelise the costly simulations. We also considered heuristic design strategies (\maxv{} and \maxiqr{}). We provided some theoretical analysis: We derived an approximate bound for the greedy optimisation of the batch \miiqr{} method using the concept of weak submodularity, showed a connection between the UCB (a common BO method) and the \maxiqr{} strategy, and between batch \maxiqr{} and the local penalisation method by \citet{Gonzalez2016}. The proposed methods were investigated experimentally. 

The \miiqr{} strategy was found to be robust both to violations of the GP surrogate model assumptions and to the heavy-tails of the resulting distributions. Unlike the other design strategies, it consistently produced posterior approximations comparable to the ground truth. Greedy batch-sequential \miiqr{} strategy was found to be highly useful to parallelise the potentially expensive simulations. In our experiments it produced substantial, sometimes even linear, speed improvements for batch sizes $b\lesssim 20$. 
We thus recommend the \miiqr{} strategy.  
In general we were able to obtain useful posterior approximations with $10,000$ to $20,000$ simulations that can be easily parallellised. This is considerably less than e.g.~using (pseudo-marginal) MCMC requiring typically at least tens of thousands of iterations corresponding to millions of simulations, careful convergence assessment and tuning of the proposal density. 
Another important observation was that the heuristic strategies that evaluate where the current uncertainty is highest, despite their small computational cost and good performance in earlier studies with deterministic evaluations, worked poorly with noisy log-likelihood evaluations.

Similarly to other GP surrogate techniques such as BO, fitting the GP and finding the next evaluation locations by optimising the design criterion is however not free. 
Our unoptimised MATLAB implementations of \maxv{} and \maxiqr{} are fast, but optimisation of the \eiv{} and \miiqr{} design criteria takes a couple of seconds in 2D and around $20$ to $80$ seconds per parameter in 3D and 4D. This means that the proposed algorithm is useful when the simulation time is several seconds or more, which is however true with many real-world simulation models. 
Furthermore, the quality of the posterior approximation also depends on the choice of the surrogate model. We used the same GP model in all of our experiments with no problem-specific tuning, which already produced good results. However, some problems would certainly benefit from further adjustment and incorporation of domain knowledge. For example, if the likelihood is expected to be flat, a GP prior with a constant mean function might be appropriate.

In this work, similarly to \citet{Rasmussen2003,Wilkinson2014,Kandasamy2015,Gutmann2016,Drovandi2018}, we built our surrogate GP model for the log-likelihood. 
An alternative way would be to model the summary statistics. \citet{Meeds2014} used such an approach but they assumed that the summary statistics are independent. However, modelling the scalar-valued log-likelihood is simpler and our approach also applies as such to non-ABC scenarios with exact (but potentially expensive) log-likelihood evaluations as in \citet{Osborne2012,Kandasamy2015,Wang2018,Acerbi2018}. \citet{Gutmann2016,Jarvenpaa2018_aoas,Jarvenpaa2018_acq} modelled the discrepancy between simulated and observed data with a GP and obtained reasonable posterior approximations with only a few hundred model simulations. Here we need $N$ repeated simulations just to compute the log-likelihood for a single parameter value, which can be seen as the price of not having to specify an explicit discrepancy measure and the ABC tolerance. 

We see several avenues for future research. 
%
The consistency and convergence rates of our algorithms could be investigated theoretically. Some work towards that direction has been done by \citet{Bect2018,Stuart2018}. 
Adaptive control of the number of repeated simulations $N$ could likely be used to further reduce the number of simulations required, possibly as in \citet{Picheny2013}. 
Some simulation models may behave unexpectedly near the boundaries of the parameter space violating GP model assumptions as we saw with the Ricker model in Section \ref{sec:experiments}. Similarly, situations where the prior is significantly more diffuse than the posterior may be unsuitable for our approach that relies on a global GP surrogate. Consequently, it would be useful to learn adaptively not only where to evaluate next but also which parameter regions to rule out completely. This could be done as in \citet{Wilkinson2014} or possibly by adapting ideas from the constrained BO literature \citep{Gardner2014,Sui2015}.

\subsection*{Acknowledgements}

The authors are grateful to an associate editor and two anonymous reviewers for their constructive feedback that helped to significantly improve this article. 
This work was funded by the Academy of Finland (grants no. 286607 and 294015 to PM). We acknowledge the computational resources provided by Aalto Science-IT project.

\newpage
{
\subsection*{References}
\renewcommand{\section}[2]{}
\bibliography{bib/refs4}
\bibliographystyle{plainnat}
}

\appendix

\section{Proofs} \label{app:proofs}

We first show that the (marginal) median minimises the expected $L^1$ loss defined in Section \ref{sec:estimators} and then derive the corresponding Bayes risk in \eqref{eq:loneloss}. 
The expected $L^1$ loss can be written similarly to \eqref{eq:bayes_risk_ltwo} but with the absolute value in place of the quadratic term. It then again follows from the basic results of Bayesian decision theory that the integrand, and thus also the original formula, is minimised when $\tilded(\Btheta) = \med_{\lik\cond D_{\xt}}(\tildepiapprox_{\lik}(\Btheta))=\pi(\Btheta)\exp(m_{\xt}(\Btheta))$ for (almost all) $\Btheta\in\Theta$. 

To derive the formula for the Bayes risk, we fix $\Btheta$ and shorten the  notation so that $\lik_{\Btheta}=\lik(\Btheta), m_{\Btheta}=m_{\xt}(\Btheta)$ and $s_{\Btheta}=s_{\xt}(\Btheta)$. Then we obtain
\begin{align}
\mean_{\lik\cond D_{\xt}}(|\e^{\lik_{\Btheta}}-\e^{m_{\Btheta}}|) 
&= \int_{-\infty}^{\infty}|\e^{\lik_{\Btheta}}-\e^{m_{\Btheta}}|\frac{1}{\sqrt{2\pi s_{\Btheta}^2}}\e^{\frac{1}{2s_{\Btheta}^2}(\lik_{\Btheta} - m_{\Btheta})^2} \ud\lik_{\Btheta} \\
\begin{split}
&= \int_{-\infty}^{m_{\Btheta}} (\e^{m_{\Btheta}}-\e^{\lik_{\Btheta}})\frac{1}{\sqrt{2\pi s^2_{\Btheta}}}\e^{\frac{1}{2s_{\Btheta}^2}(\lik_{\Btheta} - m_{\Btheta})^2} \ud\lik_{\Btheta} \\ 
%
&\myquad+ \int_{m_{\Btheta}}^{\infty} (\e^{\lik_{\Btheta}}-\e^{m_{\Btheta}})\frac{1}{\sqrt{2\pi s_{\Btheta}^2}}\e^{\frac{1}{2s_{\Btheta}^2}(\lik_{\Btheta} - m_{\Btheta})^2} \ud\lik_{\Btheta} 
\end{split} \\
&= \frac{\e^{m_{\Btheta}}}{2} - \int_{-\infty}^{m_{\Btheta}} \frac{\e^{\lik_{\Btheta}}}{\sqrt{2\pi s_{\Btheta}^2}}\e^{\frac{1}{2s_{\Btheta}^2}(\lik_{\Btheta} - m_{\Btheta})^2} \ud\lik_{\Btheta}
+ \int_{m_{\Btheta}}^{\infty} \frac{\e^{\lik_{\Btheta}}}{\sqrt{2\pi s_{\Btheta}^2}}\e^{\frac{1}{2s_{\Btheta}^2}(\lik_{\Btheta} - m_{\Btheta})^2} \ud\lik_{\Btheta} - \frac{\e^{m_{\Btheta}}}{2} \\
&= 2 \int_{m_{\Btheta}}^{\infty} \frac{\e^{\lik_{\Btheta}}}{\sqrt{2\pi s_{\Btheta}^2}}\e^{\frac{1}{2s_{\Btheta}^2}(\lik_{\Btheta} - m_{\Btheta})^2} \ud\lik_{\Btheta} - \mean_{\lik\cond D_{\xt}}(\e^{\lik_{\Btheta}}) \\
&= 2 \e^{m_{\Btheta}+s_{\Btheta}^2/2} \underbrace{\int_{m_{\Btheta}}^{\infty} \frac{1}{\sqrt{2\pi s_{\Btheta}^2}} \e^{\frac{1}{2s_{\Btheta}^2}(\lik_{\Btheta} - (m_{\Btheta} + s_{\Btheta}^2))^2} \ud\lik_{\Btheta}}_{=1-\Phi\left( \frac{m_{\Btheta}-(m_{\Btheta}+s_{\Btheta}^2)}{s_{\Btheta}} \right) = \Phi(s_{\Btheta})} - \underbrace{\mean_{\lik\cond D_{\xt}}(\e^{\lik_{\Btheta}})}_{=\e^{m_{\Btheta}+s_{\Btheta}^2/2}} \\
&= \e^{m_{\Btheta}+s_{\Btheta}^2/2}(2\Phi(s_{\Btheta})-1),
\end{align}
where on the fifth line we have completed the square and used the moment-generating function $M_{z}(t) \eqdef \mean(\e^{tz}) = \exp(t\mu + \sigma^2t^2/2)$ of the Gaussian distribution $z\sim\Normal(\mu,\sigma^2)$. 
The desired result follows by multiplying the above with prior density $\pi(\Btheta)$ and integrating the resulting formula over $\Theta$. 
%

Next we show a result from matrix algebra that we need in the following several times.
%
\begin{lemma} \label{lemma:0}
Suppose $\BX_1,\BX_2,\BA,\BB,\BC,\BD,\BY_1$ and $\BY_2$ are such matrices that the equation below is well-defined, that is, the sizes of the matrices are correct and all the required inverses exist. Then
\begin{align}
\begin{bmatrix}
\BX_1 & \BX_2
\end{bmatrix}\!\begin{bmatrix}
\BA & \BB \\ \BC & \BD
\end{bmatrix}^{-1}\!\begin{bmatrix}
\BY_1 \\ \BY_2
\end{bmatrix} 
= \BX_1\BA^{-1}\BY_1 - (\BX_2-\BX_1\BA^{-1}\BB)[\BD-\BC\BA^{-1}\BB]^{-1}(\BY_2-\BC\BA^{-1}\BY_1).
\end{align}
\end{lemma}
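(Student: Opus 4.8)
The statement is a block-matrix inversion identity, essentially the Schur complement formula in disguise. The plan is to invert the $2\times 2$ block matrix explicitly using the Schur complement of the $(1,1)$ block $\BA$, then multiply out by $[\BX_1\ \BX_2]$ on the left and $[\BY_1;\BY_2]$ on the right and collect terms.

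First I would recall the standard block-inverse formula. Writing $\BM = \begin{bmatrix}\BA & \BB \\ \BC & \BD\end{bmatrix}$ and letting $\BE \eqdef \BD - \BC\BA^{-1}\BB$ denote the Schur complement of $\BA$ in $\BM$ (which is invertible by hypothesis, since the statement requires all needed inverses to exist), one has
\begin{equation}
\BM^{-1} = \begin{bmatrix}
\BA^{-1} + \BA^{-1}\BB\BE^{-1}\BC\BA^{-1} & -\BA^{-1}\BB\BE^{-1} \\
-\BE^{-1}\BC\BA^{-1} & \BE^{-1}
\end{bmatrix}.
\end{equation}
This can be verified directly by multiplying $\BM\BM^{-1}$ and checking the four blocks reduce to $\onematrix$ and $\Bzeros$ appropriately; alternatively one cites it as the standard formula (see e.g.~the references on GP computations already used in the paper).

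Next I would carry out the multiplication $[\BX_1\ \BX_2]\BM^{-1}[\BY_1;\BY_2]$. The left product $[\BX_1\ \BX_2]\BM^{-1}$ has first block
$\BX_1\BA^{-1} + \BX_1\BA^{-1}\BB\BE^{-1}\BC\BA^{-1} - \BX_2\BE^{-1}\BC\BA^{-1} = \BX_1\BA^{-1} - (\BX_2 - \BX_1\BA^{-1}\BB)\BE^{-1}\BC\BA^{-1}$
and second block $-\BX_1\BA^{-1}\BB\BE^{-1} + \BX_2\BE^{-1} = (\BX_2 - \BX_1\BA^{-1}\BB)\BE^{-1}$. Then right-multiplying by $[\BY_1;\BY_2]$ gives
\begin{align}
\begin{split}
&\BX_1\BA^{-1}\BY_1 - (\BX_2 - \BX_1\BA^{-1}\BB)\BE^{-1}\BC\BA^{-1}\BY_1 + (\BX_2 - \BX_1\BA^{-1}\BB)\BE^{-1}\BY_2 \\
&\quad = \BX_1\BA^{-1}\BY_1 - (\BX_2 - \BX_1\BA^{-1}\BB)\BE^{-1}(\BC\BA^{-1}\BY_1 - \BY_2),
\end{split}
\end{align}
which upon substituting $\BE = \BD - \BC\BA^{-1}\BB$ and flipping the sign of the last factor is exactly the claimed identity.

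There is no real obstacle here — the proof is a routine computation. The only care needed is bookkeeping: keeping the non-commutativity of matrix multiplication straight (so that $\BX$-factors stay on the left and $\BY$-factors on the right), and noting that the hypothesis ``all required inverses exist'' is precisely what licenses the use of $\BA^{-1}$ and $\BE^{-1}$. I would present the verification of the block-inverse formula compactly (or cite it) and devote the bulk of the write-up to the collection of terms in the final multiplication.
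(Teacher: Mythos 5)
Your approach---inverting the $2\times 2$ block matrix via the Schur complement of $\BA$ and then multiplying out---is the standard way to establish this identity, and the paper itself gives no proof of Lemma~\ref{lemma:0} (it is stated as a known matrix-algebra fact and then used repeatedly, e.g.\ in the proof of Lemma~\ref{lemma:1}), so there is nothing to compare at the level of method. Your block-inverse formula and the two blocks of $[\BX_1\ \BX_2]\BM^{-1}$ are all correct.

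The problem is the last step. Carried through correctly, your algebra yields
\begin{equation}
\BX_1\BA^{-1}\BY_1 + (\BX_2-\BX_1\BA^{-1}\BB)[\BD-\BC\BA^{-1}\BB]^{-1}(\BY_2-\BC\BA^{-1}\BY_1),
\end{equation}
with a \emph{plus} sign on the second term: when you rewrite the combination as $-(\BX_2-\BX_1\BA^{-1}\BB)[\BD-\BC\BA^{-1}\BB]^{-1}(\BC\BA^{-1}\BY_1-\BY_2)$ and then ``flip the sign of the last factor'', the leading minus must flip to a plus at the same time. The statement as printed has a minus there and is in fact false as written: taking every block and vector to be the scalar $1$ except $\BB=\BC=0$ gives a left-hand side of $2$ but a printed right-hand side of $0$. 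So the identity your computation actually produces is the correct one, the displayed lemma contains a sign typo, and your closing sentence forces agreement with the typo instead of flagging it. Note that the paper's own later use of the lemma is consistent with the plus version---in the computation of $\BR_{0*}(\Btheta_{\bu})$ in the proof of Lemma~\ref{lemma:1} the product sits inside $\BH_{\bu}-(\cdots)$, and distributing that outer minus over the plus version reproduces exactly the two minus signs the paper writes. Your proof is otherwise complete; just state and conclude the identity with the correct sign rather than the printed one.
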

%

\begin{proof}[Proof of Lemma \ref{lemma:1}]
The proof is rather straightforward but laborious. To shorten notation, we rename the training data $\Btheta_{1:t}$ as $\Btheta_0$ and the test point $\Btheta$ as $\Btheta_{\bu}$ and we change the subscripts of various matrices appearing in the GP formulas similarly. With this compact notation, the GP formulas in \eqref{eq:gp_mean} and \eqref{eq:gp_cov} become
\begin{align}
    m_0(\Btheta_{\bu}) &= \underbrace{\Bk_{\bu 0} \BK_0^{-1} \By_0}_{\eqdef \cm_0(\Btheta_{\bu})} 
    + \BR_0\T(\Btheta_{\bu}) \bar{\Bgamma}_0, \label{eq:gp_mean2} \\ 
    %
    c_0(\Btheta_{\bu},\Btheta_{\star}) &= \underbrace{\Bk_{{\bu}\star} - \Bk_{{\bu}0} \BK_0^{-1} \Bk_{0\star}}_{\eqdef \cn_0(\Btheta_{\bu},\Btheta_{\star})} 
    + \BR_0\T(\Btheta_{\bu})[\underbrace{\BB^{-1} + \BH_0\BK_0^{-1}\BH_0\T}_{\eqdef\BW_0 }]^{-1} \BR_0(\Btheta_{\star}). \label{eq:gp_cov2}
\end{align}
%
We also define $\BLambda^* = \diag(\sigma_n^2(\Btheta^*_{1}),\ldots,\sigma_n^2(\Btheta^*_{b}))$. 

Given the full training data 
$D_{0*} = D_0\cup D^* = \{(y_{0i},\Btheta_{0i})\}_{i=1}^t\cup\{(y^*_{i},\Btheta^*_{i})\}_{i=1}^b$, using Lemma \ref{lemma:0} analogously as in the proof of Proposition 3.2 in \citet{Jarvenpaa2018_acq} but acknowledging that $\Btheta^*$ contains $b$ points and $c_0(\Btheta^*)$ is thus a $b\times b$ matrix, one obtains the following GP formulas when the GP prior mean function is zero
\begin{align}
\cm_{0*}(\Btheta_{\bu}) &= \cm_0(\Btheta_{\bu}) + \cn_0(\Btheta_{\bu},\Btheta^*)[\cn_0(\Btheta^*) + \BLambda^*]^{-1}(\By^* - \cm_0(\Btheta^*)), \label{eq:gp_mean_zero} \\
\cs_{0*}^2(\Btheta_{\bu}) &= \cs_{0}^2(\Btheta_{\bu}) - \cn_0(\Btheta_{\bu},\Btheta^*)[\cn_0(\Btheta^*) + \BLambda^*]^{-1}\cn_0(\Btheta^*,\Btheta_{\bu}),   \label{eq:gp_var_zero}
\end{align}
where we have denoted $\cs_{0*}^2(\Btheta_{\bu}) \eqdef \cn_{0*} (\Btheta_{\bu},\Btheta_{\bu})$ similarly as before. 


It remains to handle the extra terms in \eqref{eq:gp_mean2} and \eqref{eq:gp_cov2} which are due to the non-zero GP prior mean function assumption. We first compute using Lemma \ref{lemma:0} that 
\begin{equation}\begin{aligned}
\BR_{0*}(\Btheta_{\bu}) 
&= \BH_{\bu} - \begin{bmatrix}
\BH_0 & \BH_*
\end{bmatrix}\!\begin{bmatrix}
\BK_0 & \Bk_{0*} \\ \Bk_{*0} & \BK_*
\end{bmatrix}^{-1}\!\begin{bmatrix}
\Bk_{0{\bu}} \\ \Bk_{*{\bu}}
\end{bmatrix} \\
&= \BH_{\bu} - \BH_0\BK_0^{-1}\Bk_{0{\bu}} - (\BH_*-\BH_0\BK_0^{-1}\Bk_{0*})[\BK_*-\Bk_{*0}\BK_0^{-1}\Bk_{0*}]^{-1}(\Bk_{*{\bu}}-\Bk_{*0}\BK_0^{-1}\Bk_{0{\bu}}) \\
&= \BR_0(\Btheta_{\bu}) - \BR_0(\Btheta^*)[\cn_0(\Btheta^*) + \BLambda^*]^{-1}\cn_0(\Btheta^*,\Btheta_{\bu}). \label{eq:R01}
\end{aligned}\end{equation}
A similar computation shows that
\begin{align}
\BH_{0*}\BK_{0*}\By_{0*} + \BB^{-1}\Bb = \BH_0\BK_0^{-1}\By_0 + \BB^{-1}\Bb + \BR_0(\Btheta^*)[\cn_0(\Btheta^*) + \BLambda^*]^{-1}(\By^*-\cm_0(\Btheta^*)). \label{eq:HKybB}
\end{align}
Similarly we obtain also the formula
\begin{align}
\BW_{0*} &= \BB^{-1} + \BH_{0*}\BK_{0*}^{-1}\BH_{0*}\T = \BW_0 + \BR_0(\Btheta^*)[\cn(\Btheta^*) + \BLambda^*]^{-1}\BR_0\T(\Btheta^*)
\end{align}
from which we further obtain by using the matrix inversion lemma that
\begin{align}
\BW_{0*}^{-1} = \BW_0^{-1} - \BW_0^{-1}\BR_0(\Btheta^*)[c_0(\Btheta^*) + \BLambda^*]^{-1}\BR_0\T(\Btheta^*)\BW_0^{-1}. \label{eq:W01inv}
\end{align}
Using \eqref{eq:R01}, \eqref{eq:W01inv} and \eqref{eq:gp_cov2}, as well as some straightforward manipulations, we obtain the formulas
\begin{align}
\BR_{0*}\T(\Btheta_{\bu})\BW_{0*}^{-1} 
&= \BR_0\T(\Btheta_{\bu})\BW_0^{-1} - c_0(\Btheta_{\bu},\Btheta^*)[c_0(\Btheta^*) + \BLambda^*]^{-1}\BR_0\T(\Btheta^*)\BW_0^{-1}, \label{eq:RinvW} \\
\begin{split}
\BR_{0*}\T(\Btheta_{\bu})\BW_{0*}^{-1}\BR_{0*}(\Btheta_{\bu}) 
&= \BR_0\T(\Btheta_{\bu})\BW_0^{-1}\BR_0(\Btheta_{\bu}) + \cn_0(\Btheta_{\bu},\Btheta^*)[\cn_0(\Btheta^*) + \BLambda^*]^{-1}\cn_0(\Btheta^*,\Btheta_{\bu}) \\
&\myquad- c_0(\Btheta_{\bu},\Btheta^*)[c_0(\Btheta^*) + \BLambda^*]^{-1}c_0(\Btheta^*,\Btheta_{\bu}). 
\end{split} \label{eq:RinvWR}
\end{align}

Putting the results in \eqref{eq:gp_mean_zero}, \eqref{eq:RinvW} and \eqref{eq:HKybB} together and after some additional straightforward simplifications, we see that
\begin{equation}\begin{aligned}
m_{0*}(\Btheta_{\bu}) &= \cm_{0*}(\Btheta_{\bu}) + \BR_{0*}\T(\Btheta_{\bu})\bar{\Bgamma}_{0*} \\
&= m_{0}(\Btheta_{\bu}) + c_0(\Btheta_{\bu},\Btheta^*)[c_0(\Btheta^*) + \BLambda^*]^{-1}(\By^* - m_0(\Btheta^*)). \label{eq:m01eq}
\end{aligned}\end{equation}
By the assumption $\By^*\cond\Btheta^*,D_0 \sim \Normal(m_0(\Btheta^*),c_0(\Btheta^*,\Btheta^*) + \BLambda^*)$ and using \eqref{eq:m01eq} we see that $m_{0*}(\Btheta_{\bu})\cond\Btheta^*,D_0 \sim \Normal(m_{0}(\Btheta_{\bu}),c_0(\Btheta_{\bu},\Btheta^*)[c_0(\Btheta^*) + \BLambda^*]^{-1}c_0(\Btheta^*,\Btheta_{\bu}))$. Thus \eqref{eq:gp_mean_lookahead} holds. 


The variance formula now follows similarly. Using \eqref{eq:gp_var_zero} and \eqref{eq:RinvWR} we obtain
\begin{equation}\begin{aligned}
s_{0*}^2(\Btheta_{\bu}) &= \cs_{0*}^2(\Btheta_{\bu}) + \BR_{0*}\T(\Btheta_{\bu})\BW_{0*}^{-1}\BR_{0*}(\Btheta_{\bu}) \\
&= s_0^2(\Btheta_{\bu}) - c_0(\Btheta_{\bu},\Btheta^*)[c_0(\Btheta^*) + \BLambda^*]^{-1}c_0(\Btheta^*,\Btheta_{\bu}),
\end{aligned}\end{equation}
from which the claim follows.
\end{proof}

\begin{proof}[Proof of Lemma \ref{lemma:2}]
(i) If $\BP$ is a permutation matrix that changes the order of the columns of $\Btheta^*$, then it is easy to see that
\begin{align}
\Deltav_{\xt}(\Btheta;\Btheta^*\BP)
&= c_{\xt}(\Btheta,\Btheta^*)\BP\T[\BP c_{\xt}(\Btheta^*,\Btheta^*)\BP\T + \BP\diag(\sigma_n^2(\Btheta_1^*),\ldots,\sigma_n^2(\Btheta_b^*))\BP\T]^{-1}\BP c_{\xt}(\Btheta^*,\Btheta) \\
&= c_{\xt}(\Btheta,\Btheta^*)\BP\T \BP[c_{\xt}(\Btheta^*,\Btheta^*) + \diag(\sigma_n^2(\Btheta_1^*),\ldots,\sigma_n^2(\Btheta_b^*))]^{-1}\BP\T \BP c_{\xt}(\Btheta^*,\Btheta) \\
&= c_{\xt}(\Btheta,\Btheta^*)[c_{\xt}(\Btheta^*,\Btheta^*) + \diag(\sigma_n^2(\Btheta_1^*),\ldots,\sigma_n^2(\Btheta_b^*))]^{-1}c_{\xt}(\Btheta^*,\Btheta) \\
&= \Deltav_{\xt}(\Btheta;\Btheta^*).
\end{align}
%
%
%
%
(ii) This claim follows from (iv) since the rightmost term of \eqref{eq:batch_shortcut} is clearly non-negative. \\
(iii) 
Straightforward computations show that
\begin{align}
\Deltav_{\xt}(\Btheta;\Btheta^*) &= 
\begin{bmatrix}
c_{\xt}(\Btheta_1^*,\Btheta) \\ c_{\xt}(\Btheta_2^*,\Btheta)
\end{bmatrix}\T\!\begin{bmatrix}
c_{\xt}(\Btheta_1^*)+\sigma_n^2(\Btheta_1^*) & c_{\xt}(\Btheta_1^*,\Btheta_2^*) \\ c_{\xt}(\Btheta_1^*,\Btheta_2^*) & c_{\xt}(\Btheta_2^*)+\sigma_n^2(\Btheta_2^*)
\end{bmatrix}^{-1}\!\begin{bmatrix}
c_{\xt}(\Btheta_1^*,\Btheta) \\ c_{\xt}(\Btheta_2^*,\Btheta)
\end{bmatrix} \label{eq:iiifirstline} \\
%
&= \frac{1}{\bar{s}^2_{\xt}(\Btheta_1^*)\bar{s}^2_{\xt}(\Btheta_2^*) - c^2_{\xt}(\Btheta_1^*,\Btheta_2^*)}
\begin{bmatrix}
c_{\xt}(\Btheta_1^*,\Btheta) \\ c_{\xt}(\Btheta_2^*,\Btheta)
\end{bmatrix}\T\!\begin{bmatrix}
\bar{s}^2_{\xt}(\Btheta_2^*) & -c_{\xt}(\Btheta_1^*,\Btheta_2^*) \\ -c_{\xt}(\Btheta_1^*,\Btheta_2^*) & \bar{s}^2_{\xt}(\Btheta_1^*)
\end{bmatrix}\!\begin{bmatrix}
c_{\xt}(\Btheta_1^*,\Btheta) \\ c_{\xt}(\Btheta_2^*,\Btheta)
\end{bmatrix} \\
%
\begin{split}
&= \Deltav_{\xt}(\Btheta;\Btheta_1^*) + \Deltav_{\xt}(\Btheta;\Btheta_2^*) 
- \Deltav_{\xt}(\Btheta;\Btheta_1^*) - \Deltav_{\xt}(\Btheta;\Btheta_2^*) \\
%
&\myquad+ \frac{c^2_{\xt}(\Btheta,\Btheta_1^*)\bar{s}^2_{\xt}(\Btheta_2^*) + c^2_{\xt}(\Btheta,\Btheta_2^*)\bar{s}^2_{\xt}(\Btheta_1^*) - 2c_{\xt}(\Btheta,\Btheta_1^*)c_{\xt}(\Btheta,\Btheta_2^*)c_{\xt}(\Btheta_1^*,\Btheta_2^*)}
%
{\bar{s}^2_{\xt}(\Btheta_1^*)\bar{s}^2_{\xt}(\Btheta_2^*) - c^2_{\xt}(\Btheta_1^*,\Btheta_2^*)}, 
\end{split} \\
&= \Deltav_{\xt}(\Btheta;\Btheta_1^*) + \Deltav_{\xt}(\Btheta;\Btheta_2^*) 
+ \frac{c^2_{\xt}(\Btheta_1^*,\Btheta_2^*)c^2_{\xt}(\Btheta,\Btheta_1^*)\bar{s}^2_{\xt}(\Btheta_2^*) + c^2_{\xt}(\Btheta_1^*,\Btheta_2^*)c^2_{\xt}(\Btheta,\Btheta_2^*)\bar{s}^2_{\xt}(\Btheta_1^*)} 
%
{\bar{s}^4_{\xt}(\Btheta_1^*)\bar{s}^4_{\xt}(\Btheta_2^*) - \bar{s}^2_{\xt}(\Btheta_1^*)\bar{s}^2_{\xt}(\Btheta_2^*)c^2_{\xt}(\Btheta_1^*,\Btheta_2^*)} \nonumber \\
%
&\myquad- 2\frac{c_{\xt}(\Btheta,\Btheta_1^*)c_{\xt}(\Btheta,\Btheta_2^*)c_{\xt}(\Btheta_1^*,\Btheta_2^*)\bar{s}^2_{\xt}(\Btheta_1^*)\bar{s}^2_{\xt}(\Btheta_2^*)}
{\bar{s}^4_{\xt}(\Btheta_1^*)\bar{s}^4_{\xt}(\Btheta_2^*) - \bar{s}^2_{\xt}(\Btheta_1^*)\bar{s}^2_{\xt}(\Btheta_2^*)c^2_{\xt}(\Btheta_1^*,\Btheta_2^*)} \\
&= \Deltav_{\xt}(\Btheta;\Btheta_1^*) + \Deltav_{\xt}(\Btheta;\Btheta_2^*)  + r_{\xt}(\Btheta;\Btheta^*_1,\Btheta^*_2). 
\end{align}
(iv) The result follows immediately by applying Lemma \ref{lemma:0} to an equation corresponding to \eqref{eq:iiifirstline} in the proof of (iii) but where one has the matrix $\bar{\BS}_A$ in place of the scalar $c_{\xt}(\Btheta_1^*) + \sigma_n^2(\Btheta_1^*)$.
\end{proof}

\begin{proof}[Proof of Proposition \ref{prop:var}]
We compute
\begin{align}
%
L_{\xt}^{\text{v}}(\Btheta^*)
&= \mean_{\By^*\cond \Btheta^*, D_{\xt}} \int_{\Theta} \pi^2(\Btheta)\e^{2m_{\xtb}(\Btheta;\By^*,\Btheta^*) + s_{\xtb}^2(\Btheta;\Btheta^*)}\left(\e^{s_{\xtb}^2(\Btheta;\Btheta^*)} - 1\right) \ud\Btheta \\
%
&= \int_{\Theta} \pi^2(\Btheta) \mean_{\By^*\cond \Btheta^*, D_{\xt}} \left(\e^{2m_{\xtb}(\Btheta;\By^*,\Btheta^*) + s_{\xtb}^2(\Btheta;\Btheta^*)}\left(\e^{s_{\xtb}^2(\Btheta;\Btheta^*)} - 1\right)\right) \ud\Btheta \\
&= \int_{\Theta} \pi^2(\Btheta) 
\underbrace{\mean_{m_{\xtb}(\Btheta;\Btheta^*)\cond \Btheta^*, D_{\xt}} \left(\e^{2m_{\xtb}(\Btheta;\Btheta^*)}\right)}_{=\e^{2m_{\xt}(\Btheta) + 2\Deltav_{\xt}(\Btheta;\Btheta^*)}}
\e^{s_{\xtb}^2(\Btheta;\Btheta^*)}\left(\e^{s_{\xtb}^2(\Btheta;\Btheta^*)} - 1\right) \ud\Btheta \\
&= \int_{\Theta} \pi^2(\Btheta) \e^{2m_{\xt}(\Btheta) + s^2_{\xt}(\Btheta)}\left(\e^{s^2_{\xt}(\Btheta)} - \e^{\Deltav_{\xt}(\Btheta;\Btheta^*)}\right) \ud\Btheta,
\end{align}
where on the second line we have used Tonelli theorem to change the order of expectation and integration, on the third line we have used Lemma \ref{lemma:1}, and the expectation on the third line is computed using the moment-generating function of the Gaussian distribution. 
\end{proof}

\begin{proof}[Proof of Proposition \ref{prop:iqr}]
Using Lemma \ref{lemma:1}, we see that the pointwise median in the integrand of \eqref{eq:miiqr_prop} is computed as 
$\med_{\By^*\cond \Btheta^*, D_{\xt}} \left(\e^{m_{\xtb}(\Btheta;\By^*, \Btheta^*)}\right)
=
\med_{m_{\xtb}(\Btheta;\Btheta^*)\cond \Btheta^*, D_{\xt}} \left(\e^{m_{\xtb}(\Btheta;\Btheta^*)}\right) = \e^{m_{\xt}(\Btheta)}$ from which the final result follows.
\end{proof}
%

\section{Analysis of the greedy optimisation of design criteria} \label{app:analysis_submodularity}

\subsection{On the monotonicity of design criteria} \label{sec:monotonicity}

We show that \eiv{} and \miiqr{} design criteria are non-increasing functions of the batch size $b$. We also discuss why this does not generally hold for expected integrated IQR (abbreviated \eiiqr{}) which further justifies our choice of \miiqr{} over \eiiqr{}. 

Suppose that $\Btheta_A^*\subseteq \Btheta_B^*$ where we allow $\Btheta_A^*$ to be an empty multiset so that $\Deltav_{\xt}(\Btheta;\emptyset) = 0$. Now using Lemma \ref{lemma:2}, we see that \eiv{} is non-increasing because
\begin{align}
L_{\xt}^{\text{v}}(\Btheta_A^*) - L_{\xt}^{\text{v}}(\Btheta_B^*)
= \int_{\Theta} \pi^2(\Btheta) \e^{2m_{\xt}(\Btheta) + s_{\xt}^2(\Btheta)} \underbrace{\left[ \e^{\Deltav_{\xt}(\Btheta;\Btheta^*_B)} - \e^{\Deltav_{\xt}(\Btheta;\Btheta^*_A)}\right]}_{\geq 0}  \ud \Btheta \geq 0.
\end{align}
Similarly, using the fact that $z\mapsto\sinh(z)$ is an increasing function and recalling that $u=\Phi^{-1}(p_u)>0$, we see that \miiqr{} is non-increasing:
\begin{equation}\begin{aligned}
&\tilde{L}_{\xt}^{\text{IQR}}(\Btheta_A^*) - \tilde{L}_{\xt}^{\text{IQR}}(\Btheta_B^*) 
%
= 2\int_{\Theta} \pi(\Btheta) \e^{m_{\xt}(\Btheta)} \underbrace{\left[ \sinh(us_{\xtb_A}(\Btheta;\Btheta^*_A)) - \sinh(us_{\xtb_B}(\Btheta;\Btheta^*_B))\right]}_{\geq 0}  \ud \Btheta \geq 0.
\end{aligned}\end{equation}
%

We next analyse the \eiiqr{} strategy. 
The design criterion for \eiiqr{}, denoted $L_{\xt}^{\text{IQR,e}}$, is given by 
\begin{align}
L_{\xt}^{\text{IQR,e}}(\Btheta^*) = 2\int_{\Theta} \pi(\Btheta) \e^{m_{\xt}(\Btheta) + \half \Deltav_{\xt}(\Btheta;\Btheta^*)} \sinh(us_{\xtb}(\Btheta;\Btheta^*)) \ud\Btheta.
\end{align}
The proof of this fact is analogous to that of the Proposition \ref{prop:var} and details are thus omitted. As compared to \miiqr{}, the extra term $ \Deltav_{\xt}(\Btheta;\Btheta^*)/2$ appears to the integrand so that $\tilde{L}_{\xt}^{\text{IQR}}(\Btheta^*) \leq L_{\xt}^{\text{IQR,e}}(\Btheta^*)$ holds for all $\Btheta^*$. 

We now briefly analyse \eiiqr{} which is not non-increasing for $b$ in general. Presenting an explicit counterexample is not straightforward so we only heuristically justify why the integrand of \eqref{eq:eiiqr2} can be negative. It is enough to consider the special case where $\Btheta_A^*=\emptyset$. 
We obtain 
\begin{equation}\begin{aligned}
L_{\xt}^{\text{IQR,e}}(\Btheta_A^*) - L_{\xt}^{\text{IQR,e}}(\Btheta_B^*) 
= 2\int_{\Theta} \pi(\Btheta) \e^{m_{\xt}(\Btheta)} \underbrace{\left( \sinh(us_{\xt}(\Btheta)) - \e^{\half \Deltav_{\xt}(\Btheta;\Btheta^*_B)}\sinh(us_{\xtb_B}(\Btheta;\Btheta^*_B))\right)}_{\eqdef \omega(\Btheta;\Btheta^*_B)} \ud \Btheta.
\label{eq:eiiqr2}
\end{aligned}\end{equation}
Suppose $\sigma_n^2(\Btheta)>0$ for all $\Btheta\in\Theta$ and consider $\Btheta\in\Theta$ so that $s_{\xt}(\Btheta)>0$ and  $\pi(\Btheta)>0$. Then one can take $\Btheta^*\in\Theta$ so that there exists $c=c(\Btheta,\Btheta^*)\in(0,1)$ and $\Deltav_{\xt}(\Btheta;\Btheta^*) = c s^2_{\xt}(\Btheta)$. 
We can now write
\begin{align}
\omega(\Btheta;\Btheta^*) &= \sinh(us_{\xt}(\Btheta)) - \e^{\frac{c}{2}s^2_{\xt}(\Btheta)}\sinh(us_{\xt}(\Btheta)\sqrt{1-c}).
\end{align}
If $c$ can be kept fixed but $s_{\xt}(\Btheta) \rightarrow \infty$, then we can see that $\omega(\Btheta;\Btheta^*) \rightarrow -\infty$. That is, if $s_{\xt}(\Btheta)$ is chosen large enough, then $\omega(\Btheta;\Btheta^*) < 0$. It can be further reasoned by continuity that $\omega(\Btheta;\Btheta^*) < 0$ holds in set of nonzero measure around $\Btheta$. 
Simulations suggest that there indeed exists practical scenarios where some choices of $\Btheta^*$ make the expected loss to increase, that is, \eqref{eq:eiiqr2} is negative. In these cases \eiiqr{} algorithm can get stuck to ``safe'' regions of the parameter space because evaluations elsewhere would increase the expected loss. This behaviour produces poor posterior estimates in practice and implies possibly non-convergence of the inference algorithm.

\subsection{Proof of the greedy optimisation bound} \label{subsec_greedyboundproof}

\begin{proof}[Proof of Theorem \ref{thm:miiqr_submod}]
%
The main idea is to first show that $\tilde{U}^{\text{\iqr},a}_{\xt}$ in \eqref{eq:U_miiqr_a} is a weakly submodular set function (see e.g.~\citet{Krause2008,Krause2010} for definition) and then derive the bound using similar reasoning as in \citet{Nemhauser1978} and the observation that (weak) submodularity in their proof is required only for sets with size up to $2b$ instead for all sets. In the following we drop abbreviation ``$\text{\iqr},a$'' from $\tilde{U}^{\text{\iqr},a}_{\xt}$.
%
Let $\Btheta_A\subset\tilde{\Theta}, i \eqdef |\Btheta_A| \leq 2b$ and $\Btheta_j,\Btheta_k\in\tilde{\Theta}\setminus\Btheta_A$. We identify singletons with the corresponding element, that is, we write e.g.~$\Btheta_j$ for $\{\Btheta_j\}$. Then
\begin{align}
    &\tilde{U}_{\xt}(\Btheta_A \cup \Btheta_k) - \tilde{U}_{\xt}(\Btheta_A) - \tilde{U}_{\xt}(\Btheta_A \cup \Btheta_j \cup \Btheta_k) + \tilde{U}_{\xt}(\Btheta_A \cup \Btheta_j) \nonumber \\
    \begin{split}
    &= 2u^2 \int_{\Theta} \pi(\Btheta)\e^{m_{\xt}(\Btheta)} 
    [s^2_{t+i}(\Btheta;\Btheta_A) 
    + s^2_{t+i+2}(\Btheta;\Btheta_A\cup \Btheta_j \cup \Btheta_k) \\
    &\myquad- s^2_{t+i+1}(\Btheta;\Btheta_A\cup \Btheta_j)
    - s^2_{t+i+1}(\Btheta;\Btheta_A\cup \Btheta_k)] \ud \Btheta 
    \end{split}\\
    &= 2u^2 \int_{\Theta} \pi(\Btheta)\e^{m_{\xt}(\Btheta)} 
    [\Deltav_{t+i}(\Btheta;\Btheta_j) 
    + \Deltav_{t+i}(\Btheta;\Btheta_k) 
    - \Deltav_{t+i}(\Btheta; \Btheta_j \cup \Btheta_k)] \ud \Btheta \\
    &= -2u^2 \int_{\Theta} \pi(\Btheta)\e^{m_{\xt}(\Btheta)} r_{t+i}(\Btheta;\Btheta_j,\Btheta_k) \ud \Btheta \\
    &\geq -2u^2 \max_{\substack{\Btheta_A\subset\tilde{\Theta}, |\Btheta_A|=i \leq 2b \\ \Btheta_j,\Btheta_k\in\tilde{\Theta}}} \int_{\Theta} \pi(\Btheta)\e^{m_{\xt}(\Btheta)} r_{t+i}(\Btheta;\Btheta_j,\Btheta_k) \ud \Btheta \\
    &\geq -\max\{0,2u^2\epsilon_{\xt}'\} \\
    &= -\epsilon_{\xt}.
\end{align}
We have thus shown that
\begin{equation}
    \tilde{U}_{\xt}(\Btheta_A \cup \Btheta_k) - \tilde{U}_{\xt}(\Btheta_A) \geq 
    \tilde{U}_{\xt}(\Btheta_A \cup \Btheta_j \cup \Btheta_k) - \tilde{U}_{\xt}(\Btheta_A \cup \Btheta_j) - \epsilon_{\xt}.
\end{equation}
Now assume $|\Btheta_A| \leq b$ and consider set $\Btheta_B = \Btheta_A \cup \{\Btheta_1^B,\ldots,\Btheta_b^B\}\subset\tilde{\Theta}$. Assume also $\Btheta_k \notin \Btheta_B$. If we replace $\Btheta_A$ with $\Btheta_A\cup\{\Btheta_1^B,\ldots,\Btheta_{m-1}^B\}$ and set $\Btheta_j=\Btheta_m^B$ in the above formula, we obtain
\begin{align}
    \begin{split}
    &\tilde{U}_{\xt}(\Btheta_A \cup \{\Btheta_1^B,\ldots,\Btheta_{m-1}^B\} \cup \Btheta_k) - \tilde{U}_{\xt}(\Btheta_A \cup \{\Btheta_1^B,\ldots,\Btheta_{m-1}^B\}) \\
    &\myquad\geq 
    \tilde{U}_{\xt}(\Btheta_A \cup \{\Btheta_1^B,\ldots,\Btheta_{m}^B\} \cup \Btheta_k) - \tilde{U}_{\xt}(\Btheta_A \cup \{\Btheta_1^B,\ldots,\Btheta_{m}^B\}) - \epsilon_{\xt}.
    \end{split}
\end{align}
If we sum up all the above inequalities for $m=1,\ldots,b$, we obtain
\begin{equation}
    \tilde{U}_{\xt}(\Btheta_A \cup \Btheta_k) - \tilde{U}_{\xt}(\Btheta_A) \geq 
    \tilde{U}_{\xt}(\Btheta_B \cup \Btheta_k) - \tilde{U}_{\xt}(\Btheta_B) - b\epsilon_{\xt}. \label{eq:weak_sub_general}
\end{equation}

Next we proceed similarly as the proof of Proposition 11.1 in \citet{Bach2013} but use our weak submodularity condition in \eqref{eq:weak_sub_general}. $\tilde{U}_{\xt}$ is clearly bounded and non-decreasing and $\tilde{U}_{\xt}(\emptyset) = 0$. 
Let $\Btheta_j^G,j=1,\ldots,b$ be the $j$th element selected during the greedy algorithm, $\Btheta^G_{1:j} \eqdef \{\Btheta_1^G,\ldots,\Btheta_j^G\}$ and $\rho_j \eqdef \tilde{U}_{\xt}(\Btheta^G_{1:j}) - \tilde{U}_{\xt}(\Btheta^G_{1:j-1})$. For a given $j$, denote $\Btheta_O\!\setminus\!\Btheta^G_{1:j}=\{\Btheta_1,\ldots,\Btheta_m\}$ so that $m\leq b$. Then
\begin{align}
    &\tilde{U}_{\xt}(\Btheta_O) \nonumber \\
    &\leq \tilde{U}_{\xt}(\Btheta_O \cup \Btheta^G_{1:j-1}) \\
    &= \tilde{U}_{\xt}(\Btheta^G_{1:j-1}) + \sum_{i=1}^{m}[\tilde{U}_{\xt}(\Btheta^G_{1:j-1} \cup \{\Btheta_1,\ldots,\Btheta_i\}) - \tilde{U}_{\xt}(\Btheta^G_{1:j-1} \cup \{\Btheta_1,\ldots,\Btheta_{i-1}\})] \\
    &\leq \tilde{U}_{\xt}(\Btheta^G_{1:j-1}) + \sum_{i=1}^{m}[\tilde{U}_{\xt}(\Btheta^G_{1:j-1} \cup \Btheta_i) - \tilde{U}_{\xt}(\Btheta^G_{1:j-1}) + b\epsilon_{\xt}] \\
    &\leq \tilde{U}_{\xt}(\Btheta^G_{1:j-1}) + m\rho_j + mb\epsilon_{\xt} \\
    &\leq \sum_{i=1}^{j-1}\rho_i + b\rho_j + b^2\epsilon_{\xt}.
\end{align}
If we multiply both sides of the inequality $\sum_{i=1}^{j-1}\rho_i + b\rho_j \geq \tilde{U}_{\xt}(\Btheta_O)-b^2\epsilon_{\xt}$ by $(1-1/b)^{b-j}$ and add the inequalities up for $j=1,\ldots,b$, we obtain\footnote{This last part of our proof is analogous to that of theorem 7 in \citet{Krause2008} and we have corrected the mistake of having $(1-1/k)^{k-1}$ where it should read as $(1-1/k)^{k-i}$ (in their notation where $k$ corresponds our batch size $b$ and $i$ corresponds our index $j$).}
\begin{equation}
    \sum_{i=1}^b(1-1/b)^{b-i}\left( \sum_{j=1}^{i-1}\rho_j + b\rho_i \right) \geq (\tilde{U}_{\xt}(\Btheta_O) - b^2\epsilon_{\xt})\sum_{i=0}^{b-1}(1-1/b)^i.
\end{equation}
After some simplifications, we see that this inequality is equivalent with 
\begin{equation}
    \tilde{U}_{\xt}(\Btheta_G) = \sum_{i=1}^b \rho_i \geq (1-(1-1/b)^b)(\tilde{U}_{\xt}(\Btheta_O) - b^2\epsilon_{\xt}),
\end{equation}
from which the claim follows. 
\end{proof}

\section{Additional implementation details} \label{app:add_details}

In this section we briefly present additional implementation details of the algorithm in Section \ref{sec:num_details}. 
%
We start by pointing out that 
the initial design locations are drawn from the prior $\pi(\Btheta)$ but other techniques such as random or quasi-Monte Carlo designs over $\Theta$ are also possible.

Different estimators of log-SL can be used in our algorithm. 
In fact, the logarithm of \eqref{eq:SL} with plug-in ML estimators in \eqref{eq:SL_ML} produces a biased estimator of the logarithm of the Normal pdf so it might be reasonable to use an unbiased estimator instead. Such an estimator exists and has been used in \citet{Ong2018}. However, we nevertheless used the logarithm of \eqref{eq:SL} with plug-in ML estimates because the resulting estimator was found slightly more robust than the unbiased estimator in \citet{Ong2018} and because both estimators produced similar results in practice. While a systematic comparison of different log-SL estimators was not done, we expect the bias to be small with moderate $N$. Also, in practice, the Gaussianity assumption usually holds only approximately. 
As mentioned in the main text, there also exists an unbiased SL estimator. However, since we are modelling the logarithm of SL, using an unbiased estimator of SL is not advantageous.

To handle the unknown GP hyperparameters $\Bphi$, a plug-in approach is often used where $\Bphi$ is substituted with ML or MAP estimate, see e.g.~\citet{Rasmussen2006}. For fully Bayesian approach, one can use MCMC methods but this is expensive. Uncertainty in $\Bphi$ could be acknowledged also when computing the design criterion as discussed e.g.~in the Section 3.5 of \citet{Jarvenpaa2018_acq}. However, we used the plug-in approach with MAP estimate in our experiments and we re-estimated $\Bphi$ after each iteration as shown on line 7 of Algorithm \ref{alg:gp_sl_alg} using the $\texttt{gp\_optim}$ function of GPstuff 4.7 \citep{Vanhatalo2013}.

A relatively tuning-free adaptive MCMC method by \citet{Haario2006} is used for sampling from $\pi_q$ (and from the posterior estimate on line 25 of Algorithm \ref{alg:gp_sl_alg}). However, because the IS proposal can be multimodal, the sampler may get stuck to a local mode. To alleviate this, we use multiple chains and initialise the sampler at the point with the highest current loss over the training points. This way, even if sampling over the full range of $\Theta$ is not perfect, the loss measures uncertainty in regions where it is relatively large and, consequently, reasonable designs are obtained.  
%
Furthermore, we found that computing this integral very accurately in not necessary for obtaining good designs and ultimately converging to a good posterior approximation.

Several methods for the global optimisation of the design criterion have been used in literature: random search, multistart gradient-based methods, evolution strategies (such as CMA-ES) and partitioning based algorithm DIRECT. Here the optimisation is carried out by first using random search to roughly locate potential optima and then improving the best $10$ points found this way by initialising gradient-based algorithm (MATLAB function \texttt{fmincon}) at these points. Finally, the best point evaluated is reported as the solution. While systematic comparison between optimisers was not done, we observed that this approach produced satisfactory results with reasonable computation time.

We also precompute many quantities in the GP and design criterion formulas to speed-up the optimisation and sampling steps. For example, the Cholesky factor of the full data covariance matrix $\BK_{\xt}$ in \eqref{eq:gp_mean} and \eqref{eq:gp_cov} is precomputed and used for prediction at new locations $\Btheta$. For \eiv{} or \miiqr{}, all the quantities depending only on the integration points $\Btheta^{(j)}$ are precomputed so that only those terms that depend on candidate design points $\Btheta^*$ need to be re-evaluated during the optimisation. Furthermore, in the greedy optimisation, the Lemma \ref{lemma:2} (iv) is used to avoid always inverting the whole covariance matrix in \eqref{eq:gp_dvar} when only the last column $\Btheta_r^*$ of the matrix $\Btheta_{1:r}^*$ is varied. When computing the design criteria, we use logarithms and the logsumexp trick to avoid numerical under- and overflows which otherwise occur when exponentiating the GP mean and variance function values with high magnitude. 

While we consider a fixed budget of simulations $b_0 + \iter_{\text{max}}b$, the algorithm can be prematurely terminated when some suitable stopping criterion is met. For example, if the SL posterior estimate or the value of the design criterion has changed little during a fixed number of the most recent iterations, one could terminate the algorithm. Such stopping criteria have been used by \citet{Acerbi2018,Wang2018}. 

\section{Additional experiments and illustrations} \label{app:add_res}

\subsection{Normality of noisy log-SL evaluations} \label{subsec:logSL_normality}

Figure~\ref{fig:sl_gaussianity} shows the sampling distribution of the log-SL for six benchmark models. 

\begin{figure*}[ht]
\centering
\includegraphics[width=0.9\textwidth]{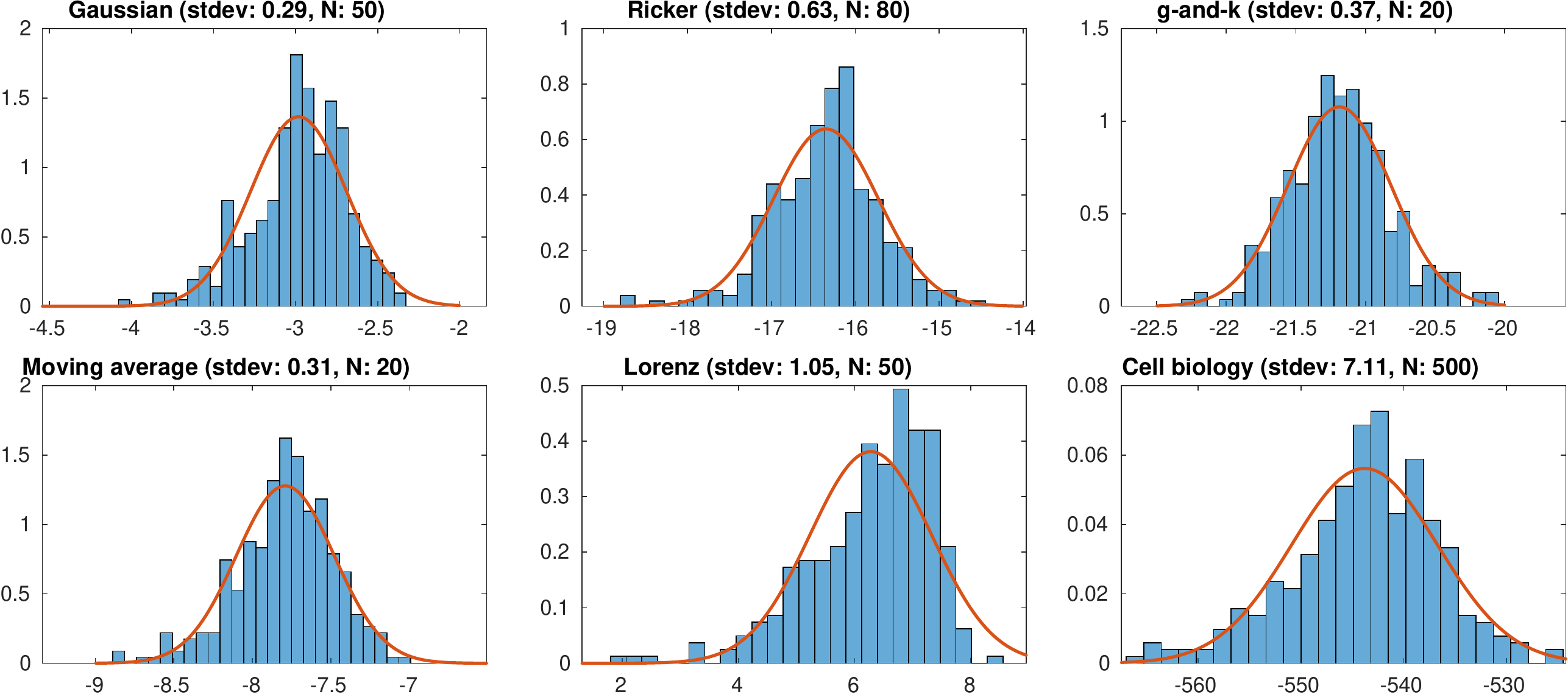}
\caption{Sampling distribution of log-SL for six benchmark models evaluated at their ``true'' parameter values. The densities are approximately Gaussian, although the distribution corresponding to the Lorenz model shows some evidence of skewness. The details of the Ricker, g-and-k and Lorenz model are given in Sections \ref{sec:real_example} and \ref{subsec:lorenz}, ``Gaussian'' is a simple 2D problem where the expectation is estimated, ``Moving average'' is the benchmark model used in \citet{Marin2012}, and the cell biology model is as in \citet{Price2018}.}
\label{fig:sl_gaussianity}
\end{figure*}

\subsection{Details of the toy densities} \label{subsec:app_2d}

The 2D toy densities are shown in Figure~\ref{fig:synth_demo}.
Their log-densities, denoted here as $f_{\textnormal{2D}}$, are defined explicitly so that the log-density for 
the 'Simple' model is obtained as $\lik_{\textnormal{2D}}(\Btheta)=-\Btheta\T \BS^{-1}_{\rho}\Btheta/2$ where $\rho=0.25$, 
for the 'Banana' model as $\lik_{\textnormal{2D}}(\Btheta)=-[\theta_1, \theta_2 + \theta_1^2 + 1] \BS^{-1}_{\rho}[\theta_1, \theta_2 + \theta_1^2 + 1]\T/2$ where $\rho=0.9$ and, finally, 
for the 'Bimodal' model as $\lik_{\textnormal{2D}}(\Btheta)=-[\theta_1, \theta_2^2-2] \BS^{-1}_{\rho}[\theta_1, \theta_2^2-2]\T/2$ where $\rho=0.5$. 
In all of these cases we have $(S_{\rho})_{11}=(S_{\rho})_{22}=1$ and $(S_{\rho})_{12}=(S_{\rho})_{21}=\rho$. 
%
The prior densities for 'Simple', 'Banana' and 'Bimodal' test cases are chosen as $\pi_{\textnormal{2D}}(\Btheta) = \Unif([-16,16]^2)$, $\pi_{\textnormal{2D}}(\Btheta) = \Unif([-6,6]\times[-20,2])$ and $\pi_{\textnormal{2D}}(\Btheta) = \Unif([-6,6]^2)$, respectively. 

The 6D log-densities, denoted here as $f_{\textnormal{6D}}$, are then constructed from the 2D log-densities so that $\lik_{\textnormal{6D}}(\Btheta) = \lik_{\textnormal{2D}}(\Btheta_{1:2}) + \lik_{\textnormal{2D}}(\Btheta_{3:4}) + \lik_{\textnormal{2D}}(\Btheta_{5:6})$. The corresponding priors are chosen as $\pi_{\textnormal{6D}}(\Btheta) = \pi_{\textnormal{2D}}(\Btheta_{1:2})\pi_{\textnormal{2D}}(\Btheta_{3:4})\pi_{\textnormal{2D}}(\Btheta_{5:6})$. That is, the priors for the 6D 'Simple', 'Banana' and 'Multimodal' densities are $\Unif([-16,16]^6), \, \Unif(\bigtimes_{i=1}^3([-6,6]\!\times\![-20,2]))$ and $\Unif([-6,6]^6)$, respectively. 

\begin{figure*}[!htbp]
\centering
\includegraphics[width=0.88\textwidth]{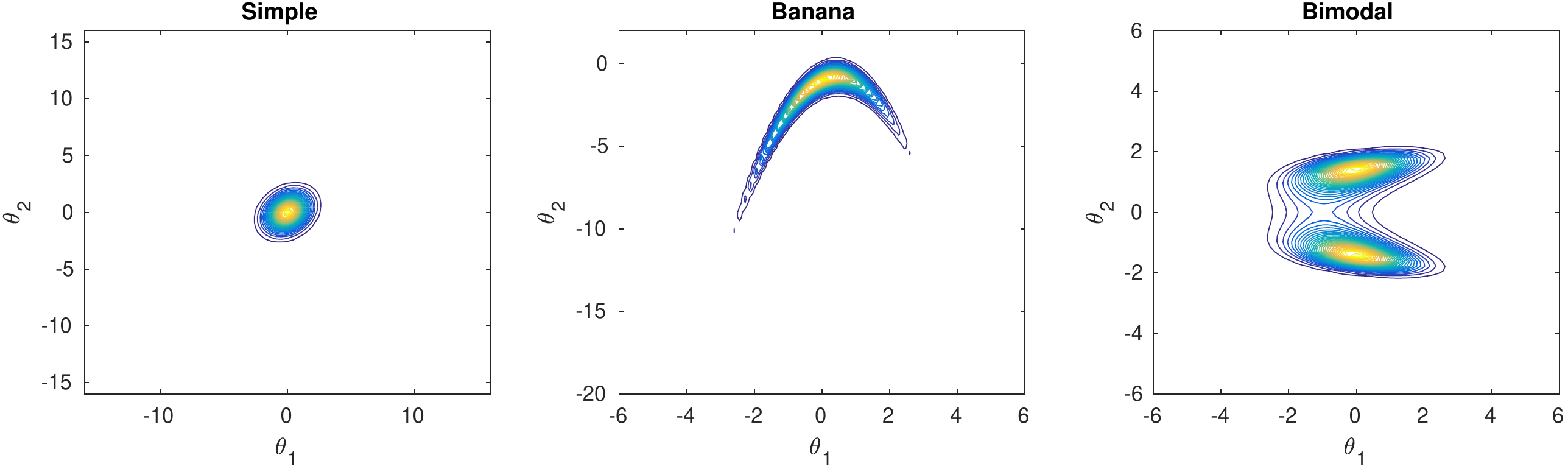}
\caption{Illustration of the 2D toy densities which coincide with the 2D marginals of $\Btheta_{1:2},\Btheta_{3:4}$ and $\Btheta_{5:6}$ of the corresponding 6D toy densities.}
\label{fig:synth_demo}
\end{figure*}

\subsection{Additional experiments and illustration with 2D toy densities} \label{subsec:expextratwod}

We first analyse how the noise level of the log-likelihood evaluations affects the estimation accuracy. We use our three 2D toy densities and corrupt their exact log-likelihoods with additive zero mean i.i.d.~Gaussian noise with variance $\sigma_n^2\in\{1^2,2^2,5^2\}$ to obtain ``noisy evaluations''. We use initial designs with $b_0=10$ points. The integrals in \eiv{} and \miiqr{} are here numerically computed using a $50\times 50$ grid. We use batch size $b=4$ for all batch methods and we compare both the joint and greedy batch methods. The results are shown in Figure \ref{fig:synth_res2d}. We see that the principled \eiv{} and \miiqr{} methods have fairly similar overall performance and they clearly outperform the heuristic \maxv{} and \maxiqr{} methods. 
Unsurprisingly, the noisy evaluations require more iterations. 

\begin{figure*}[!htb]
\centering
\includegraphics[width=0.99\textwidth]{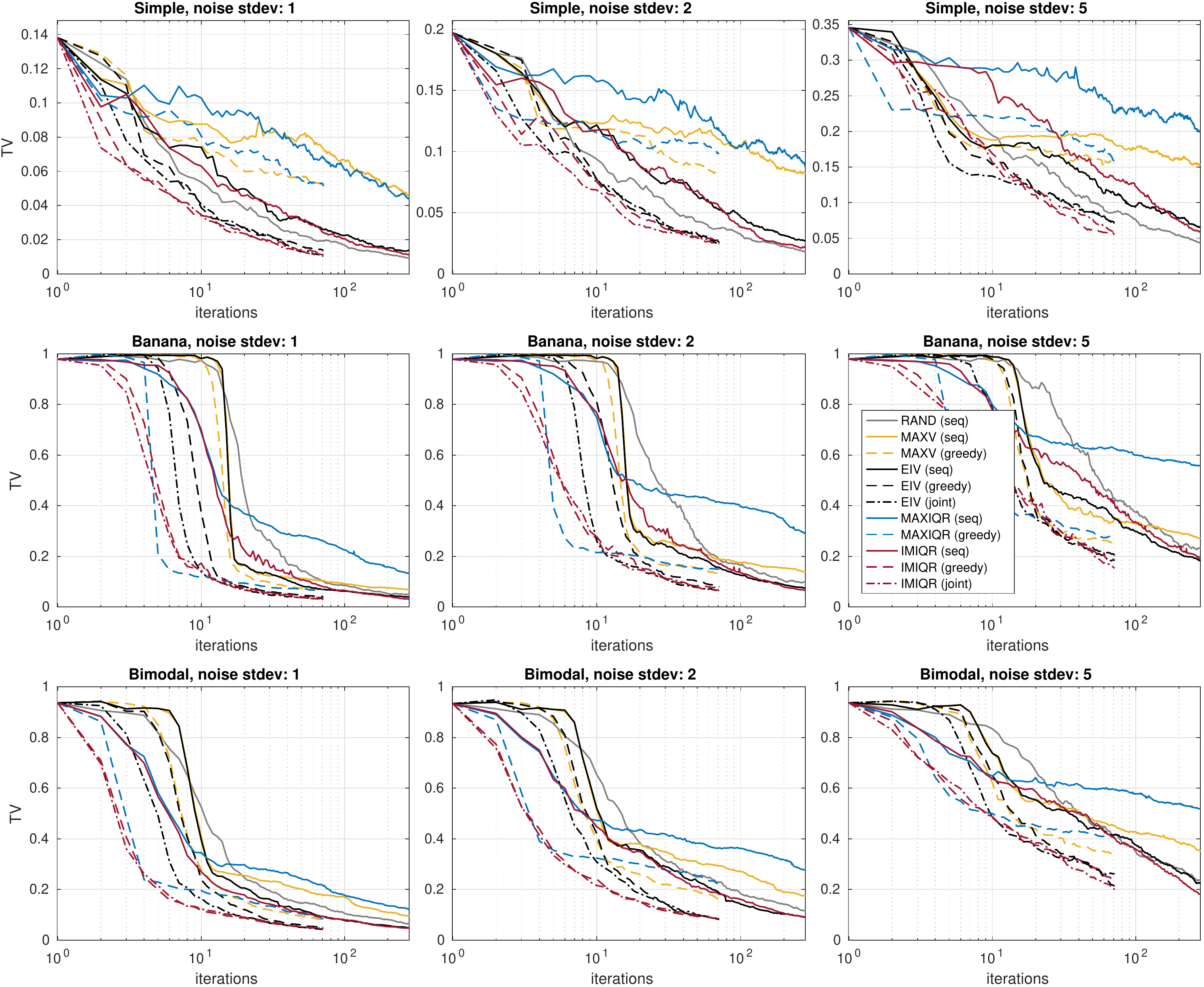}
\caption{Results for the 2D toy densities with various noise levels. The rows correspond to the test cases and the columns different noise levels of the log-likelihood evaluations. The lines show the average TV over $50$ repeated simulations. Note that x-axis is on log-scale and the maximum number of evaluations is here $t=290$. 
The batch size of all batch methods is $b=4$.
}
\label{fig:synth_res2d}
\end{figure*}

Figure~\ref{fig:acq_points_demo} shows the design locations of the 2D Banana example for \maxv{}, \eiv{} and \maxiqr{}.
Figures~\ref{fig:acq_points_demo2_part1} and \ref{fig:acq_points_demo2_part2} show the design locations for the 2D Bimodal example. The Bimodal example shows similar general observations as the Banana example. Figure~\ref{fig:acq_points_demo2_part1} shows that all \miiqr{} methods produce similar designs.

\begin{figure*}[htb]
\centering
\includegraphics[width=0.99\textwidth]{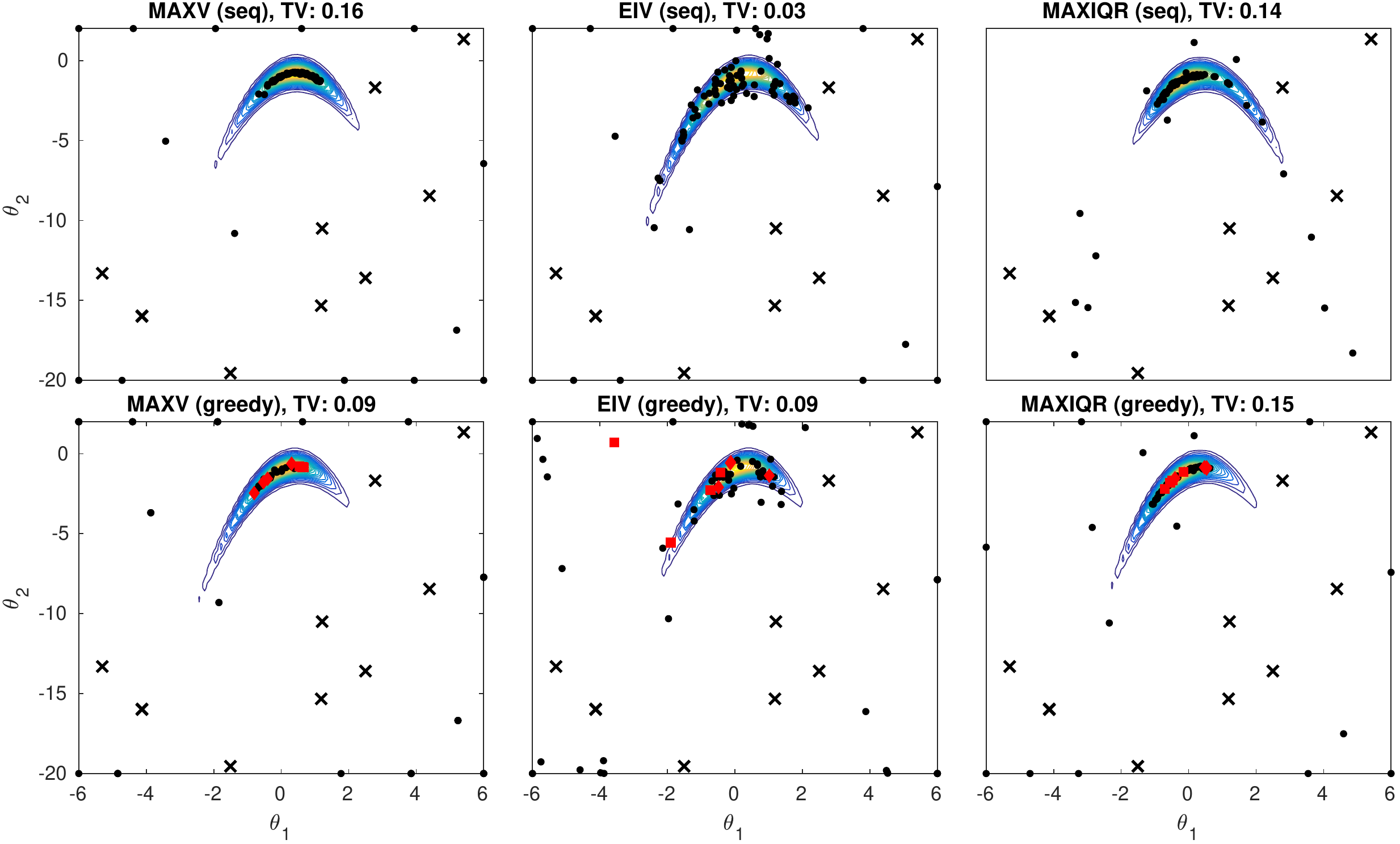}
\caption{Illustration of the designs of \maxv{} (left column), \eiv{} (center), and \maxiqr{} (right). The results are shown after $90$ noisy log-likelihood evaluations of the 2D Banana example with noise level $\sigma_n = 1$. The top row shows designs obtained with a sequential strategy ($b=1$), and the bottom row from a greedy batch-sequential strategy ($b=4$). The black crosses show the initial evaluations, black dots show obtained design points except for the last two batches, and the red squares and diamonds show the last two batches. 
}
\label{fig:acq_points_demo}
\end{figure*}

\begin{figure*}[htbp]
\centering
\includegraphics[width=0.99\textwidth]{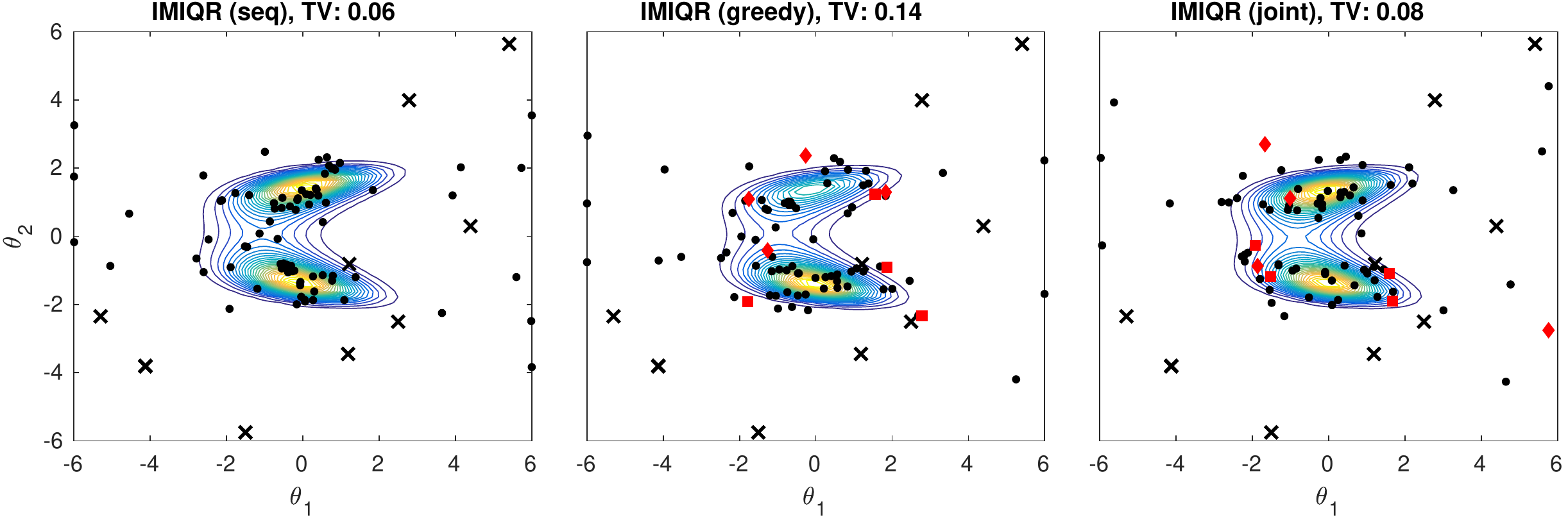}
\caption{Illustration of the design locations for the Bimodal example with \miiqr{} methods. See caption of Figure~\ref{fig:acq_points_demo} for details. }
\label{fig:acq_points_demo2_part1}
\end{figure*}

\begin{figure*}[htbp]
\centering
\includegraphics[width=0.99\textwidth]{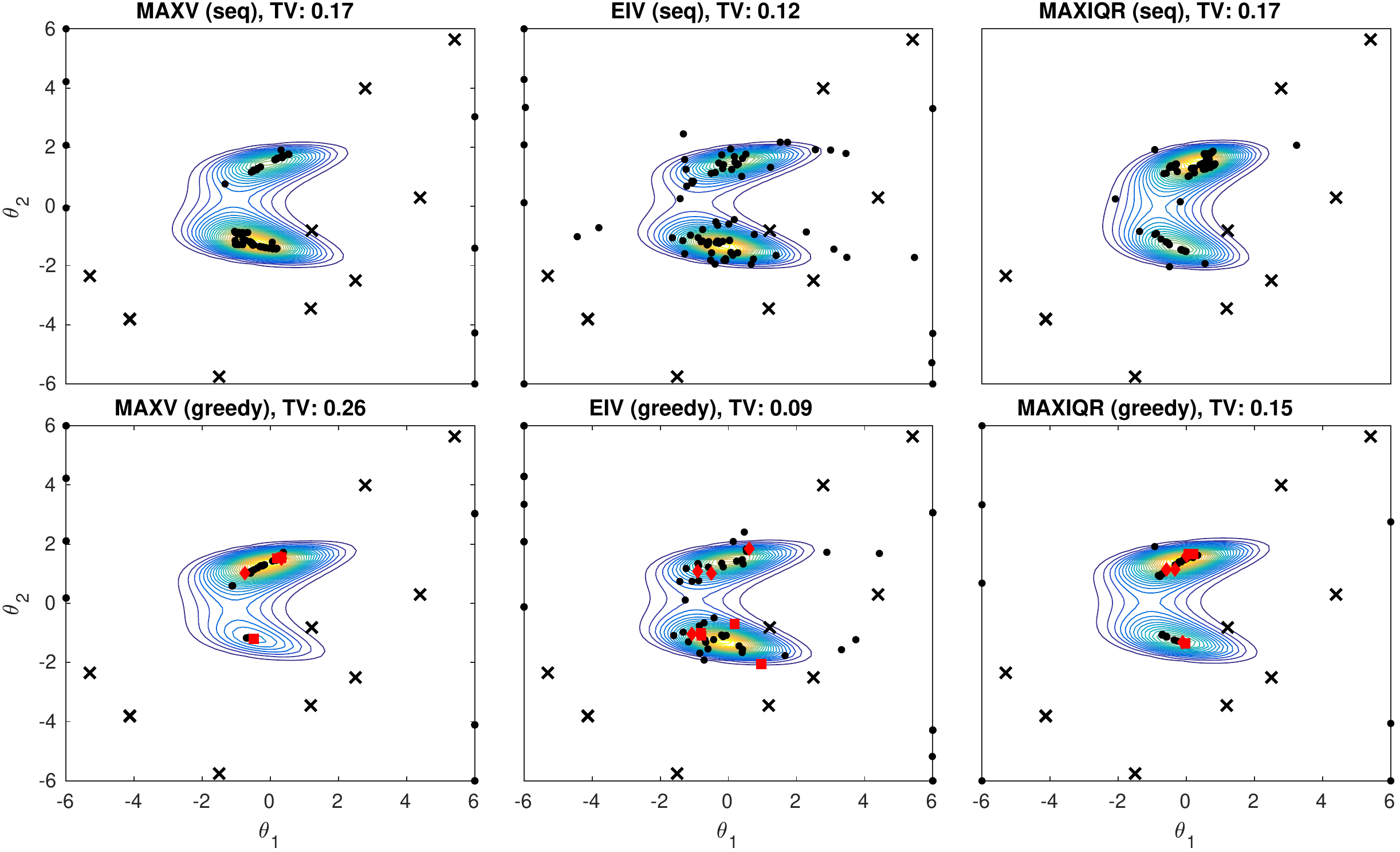}
\caption{Illustration of the design locations for the Bimodal example with \maxv{}, \eiv{} and \maxiqr{} methods. See caption of Figure~\ref{fig:acq_points_demo} for details. }
\label{fig:acq_points_demo2_part2}
\end{figure*}

We also investigated how the accuracy of the greedy batch \eiv{} and \miiqr{} methods scales as a function of the batch size using the three 2D toy densities. Figure~\ref{fig:synth_res_batch2d} shows these results.  Figure~\ref{fig:synth_res_batch2d_2} further shows the same experimental results as Figure~\ref{fig:synth_res_batch2d} but plotted as a function of the total evaluations. That is, the results obtained when the batch-sequential strategies are used as if they were sequential strategies so that the computations are done in a sequential manner. This demonstrates the penalty of not being able to use the unknown outputs of the pending simulations when they would in fact be available. The main observation is that the \miiqr{} greedy batch strategy produces batches that better mimic the sequential decisions than the corresponding \eiv{} batch strategy which shows as a faster convergence speed of \miiqr{}. 

\begin{figure*}[htbp]
\centering
\includegraphics[width=1\textwidth]{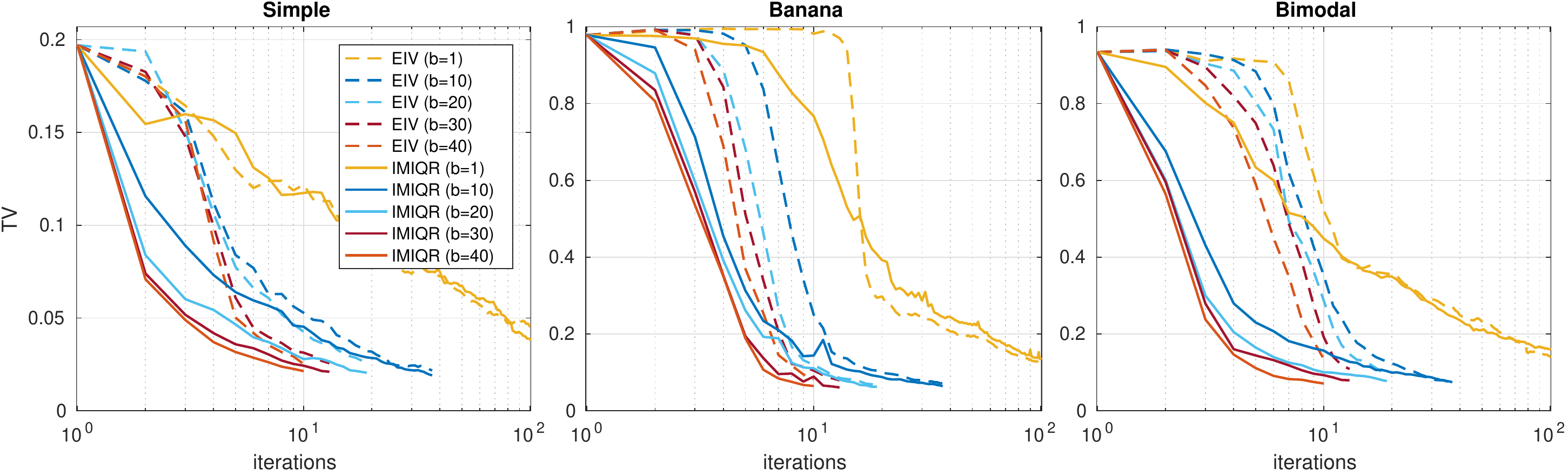}
\caption{Results with the greedy batch strategies with varying batch sizes $b$ for the 2D toy examples.}
\label{fig:synth_res_batch2d}
\end{figure*}

\begin{figure*}[htbp]
\centering
\includegraphics[width=1\textwidth]{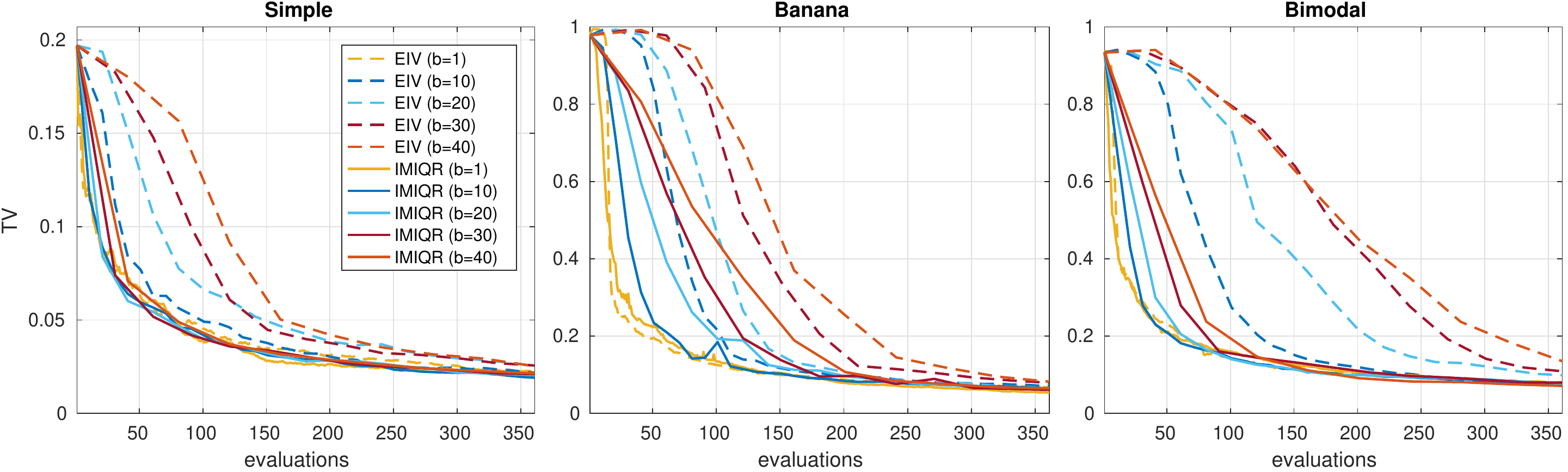}
\caption{Results with the greedy batch strategies with varying batch sizes $b$ for the 2D toy examples. The x-axis shows the total evaluations of the log-likelihood (and not the iterations in log-scale as in Figure~\ref{fig:synth_res_batch2d}).}
\label{fig:synth_res_batch2d_2}
\end{figure*}

\subsection{Parallellisation of the synthetic likelihood method with IMIQR} \label{subsec:N_parallellisation}

We demonstrate that it is useful to parallellise our algorithm in the SL case with respect to $b$. This justifies our experiments in Section \ref{sec:real_example}. We use the same set-up for Ricker and g-and-k models as in the main text and the set-up for Lorenz model is as in Section \ref{subsec:lorenz}. 
We consider a scenario where $1000$ computer cores are available
and we compare three different combinations of $N$ and $b$ that all use $1000$ simulations per batch. In addition, we consider a baseline where $N=100$ and $b=1$. We use \miiqr{} design criterion. 

The results in Figure~\ref{fig:N} show that $b=10$ gives the fastest convergence speed. This is not surprising since, intuitively, obtaining $10$ noisy log-likelihood evaluations gives more information on the shape of the posterior than a single, although less noisy, evaluation. Also, computing log-likelihood values accurately near the boundaries of the parameter space wastes computational resources because a noisy evaluation is often enough to effectively rule out such regions. 
A surprising aspect is that $N=100$ produces faster convergence than the corresponding runs with $N=1000$. The likely reason for this seemingly counterintuitive behaviour is that GP modelling becomes easier with the noisier evaluations which helps to faster locate the modal region. When $N=1000$, the \miiqr{} produced slightly more evaluations near the boundary areas as compared to $N=100$ leading to slower initial convergence speed. 
On the other hand, especially in the Lorenz case, the more accurate log-likelihood computations can be useful at the later iterations when the modal area of the posterior have been roughly located.

\begin{figure*}[htbp]
\centering
\includegraphics[width=0.99\textwidth]{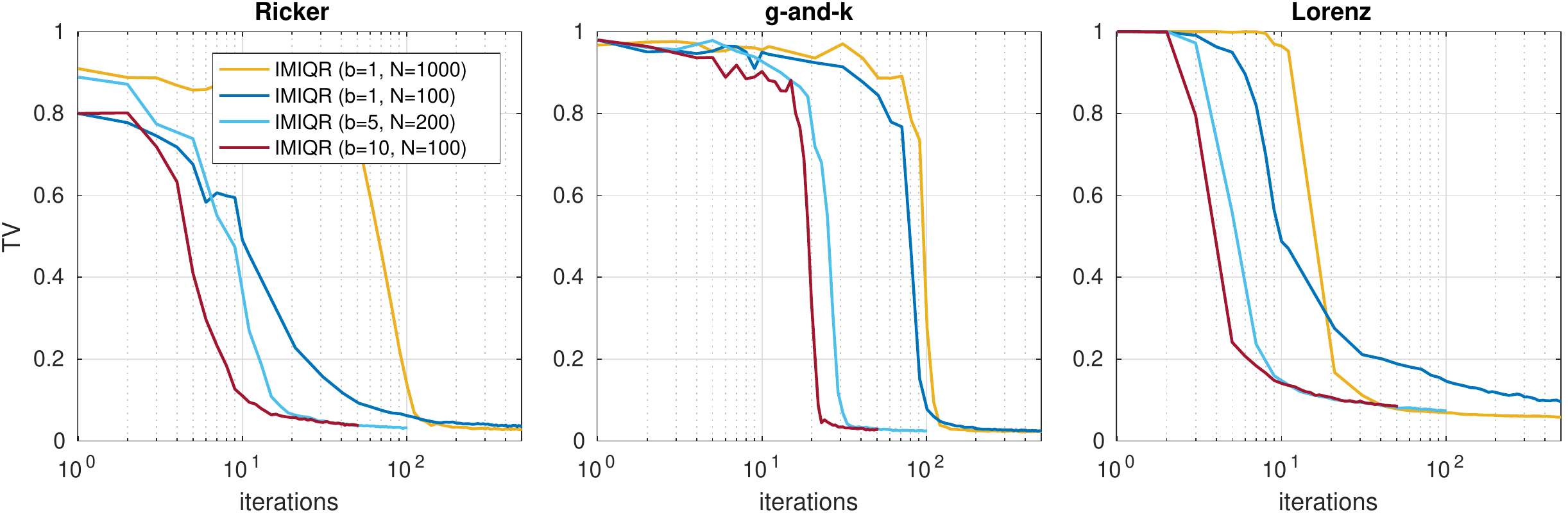}
\caption{The effect of the number of repeated simulations $N$ and the batch size $b$ on the convergence speed in the SL case. The details of the Lorenz model can be found in Section \ref{subsec:lorenz}.}
\label{fig:N}
\end{figure*}

\subsection{Experiments with Lorenz model} \label{subsec:lorenz}

As an additional real-world test case, we consider a modified version of the well-known Lorenz weather model. We briefly describe this model below, for more details see \citet{Thomas2018} and the references therein. In the model it is assumed that weather stations measure a high-dimensional time-series of slow weather variables $x_{k}^{(t)}$, $k=1,\ldots,40$, whose dynamics is described by a
coupled stochastic differential equation (SDE)
\begin{align}
\frac{\ud x_k^{(t)}}{\ud t} &= -x_{k-1}^{(t)}(x_{k-2}^{(t)} - x_{k+1}^{(t)}) - x_{k}^{(t)} + 10 - g(x_{k}^{(t)},\Btheta) + \eta_k^{(t)}, 
%
\label{eq:g_lorenz}
\end{align}
for $k=1,\ldots,40$ and where $g(x_{k}^{(t)},\Btheta) = \theta_1 + \theta_2 x_{k}^{(t)}$ and where $x_{k}^{(t)}$ are cyclic so that $x_{0}^{(t)} = x_{40}^{(t)}$ and $x_{-1}^{(t)} = x_{39}^{(t)}$. 
The initial states of the weather variables $x_{k}^{(0)}$, $k=1,\ldots,40$ are assumed known and the time interval $t\in[0,4]$ considered here corresponds to $20$ days. The function $g$ in \eqref{eq:g_lorenz} models the net effect of the fast weather variables on the observable slow weather variables $x_{k}^{(t)}$, $k=1,\ldots,40$ and $\eta_k^{(t)}$ is a stochastic forcing term describing the uncertainty due to the forcing of the fast weather variables. 
As in \citet{Thomas2018}, the time interval is discretised to $160$ equidistant intervals producing time step $\Delta t = 0.025$ and the SDEs are then solved using 4th order Runge-Kutta method. In this discretised setting, the forcing term is assumed to follow the first-order autoregressive process
\begin{equation}
\eta_k^{(t+\Delta t)} = \phi \eta_k^{(t)} + \sqrt{1-\phi^2}\epsilon^{(t)}
\end{equation}
for $k=1,\ldots,40$ and $t=0,\Delta t, 2\Delta t,\ldots,160\Delta t$, and where $\epsilon^{(t)}$ are i.i.d.~standard Gaussian, $\eta_k^{(0)} = \sqrt{1-\phi^2}\epsilon^{(0)}$ and $\phi=0.4$.

We need to compute the posterior distribution of the parameters $\Btheta=(\theta_1,\theta_2)$ given the slow weather variables $x_{k}^{(t)}$, $k=1,\ldots,40$ measured over $20$ days. 
We use $\Btheta \sim \Unif([0,5]\times[0,0.5])$ which is wider than in \citet{Thomas2018}, to make the inference task more challenging, and the true parameter to generate the observed data is $\Btheta_{\text{true}} = (2.0,0.1)$.
We use the six summary statistics suggested by \citet{Hakkarainen2012} and used by \citet{Thomas2018} to be in line with previous work although it was recently shown by \citet{Dinev2018} that learning them from data can improve the estimation accuracy. 
We use $b_0=10$ and an additional budget of $400$ SL evaluations with $N=100$. 

The results in Figure~\ref{fig:lorenz} again show that the best approximations are obtained with \miiqr{} and its greedy batch variant with $b=5$, which converges five times faster than the sequential \miiqr{}. 
This time \maxiqr{}, despite its exploitative behaviour, and its batch version with $b=5$, both work reasonably well although they produce slightly worse and more variable posterior approximations than \miiqr{}. \eiv{} and \maxv{} perform again very poorly because they mostly evaluate near the boundaries where the noise variance of log-likelihood is large although these evaluations are uninformative for estimating the likelihood in its modal area. 

Overall, the TV values with Lorenz model are slighty worse than in the corresponding synthetic 2D examples although we use $N=100$ so that $\sigma_n\lesssim 1$ in the modal area of the posterior. The probable reason is that the Gaussian assumption does not hold here as well as in other cases as suggested by Figure~\ref{fig:sl_gaussianity}. The third panel of Figure~\ref{fig:N} suggest that larger $N$ would lead to better approximation, likely because large $N$ makes the density of log-SL more peaked and also more Gaussian. 
Nevertheless, we conclude that the approximations obtained by \miiqr{} are reasonable and the convergence speed is fast since only a few hundred SL evaluations are needed. 

\begin{figure*}[htbp]
\centering
\includegraphics[width=0.99\textwidth]{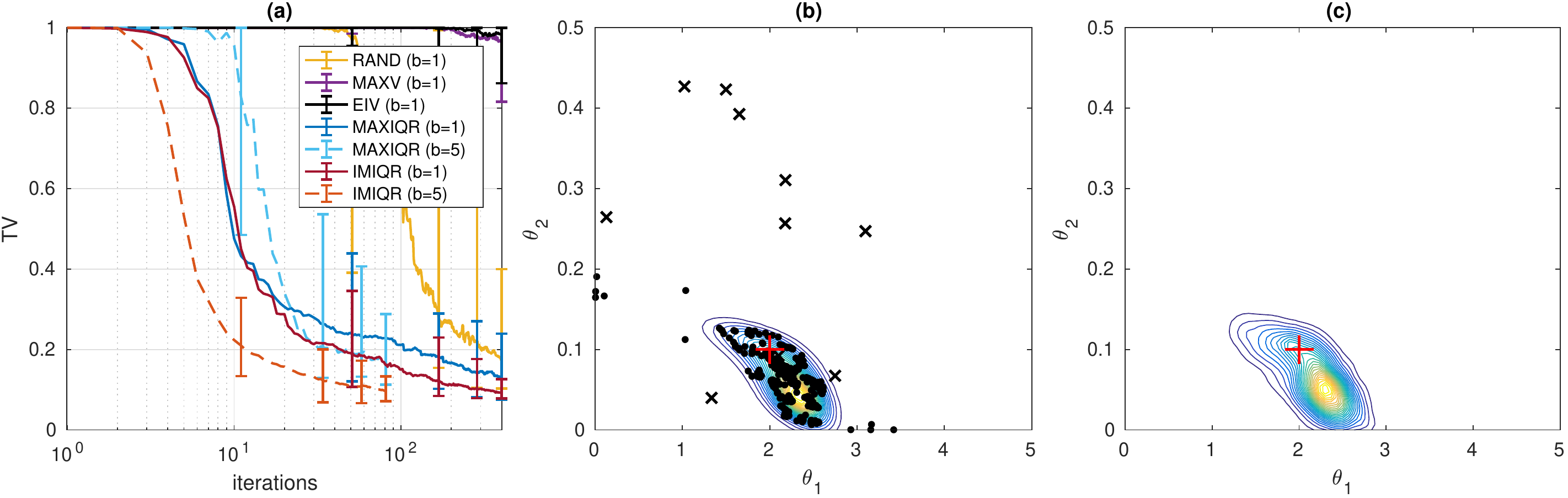}
\caption{Results for the Lorenz model. (a) Median TV and $90$\% variability over $50$ repeated runs of the algorithms. (b) A typical example of the estimated posterior density obtained using the greedy batch \miiqr{} strategy. The black crosses show the initial design and the dots the design points. (c) Exact SL posterior computed using SL-MCMC for comparison. The red plus sign shows the true parameter.}
\label{fig:lorenz}
\end{figure*}

\end{document}